\documentclass{article}
\usepackage{hyperref}
\usepackage{amsmath}
\usepackage{amssymb}
\usepackage{mathtools}
\usepackage{amsthm}
\usepackage{bm}
\usepackage{dsfont}
\usepackage{multirow}

\usepackage{microtype}
\usepackage{graphicx}
\usepackage{subcaption}
\usepackage{booktabs} % for professional tables

\usepackage[accepted]{icml2026}

\usepackage[capitalize,noabbrev]{cleveref}

\theoremstyle{plain}
\newtheorem{theorem}{Theorem}[section]
\newtheorem*{theorem*}{Theorem}
\newtheorem{proposition}[theorem]{Proposition}
\newtheorem{lemma}[theorem]{Lemma}
\newtheorem*{lemma*}{Lemma}
\newtheorem{corollary}[theorem]{Corollary}
\theoremstyle{definition}
\newtheorem{definition}[theorem]{Definition}

\newtheorem{remark}[theorem]{Remark}
\theoremstyle{remark}
\usepackage[textsize=tiny]{todonotes}

\icmltitlerunning{Approximation Error Upper and Lower Bounds  for H\"{o}lder Class with Transformers}

\newcommand{\abs}[1]{\left|#1\right|}
\newcommand{\norm}[1]{\left\lVert#1\right\rVert}

\usepackage{xurl}

\begin{document}
	
	\twocolumn[
	\icmltitle{Approximation Error Upper and Lower Bounds  for H\"{o}lder Class \\ with Transformers}
	
	% It is OKAY to include author information, even for blind submissions: the
	% style file will automatically remove it for you unless you've provided
	% the [accepted] option to the icml2026 package.
	
	% List of affiliations: The first argument should be a (short) identifier you
	% will use later to specify author affiliations Academic affiliations
	% should list Department, University, City, Region, Country Industry
	% affiliations should list Company, City, Region, Country
	
	% You can specify symbols, otherwise they are numbered in order. Ideally, you
	% should not use this facility. Affiliations will be numbered in order of
	% appearance and this is the preferred way.
	\icmlsetsymbol{equal}{*}
	
	\begin{icmlauthorlist}
		\icmlauthor{Xin He}{WHUMATH}
		\icmlauthor{Yuling Jiao}{WHUAI,HBLAB}
		\icmlauthor{Xiliang Lu}{WHUMATH,HBLAB}
		\icmlauthor{Jerry Zhijian Yang}{WHUMATH,HBLAB}
	\end{icmlauthorlist}
	
	\icmlaffiliation{WHUMATH}{School of Mathematics and Statistics, Wuhan University, Wuhan, China}
	\icmlaffiliation{WHUAI}{School of Artificial Intelligence, Wuhan University, Wuhan, China}
	\icmlaffiliation{HBLAB}{Hubei Key Laboratory of Computational Science, Wuhan University, Wuhan, China}
	
	\icmlcorrespondingauthor{Xiliang Lu}{xllv.math@whu.edu.cn}
	
	% You may provide any keywords that you find helpful for describing your
	% paper; these are used to populate the "keywords" metadata in the PDF but
	% will not be shown in the document
	\icmlkeywords{Machine Learning, ICML}
	
	\vskip 0.3in
	]
	
	% this must go after the closing bracket ] following \twocolumn[ ...
	
	% This command actually creates the footnote in the first column listing the
	% affiliations and the copyright notice. The command takes one argument, which
	% is text to display at the start of the footnote. The \icmlEqualContribution
	% command is standard text for equal contribution. Remove it (just {}) if you
	% do not need this facility.
	
	% Use ONE of the following lines. DO NOT remove the command.
	% If you have no special notice, KEEP empty braces:
	\printAffiliationsAndNotice{}  % no special notice (required even if empty)
	% Or, if applicable, use the standard equal contribution text:
	% \printAffiliationsAndNotice{\icmlEqualContribution}
	
	\begin{abstract}
		We explore the expressive power of Transformers by establishing precise approximation error upper and lower bounds for H\"{o}lder class. Specifically, a new approximation upper bound is derived for the standard Transformer architecture equipped with Softmax operators, ReLU activation functions, and residual connections. We prove that a Transformer network composed of at most $\mathcal{O}(\varepsilon^{-{d_{0}}/{\alpha}})$ blocks can approximate any bounded H\"{o}lder function with $d_{0}$-dimensional input and smoothness $\alpha\in(0,1]$ under any accuracy $\varepsilon>0$. In the case of approximation lower bounds, leveraging the VC-dimension upper bound, we are the first to rigorously prove that Transformers demand for at least $\Omega(\varepsilon^{-{d_{0}}/({4\alpha})})$ blocks to achieve the $\varepsilon$ approximation accuracy. As a final step, we extend the derived results for standard Transformers to a general regression task and establish the corresponding excess risk rates demonstrating Transformers' empirical effectiveness in real-world settings.
	\end{abstract}
	% Lay Summary
	%	Why is the Transformer — the core architecture of modern AI models — so powerful? And what scale must a Transformer reach to control errors within a specific range when processing complex tasks? To answer these questions, we analytically evaluated the expressivity of Transformers through specific function approximation tasks, and aimed to establish a rigorous mathematical explanation for their success.	
	
	%	In this paper, we derived the minimum and maximum model scale required for a standard yet mathematically simplified Transformer to achieve a given target accuracy. From another perspective, for any given model size, our results reveal the absolute best- and worst-case theoretical accuracy limits that a Transformer can reach.
		
	%	These theoretical guarantees provide solid mathematical backing for Transformers' remarkable performance. Furthermore, by extending our findings to general regression tasks, we offer a theoretical explanation for why Transformers perform so well in real-world applications.
	\section{Introduction}
	The self-attention based Transformer \cite{vaswani2017attention} has become ubiquitous in deep learning, driving remarkable progress in natural language processing (NLP) through the models like GPT \cite{ghojogh2020attention}. It has also emerged as a powerful alternative to complex neural networks like Convolutional Neural Networks (CNN) \cite{dosovitskiy2020image} and Graph Neural Networks (GNN) \cite{ying2021transformers} in image and graph processing tasks. Furthermore, Transformers have become a fundamental component in generative AI systems such as Gemini \cite{team2023gemini} and latest GPT-series \cite{achiam2023gpt}. Given the widespread practical success, interpretability analysis of Transformers has gained significant attention recently. 
	
	The approximation theory \cite{yarotsky2017error} is one of the essential aspects of interpretability theory. By examining the error bounds of approximating certain target function classes with particular models, researchers can investigate why and how modern models are well-suited to practical tasks, which helps reveal the expressive power inherent in model structures. Various approximation studies have been conducted for Transformers, including the Universal Approximation Theorem (UAT) \cite{yun2019transformers,kajitsuka2023transformers,petrov2024prompting} for Transformers, further UAT under constraints \cite{kratsios2021universal,kratsios2023universal}, memorization capacity of Transformers \cite{kim2023provable, mahdavi2023memorization}, and rough approximation bounds for Transformer variants \cite{gurevych2022rate,takakura2023approximation,jiao2024convergence,jiao2025approximation-bounds-Transformers, jiao2026transformers,xu2024expressive,hu2024fundamental,hu2024conditionaldiffusion}. 
	
	\begin{table*}[t]
		\centering
		\caption{Related results on the approximation theory of Transformers.}
		\renewcommand{\arraystretch}{1.2}
		{\small \begin{tabular}{c|c|cc|c}
				\toprule
				Reference & Function Class &\!Upper Bound\! &\! Lower Bound\!& Remark \\
				\hline
				
				\cite{yun2019transformers,kajitsuka2023transformers} 
				& Continuous& \multicolumn{2}{c|}{UAT (Universal)} & Transformations in operators \\
				\hline
				
				\cite{petrov2024prompting,kratsios2023universal} 
				& Continuous & \multicolumn{2}{c|}{UAT (Universal)} & Without residual connections \\
				\hline
				
				\cite{xu2024expressive} 
				& Continuous & $\surd$ & $\times$ & Variant with Hardmax \\
				
				\cite{takakura2023approximation} 
				& $\gamma$-smooth (Besov) & $\surd$ & $\times$ & Coordinate-wise rate \\
				
				\cite{hu2024fundamental,hu2024conditionaldiffusion} 
				&\!Lipschitz continuous\! & $\surd$ & $\times$ & Without residual connections \\
				
				\cite{jiao2025approximation-bounds-Transformers, jiao2026transformers} 
				&\!H\"{o}lder class\! & $\surd$ & $\times$ & Without residual connections \\
				\hline
				
				\textbf{Ours} 
				& \textbf{H\"{o}lder class} & $\bm{\surd}$ & $\bm{\surd}$ & \textbf{Standard Transformers} \\
				\bottomrule
		\end{tabular}}
		\label{tab:results-comparision-version2}
	\end{table*}
	
	As originally proposed in \cite{vaswani2017attention}, a standard Transformer block consists of a self-attention layer with Softmax operator, a feed-forward layer with ReLU activation function, and residual connections in each layer. However, there are almost only UAT results for standard Transformers incorporating these three elements \cite{yun2019transformers,kajitsuka2023transformers}. The approximation bounds (or rates) characterizing quantitative relationships between error and complexities of standard Transformers are lacking. Many studies modify Transformers by replacing the Softmax operator with other operators, such as Hardmax operator \cite{yun2019transformers,hu2024statistical,xu2024expressive} and Maximum operator \cite{gurevych2022rate,jiao2024convergence}. Although both \citet{yun2019transformers} and \citet{hu2024statistical} transformed the Hardmax into the Softmax by introducing a large multiplicative factor in the Softmax operator, this approach actually leads to extremely large norms of Transformer weight matrices and causes the Lipschitz constants of these transformed layers to explode. Consequently, it will be challenging to maintain a proper trade-off between approximation and statistical errors. Moreover, the Maximum operator differs fundamentally from Softmax in form and function, limiting the result to approximation tasks with Transformer variants \cite{gurevych2022rate,jiao2024convergence}. Further studies established approximation bounds for Softmax Transformers across various function spaces. Notably, theoretical guarantees have been developed for $\gamma$-smooth (Besov-like) functions \citep{takakura2023approximation}, Lipschitz continuous functions \citep{hu2024fundamental,hu2024conditionaldiffusion}, and the broader H\"{o}lder class \citep{jiao2025approximation-bounds-Transformers, jiao2026transformers}. However, a prevailing limitation across all these analyses is their reliance on  Transformer variants without residual connections in the feed-forward layers. Therefore, approximation upper bounds for general function spaces (e.g., H\"{o}lder class) with the standard architectures are lacking. Furthermore, the approximation theory of Transformers remains incomplete --- there are almost no exact lower bounds further analyzing and assessing existing upper bounds. This paper aims to solve these problems, with detailed comparisons provided in Table \ref{tab:results-comparision-version2}.

	Our work mainly provides precise approximation error upper and lower bounds for H\"{o}lder class with standard Transformers, delivering \textbf{three fundamental contributions}:
	
	\textbf{\textit{\romannumeral1}}) \textbf{A complete framework for approximation error upper bounds for H\"{o}lder class with standard Transformers.} The expressive power of standard architectures is characterized through the approximation theory without relying on model transformations, addressing two key challenges:
	\begin{itemize}
		\item {Understanding of the capability of self-attention layers}. Building on the significant single-layer Softmax-based contextual mapping in \cite{kajitsuka2023transformers}, we avoid transformations between Hardmax and Softmax operators, and recognize that the self-attention mechanism serves to memorize and differentiate tokens within input sequences.
		\item {Preservation of all residual connections}. Residual connections are crucial in practical tasks but remain difficult to analyze theoretically due to the deep composition of blocks. We retain all residual connections by employing a novel network construction strategy. This ensures that our theoretical results can be applied directly to the standard architectures used in practice and thereby reflect Transformers' inductive bias.
	\end{itemize}
	\textbf{\textit{\romannumeral2}}) \textbf{Complementary approximation lower bounds}. We prove an approximation lower bound for standard Transformers by deriving a VC-dimension upper bound determined by the total number of model operations and parameters, based on \cite{anthony2009neural}. This derived lower bound opens a new direction for future research, advancing the approximation theory of Transformers.
	
	\textbf{\textit{\romannumeral3}}) \textbf{Application on a general regression task.} We investigate the applicability of Transformers in finite-sample regression tasks of H\"{o}lder functions. The theoretical analysis demonstrates Transformer's strong approximation capabilities and empirical effectiveness in real-world settings.
	\section{Related Work}
	\paragraph{Universality and Approximation Upper Bounds.} Analogous to classical results that sufficiently large Feed-forward Neural Networks (FNNs) can accurately approximate broad function classes \cite{cybenko1989approximation,hornik1989multilayer}, \citet{yun2019transformers} derived the first Universal Approximation Theorem (UAT) for standard Transformers on continuous sequence-to-sequence functions. Their work introduced a widely adopted framework which treats the self-attention mechanism as a contextual mapping and approximates an intermediate task of piece-wise constant functions using Transformers. Subsequent UAT work \cite{kajitsuka2023transformers} streamlined \cite{yun2019transformers}'s methodology by designing a single layer contextual mapping but modifying the architectures into Transformer variants without residual connections in feed-forward layers. UATs for constrained Transformers were established by \cite{kratsios2021universal,kratsios2023universal,petrov2024prompting}. To align the theoretical analysis of Transformers with that of FNNs, precise approximation upper bounds on model complexity are required. Upper bounds of a Transformer variant replacing softmax with Maximum operators were derived by \cite{gurevych2022rate,jiao2024convergence}. Subsequently, \citet{takakura2023approximation} developed a coordinate-wise rate for the specific $\gamma$-smooth function class (a Besov-like space) by treating the self-attention layer as a feature extractor for inputs, a formulation limiting the scope of their results. Based on \cite{kajitsuka2023transformers}'s methodology, \citet{hu2024fundamental,hu2024conditionaldiffusion} obtained upper bounds for Lipschitz continuous functions, which which were subsequently extended to the broader H\"{o}lder class by \citet{jiao2025approximation-bounds-Transformers}. More recently, \citet{jiao2026transformers} achieved bounds for H\"{o}lder class through a distinctly different approach based on the Kolmogorov-Arnold Superposition Theorem. All the aforementioned works rely on Transformer variants that omit residual connections in the feed-forward layers. Notably, our construction of Transformers is inspired by these mentioned works excluding \cite{takakura2023approximation} and \cite{jiao2026transformers}, and improves upon \cite{yun2019transformers,kajitsuka2023transformers}'s framework to derive an approximation upper bound for H\"{o}lder class with standard Transformers.
	
	\paragraph{Approximation Lower Bounds.} Approximation lower bounds determine whether corresponding upper bounds are optimal, thus guiding future research directions. \citet{yarotsky2017error,yarotsky2020phase} proved that their obtained approximation upper bounds of ReLU FNNs on Sobolev functions are nearly optimal (up to logarithmic factors) by deriving approximation lower bounds. Their analysis builds upon the VC-dimension upper bounds provided in \citep[Theorems 8.7-8.8]{anthony2009neural}. Furthermore, \citet{shen2022optimal} established the optimality (up to constants) of derived approximation upper bounds for ReLU FNNs on continuous functions, by leveraging the nearly-tight VC-dimension bounds of \cite{bartlett2019nearly}. These results indicate the study of approximation bounds for ReLU FNNs has reached a mature stage. In recent years, lower bounds have also been established for other networks like Sigmoid networks \cite{barron1993universal,siegel2024sharp} and CNNs \cite{zhou2020theory,yang2024rates}. However, to the best of our knowledge, no lower bounds have been derived for Transformers until now. By analyzing the key VC-dimension upper bound provided by \citep[Theorem 8.14]{anthony2009neural}, we may be the first to establish an approximation lower bound for deep Transformers.
	
	\paragraph{Memorization Capacity of Transformers.} Alternatively, theoretical memorization capacity of networks also contributes to appropriate model size selections, aligning with the principles of approximation bounds. Related studies have explored this for various architectures, including FNNs \cite{baum1988capabilities,huang2003learning,park2021provable,vardi2021optimal}, CNNs \cite{nguyen2018optimization}, and Transformers \cite{kim2023provable,mahdavi2023memorization,kajitsuka2023transformers,kajitsuka2024optimal}. Recently, \citet{kim2023provable} proved that Transformers consisting of ReLU activation feed-forward layers and a single-head attention layer, can memorize $N$ sequences of length $n$ with $d$-dimensional tokens using $\tilde{\mathcal{O}}(d+n+\sqrt{n}N)$\footnote{$\tilde{\mathcal{O}}(\cdot)$ is used to hide logarithmic factors, while $\mathcal{O}(\cdot)$ ignores constant factors.} parameters. Meanwhile, \citet{mahdavi2023memorization} isolated the role of multi-head attention, showing that a single $H$-head attention layer with $\mathcal{O}(Hd^{2})$ parameters can memorize $\mathcal{O}(Hn)$ examples. Refining these results, \citet{kajitsuka2024optimal} derived nearly optimal memorization rates, demonstrating that Transformers can memorize $N$ sequences of length $n$ using $\tilde{\mathcal{O}}(\sqrt{nN})$ parameters. Results of \cite{kim2023provable,kajitsuka2024optimal} build on the concept of contextual mapping introduced in \cite{yun2019transformers}, which plays a pivotal role in understanding Transformer memorization capacity. While our work does not directly analyze this aspect, future research leveraging our architectural insights might lead to advancements. 
	
	\section{Preliminaries}
	\paragraph{Notations.} Let $\mathbb{N}_{+}\coloneq\left\{1,2,3,\ldots\right\}$ denote the set of positive integers with $\mathbb{N}\coloneq\mathbb{N}_{+}\cup\{0\}.$ Let $[n]$ denote the set $\{1,2,\ldots,n\}$ for $n\in\mathbb{N}_{+}$. We denote the set of real numbers by $\mathbb{R}$ and the non-negative subset by $\mathbb{R}^{+}$. We write $A\lesssim B$ and $B\gtrsim A$ to denote $A\le cB$ for some absolute constant $c>0$. $A\asymp B$ means $A\lesssim B$ and $A\gtrsim B$. Let $\mathds{1}_{d}$ and $\mathds{1}_{d\times L}$ represent the $d$-dimensional vector and the $d\times L$ matrix with all one entries. For a vector ${x}\in\mathbb{R}^{d}$, $\norm{x}_{p}$ denotes the $\ell_{p}$-norm for $p\in[1,+\infty)$. For a matrix ${X}\in\mathbb{R}^{d\times L}$, $\norm{X}_2$ and $\norm{X}_F$ denote the $2$-norm and Frobenius norm respectively. For a scalar measurable function $f:\mathbb{R}^{d}\to\mathbb{R}$, its $L_{2}$-norm is $\left \Vert f\right \Vert_{{2}}\coloneq (\int (f(x))^{2}\mathrm{d}x )^{1/2}.$ Similarly, for a sequence-to-sequence (seq-to-seq) function $f:\mathbb{R}^{d\times L}\to\mathbb{R}^{d\times L},$ we define $\left \Vert f\right \Vert_{2}\coloneq(\int \left \Vert f({X})\right \Vert_{F}^{2}\mathrm{d}{X} )^{1/2}$. Appendix \ref{appendix:the-summary-of-notations} summarizes the notations for quick-reference and cross-checking.
	\paragraph{Target Function Space.} For the multi-index $\bm{n}=(n_{1},\ldots,n_{d})\in\mathbb{N}^{d}$, the $\bm{n}$-derivative of $f$ is denoted as $\partial^{\bm{n}}f\coloneq (\frac{\partial}{\partial x_{1}})^{n_{1}}\cdots(\frac{\partial}{\partial x_{d}})^{n_{d}}f.$ We use the convention that $\partial^{\bm{n}}f= f$ if $\left \Vert\bm{n}\right \Vert_{1}=0.$
	\begin{definition}[H\"{o}lder Class]
		\label{definition:Holder-class}
		Let $d_{x},d_{y}, K \in\mathbb{N}_{+}, \mathcal{X}\subseteq\mathbb{R}^{d_{x}}$ and $\alpha=r+\alpha_0 >0$, where $r=\lfloor\alpha\rfloor$ and $\alpha_0\in(0,1]$, we define the H\"{o}lder class with smoothness index $\alpha$ and norm constraint parameter $K$ as
		\begin{align*}
			&\mathcal{H}^{\alpha}_{d_{x},d_{y}}\left (\mathcal{X},K\right )\coloneq\Big\{ {f}=\left (f_{1},\ldots,f_{d_{y}}\right )^{\top}:\mathcal{X}\to\mathbb{R}^{d_{y}} \mid\\
			&~\underset{\norm{\bm{n}}_1 \le  r}{\max}\underset{{x}\in\mathcal{X}}{\sup}\abs{\partial^{\bm{n}} f_{k}({x})}\le K,\\
			&~\underset{ \norm{\bm{n}}_1={r}}{\max}\underset{{x,y}\in\mathcal{X}, {x}\neq {y}}{\sup}\frac{\abs{\partial^{\bm{n}}f_{k}({x})-\partial^{\bm{n}}f_{k}({y})}}{\left \Vert {x}-{y}\right \Vert^{\alpha_0}_{2}}\le K, ~k\in[d_y]
			\Big\}.
		\end{align*}
	\end{definition}
	Without loss of generality, we consider a bounded input domain $\mathcal{X}=[0,1]^{d_{0}}$ in this paper. Furthermore, we use $\mathcal{H}_{d_{0}}^{\alpha}(\mathcal{X},K)\coloneq\mathcal{H}_{d_{0},d_{0}}^{\alpha}(\mathcal{X},K)$ when $d_{x}=d_{y}=d_{0}$.
	
	\begin{figure*}
		\centering
		\includegraphics[width=0.95\linewidth]{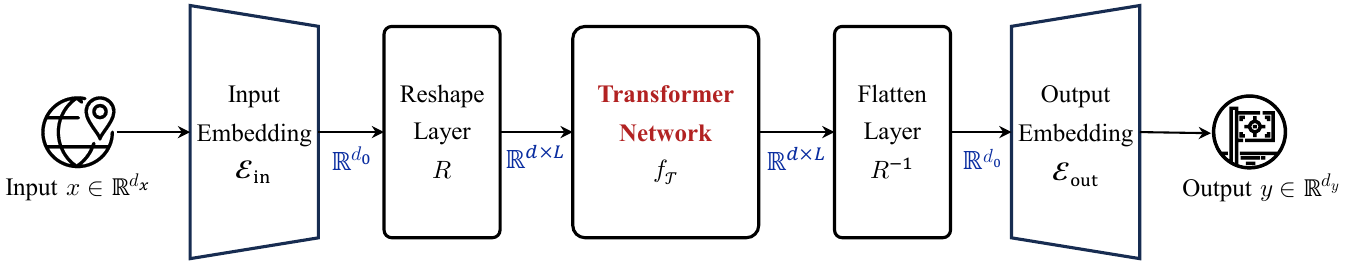}
		\caption{Pipeline of the standard Transformer function class: bridging the vector form and sequence form.}
		\label{fig:pipelineoftransformer}
	\end{figure*}
	
	\paragraph{Transformer Architecture.} \textit{\textbf{\romannumeral1}) A Transformer block} consists of a dot-product Softmax self-attention layer, a ReLU feed-forward layer, and a residual connection per layer. For the input sequence ${X}\in\mathbb{R}^{d\times L}$ with $L$ length and $d$-dimensional tokens, a Transformer block mapping from $\mathbb{R}^{d\times L}$ to $\mathbb{R}^{d\times L}$ is represented as 
	\begin{align}
		&\!\text{Attn}({X})={X}\!+\!\sum_{i=1}^{r_{1}}{W}^{i}_{O}{W}^{i}_{V}{X}\sigma_{S}\left[({W}^{i}_{K}{X})^{\top}{W}^{i}_{Q}{X}\right],\\
		&\text{FF}\circ\text{Attn}({X}) = \text{Attn}({X})+\nonumber\\
		&\qquad+\sum_{i=1}^{r_{2}}{W}_{2}^{i}\sigma_{R}\left[{W}_{1}^{i}\cdot\text{Attn}({X})+{b}_{1}^{i}\mathds{1}_L^\top\right]+{b}_{2}\mathds{1}_L^\top, \label{eq:mutil-neurons}
	\end{align}
	where ${W}_{K}^{i},{W}_{Q}^{i},{W}_{V}^{i}\in\mathbb{R}^{m\times d}$ are the key, query and value weight matrices, ${W}_{O}^{i}\in\mathbb{R}^{d\times m}$ is the connection matrix, and  ${W}_{1}^{i}\in\mathbb{R}^{l\times d},{W}_{2}^{i}\in\mathbb{R}^{d\times l},{b}_{1}^{i}\in\mathbb{R}^{l},{b}_{2}\in\mathbb{R}^{d}$ parameterize the feed-forward layer. The Softmax operator $\sigma_S$ applies column-wisely with the ReLU activation $\sigma_R$ applying element-wisely. Indices $r_{1}$ and $r_{2}$ are the head number of attention layer and the neuron number of feed-forward layer, respectively. 
	
	\textit{\textbf{\romannumeral2}) A Transformer network (encoder)} is a composition of $D\in\mathbb{N}_{+}$ Transformer blocks additionally with a positional encoding $E\in\mathbb{R}^{d\times L}$, defined as
	\begin{equation}
		\label{eq:Transformer-function-class}
		\makebox[0pt]{$
			\begin{aligned}
				\mathcal{T}^{D}&\!\coloneq\left\{f_{\mathcal{T}}:\mathbb{R}^{d\times L}\to\mathbb{R}^{d\times L}\right\},~~\text{where}~~\\
				~f_{\mathcal{T}}(X)&={\rm FF}^{(D)}\!\!\circ\!{\rm Attn}^{(D)}\!\circ\!\cdots\!\circ\! {\rm FF}^{(1)}\!\circ\!{\rm Attn}^{(1)}(X\!+\!E).~~~~
			\end{aligned}
			$}
	\end{equation}
	Given that the function approximation task is defined on vector inputs, whereas Transformer networks operate on sequences, we introduce a reshape layer to rearrange the input vectors to the sequence format, additionally with the reverse flatten layers.
	\begin{definition}[Reshape and Flatten Layers]
		\label{definition:reshape-layher}
		Let $R: \mathbb{R}^{d_0} \to \mathbb{R}^{d\times L}$ be a reshape layer that transforms the $d_0$-dimensional input into a $d\times L$ sequence matrix with $d_{0}=dL$. Besides, the corresponding reverse flatten layer is defined by $R^{-1}: \mathbb{R}^{d\times L} \to \mathbb{R}^{d_0}$ as the inverse of $R$.
	\end{definition}
	\textit{\textbf{\romannumeral3})} \textit{The Transformer function class} considered throughout the paper is defined as
	\begin{equation}
		\label{eq:Transformer-network-for-function-appro}
		\!\!\!\mathcal{T}_R^{D} := \left\{g=\mathcal{E}_{\mathrm{out}}\!\circ\! R^{-1}\!\circ\!f_{\mathcal{T}}\!\circ\! R \circ \mathcal{E}_{\mathrm{in}}: \mathbb{R}^{d_x} \to \mathbb{R}^{d_y}\right\},
	\end{equation}
	where $\mathcal{E}_{\mathrm{in}}:\mathbb{R}^{d_{x}}\to\mathbb{R}^{d_{0}}$ is the input embedding and $\mathcal{E}_{\mathrm{out}}:\mathbb{R}^{d_{0}}\to\mathbb{R}^{d_{y}}$ is the output embedding. The overall pipeline is illustrated in Figure \ref{fig:pipelineoftransformer}.
	
	\paragraph{Problem Setting.} Function approximation tasks of H\"{o}lder class $\mathcal{H}_{d_{x},d_{y}}^{\alpha}(\mathcal{X},K)$ with Transformer function class $\mathcal{T}_{R}^{D}$ aim to study both upper and lower bounds of the approximation error\footnote{The approximation error characterizes the fundamental limit of model's expressivity, For further details, refer to Remark \ref{remark:roles-of-each-error}.}
	\begin{equation}
		\label{eq:problem-setting-approximation-error}
		\mathcal{E}_{\mathrm{app}}\left(\mathcal{H}_{d_{x},d_{y}}^{\alpha},\mathcal{T}_{R}^{D}\right)\coloneq\underset{f\in\mathcal{H}_{d_{x},d_{y}}^{\alpha}}{\sup}\underset{g\in\mathcal{T}_{R}^{D}}{\inf}\norm{f-g}_{2}.
	\end{equation} 
	In a mainstream way, analysis of the approximation error equals providing upper and lower bounds of Transformer complexities when the error meets the minor accuracy criterion $\varepsilon>0$, i.e., $\mathcal{E}_{\mathrm{app}}\le\varepsilon$. 
	
	\section{Approximation Error Upper Bound for H\"{o}lder Class with Transformers}
	Since the input and output embeddings do not impact the core architecture of Transformer networks (See Figure \ref{fig:pipelineoftransformer}), we assume $d_{x}=d_{y}=d_{0}$ and omit the embeddings in this section for the sake of simplicity.
	\begin{theorem}[Upper Bound of Approximation Error]
		\label{theorem:approximation-upper-bounds}
		Let $\alpha\in(0,1], d_{0}\in\mathbb{N}_{+},d_{0}>2\alpha,$ and $K\in\mathbb{N}_{+}$. For any target  function $f\in\mathcal{H}_{d_{0}}^{\alpha}([0,1]^{d_{0}},K)$ and any error criterion $\varepsilon>0$, there exists a Transformer function $g\in \mathcal{T}_{R}^{D}$ composed of at most $\mathcal{O}(\varepsilon^{-\frac{d_{0}}{\alpha}})$ blocks such that $\norm{f-g}_{2}<\varepsilon.$
	\end{theorem}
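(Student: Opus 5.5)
We follow the piecewise-constant framework of \cite{yun2019transformers,kajitsuka2023transformers}, adapted so that every residual connection is retained. Fix $\varepsilon>0$ and a grid size $N\in\mathbb{N}_+$. Partition $[0,1]^{d_0}$ into $N^{d_0}$ cubes $Q_{\bm{j}}$ of side $1/N$, and let $\bar f$ be the piecewise-constant function equal to $f(x_{\bm{j}})$ on $Q_{\bm{j}}$, where $x_{\bm{j}}$ is the corner of the cube. H\"older continuity gives $\norm{f-\bar f}_{L_\infty}\le K d_0^{\alpha/2}N^{-\alpha}$, so choosing $N\asymp \varepsilon^{-1/\alpha}$ makes this at most $\varepsilon/3$. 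Since the error is measured in $L_2$ rather than $L_\infty$, we may discard thin boundary strips of width $\delta$ around the cube faces, where ReLU quantizers cannot be exact. These strips have total measure $\lesssim N d_0\delta$, and all functions involved are uniformly bounded, so taking $\delta$ small contributes only another $\varepsilon/3$. The task then reduces to exactly representing $\bar f$ off the strips, with $R$ arranging $x$ as a $d\times L$ sequence.

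The construction has three stages.

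\textbf{(a) Quantization.} A few feed-forward blocks act token-wise on each coordinate. Each implements a step function $t\mapsto \lfloor Nt\rfloor/N$ up to the $\delta$-strips, as a sum of $N$ ReLU ramps of width $\delta$; the neuron number $r_2$ is free, so this costs $O(1)$ blocks. The residual branch is handled by choosing the added term equal to (quantized $-$ identity), which is itself piecewise linear. Each token then lies in a finite grid $\mathcal{G}\subset\{0,1/N,\dots\}^d$. Adding the positional encoding $E$ makes the tokens at different positions distinct.

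\textbf{(b) Contextual mapping.} We invoke the single-layer Softmax contextual mapping of \cite{kajitsuka2023transformers}. One attention layer with residual connection maps each quantized sequence to tokens whose values, after projection onto a fixed direction $u$, are distinct across all $N^{d_0}$ sequences and all positions, with a separation gap $\gamma>0$ polynomial in $1/N$. The residual adds the identity, which only shifts these values by a known injective amount, so distinctness is preserved.

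\textbf{(c) Value assignment.} We now realize the map from the scalar ``context id'' $u^\top X_{:,k}$ to the target $\bar f$ component of token $k$, subtracting the current token so that the residual connection yields exactly the target. There are $M\asymp N^{d_0}$ ids. Our plan is to spend one block per id, i.e. $O(N^{d_0})=O(\varepsilon^{-d_0/\alpha})$ blocks. Block $m$ uses the identity attention (value matrix zero) and a feed-forward layer adding a narrow ReLU bump of width below $\gamma$, centred at id $c_m$, whose height is the needed correction vector. Since all bumps are disjoint, each token receives exactly one correction. However, a bump added earlier shifts the token and might trigger a later bump. We avoid this by storing the correction in coordinates that are not read by the projection $u$: first lift the dimension by a constant factor, reserving spare rows in the reshape/embedding, and only in a final block move the reserved rows into the output rows, then zero everything else. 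The output coordinates are thus written while the id coordinate stays untouched.

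The main obstacle is stage (c) together with (b), namely keeping the residual streams from corrupting the id used for later lookups, and controlling the separation $\gamma$ from the Softmax contextual map so that the bump widths are compatible. If the spare-coordinate trick is unavailable because $d$ is fixed by $d_0=dL$, the fallback is as follows. Put the id in a coordinate that the correction never touches, by having each bump write only to output coordinates whose contribution to $u^\top X$ is cancelled by orthogonality, choosing $u$ orthogonal to the write directions. In that case the contextual map must be arranged so that the id lives in a subspace disjoint from the output subspace. Summing the three error contributions then gives $\norm{f-g}_2<\varepsilon$ with $O(\varepsilon^{-d_0/\alpha})$ blocks.
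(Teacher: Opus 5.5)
Your pipeline matches the paper's: piecewise-constant reduction, ReLU quantization, the single-layer Softmax contextual map of \cite{kajitsuka2023transformers}, then one bump per contextual id. Reading each id through a scalar projection $u^\top X_{:,k}$ is a legitimate simplification, since it lets one feed-forward layer per id replace the paper's two-layer coordinate-wise subunit. The gap is in stage (c), exactly at the obstacle you flag: neither remedy you propose is available.

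\begin{itemize}
\item \emph{Spare rows.} For this theorem the embeddings are omitted and $d_x=d_y=d_0=dL$. Every coordinate of every token is therefore an output coordinate, and there are no spare rows to lift into.
\item \emph{Orthogonality fallback.} This fails structurally. The corrections must move tokens to arbitrary targets $Y_{G:,k}\in\mathbb{R}^d$, so the write directions span $\mathbb{R}^d$ and no nonzero $u$ is orthogonal to all of them (for $d=1$, $u^\perp=\{0\}$). Moreover, if you did confine the writes to $u^\perp$, the final token's $u$-component would still equal the id rather than $u^\top Y_{G:,k}$.
\end{itemize}
As written, a token corrected in block $m$ can therefore fall into the support of a later bump and be overwritten.

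The missing idea is to push each processed token out of the range of all anchors, rather than to hide the correction. In the paper's subunit (Lemma~\ref{lemma:value-mapping}) the added vector is $Y_{G:,k}+(r+K)\mathds{1}_d-f_{\mathcal{T}2}(G)_{:,k}$, where $r$ bounds the norms of all contextual ids and $K$ bounds the targets. A processed token then sits outside every later bump's support. On the good set each token is matched exactly once, so a single final feed-forward layer with bias $-(r+K)\mathds{1}_d$ removes the offset from all tokens at once.

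The same device works in your scalar-id version. Add $Y_{G:,k}+Sw-a_m$, where $a_m$ is the contextual-id vector with $u^\top a_m=c_m$, $u^\top w=1$, and $S$ exceeds the largest $c_m$ by more than $\sqrt{d}K$ plus the bump width. Subtract $Sw$ from every token at the end. Alternatively, since you already let the neuron number grow in stage (a), all disjoint bumps could sit in one feed-forward layer that reads the uncorrupted id. That closes the gap but makes the block count trivially $O(1)$ and abandons the bounded-width regime the paper's construction targets.

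A minor correction: the separation $\gamma$ delivered by the contextual map is exponentially small, of order $\exp(-\mathrm{poly}(LN^{d_0}))$ as in Lemma~\ref{lemma:true-contextual-mapping-of-module-utilized}, not polynomial in $1/N$. This does not affect the block count, since the bumps can be made arbitrarily narrow.
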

	\begin{proof}[Proof Sketches]
		We follow the basic pipeline of \citep[Theorem 2]{yun2019transformers} but introduce differences in constructions to ensure results for standard Transformers. The proof is divided roughly into three steps.
		
		\textbf{Step $\bm{1.}$ Transfer to an intermediate approximation task. } 
		
		\textbf{\textit{\romannumeral1}}) For the target H\"{o}lder class $\mathcal{H}_{d_{0}}^{\alpha}$,  it can be reshaped into an equivalent sequence function class $\mathcal{\bar{\mathcal{H}}}_{d,L}^{\alpha}$ with $d_{0}=dL$.
		
		\textbf{\textit{\romannumeral2}}) We approximate any $\bar{f}\in\mathcal{\bar{\mathcal{H}}}_{d,L}^{\alpha}$ by a piece-wise constant function $f_{\delta}\in\mathcal{H}(\delta)$ with cell width $\delta=\mathcal{O}(\varepsilon^{\frac{1}{\alpha}})$ and separation width $\delta^{*}=\mathcal{O}(\varepsilon^{\frac{2}{d_{0}}+\frac{1}{\alpha}})$, obtaining the error $\big\Vert{\bar{f}-f_{\delta}}\big\Vert_{2}<\frac{\varepsilon}{2}$ based on the regularity of H\"{o}lder class. 
		
		\textbf{Step $\bm{2.}$ Approximate $\bm{\mathcal{H}(\delta)}$ by designed Transformers.}
		
		For any $f_{\delta}\in\mathcal{H}(\delta)$, we approximate it under $\frac{\varepsilon}{2}$ error with the standard Transformer network $f_{\mathcal{T}}=f_{\mathcal{T}3}\circ f_{\mathcal{T}2}\circ f_{\mathcal{T}1}\in\mathcal{T}^{D}$ having three main parts,
		
		\textbf{\textit{\romannumeral1}}) Quantization module $f_{\mathcal{T}1}$ is composed of $\mathcal{O}(\varepsilon^{-\frac{1}{\alpha}})$ feed-forward layers and a positional encoding $E\in\mathbb{R}^{d\times L}$, maps most of continuous inputs $X\in\bar{\mathcal{X}}=[0,1]^{d\times L}$ into discrete grid points, and constructs a positional grid set
		$\mathcal{G}^{p}_{\delta,\delta^{*}}$.
		
		\textbf{\textit{\romannumeral2}}) Contextual mapping $f_{\mathcal{T}2}$ is composed of a single self-attention layer, and enhances the separation property of points in $\mathcal{G}^{p}_{\delta,\delta^{*}}$ by generating unique contextual ids $f_{\mathcal{T}2}(G)$ for grid points $G\in \mathcal{G}^{p}_{\delta,\delta^{*}}$.
		
		\textbf{\textit{\romannumeral3}}) Value mapping $f_{\mathcal{T}3}$ is composed of $\mathcal{O}(\varepsilon^{-\frac{d_{0}}{\alpha}})$ feed-forward layers and maps grid points to the target outputs, taking the contextual ids as anchor values.
		
		\textbf{Step $\bm{3.}$ Verification of the final error.} By triangle inequality, we have $\norm{\bar{f}-f_{\mathcal{T}}}\le\norm{\bar{f}-f_{\delta}}+\norm{f_{\delta}-f_{\mathcal{T}}}<\varepsilon.$ Furthermore, with the reshape and flatten layers $R,R^{-1}$ and equivalent norm transformations, the final result $\norm{f-g}_{2}<\varepsilon$ is obtained.
	\end{proof}
	Theorem \ref{theorem:approximation-upper-bounds} establishes that a standard Transformer, composed of at most $\mathcal{O}(\varepsilon^{-\frac{d_{0}}{\alpha}})$ blocks with a single active self-attention layer, serves as an effective approximator of H\"{o}lder functions. Increasing the Transformer blocks enhances the model's expressive power. The primary challenge in approximating H\"{o}lder functions arises from the ratio between input dimension $d_{0}$ and smoothness index $\alpha$.  Accordingly, we assume $d_{0}>2\alpha$ to ensure a positive minor term $\delta^{*}$, which is discussed later. Furthermore, the norm constraint $K$ for the H\"{o}lder class primarily serves as a stability condition for proof construction, independent of both the block number $D$ and error $\varepsilon$. 
	
	\begin{remark}[Bounded Width]
		Notably, our construction employs a fixed-width architecture where inner dimensions $\{r_1\cdot m,r_2\cdot l\}$ are bounded by $\mathcal{O}(d_{0})$, independent of both the smoothness $\alpha$ and the target accuracy $\varepsilon$. Therefore, the approximation bound is characterized exclusively by the block number $D$.
	\end{remark}
	
	\begin{corollary}[Depth-width Trade-off]
		\label{corollary:depth-width-trade-off}
		For any $\varepsilon>0$ and $d_0=dL$, a Transformer architecture $\mathcal{T}_R^D$ with $D=\mathcal{O}(\varepsilon^{-d_0/\alpha})$ blocks and fixed width $W=4d$, which is capable of approximating any function $f\in\mathcal{H}_{d_0}^{\alpha}([0,1]^{d_0},K)$ with smooth index $\alpha\in(0,1]$ under error $\varepsilon$, can be realized by a Transformer $\mathcal{T}_R^{D',W'}$ comprising variable blocks $D'$, width $W'$, and two additional linear embedding layers, satisfying the condition $D'\cdot W'=\mathcal{O}(\varepsilon^{-d_0/\alpha}),$ where $D'\ge6$ and $W'\ge 4d$. 
	\end{corollary}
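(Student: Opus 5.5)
The plan is to start from the construction behind Theorem~\ref{theorem:approximation-upper-bounds} and trade depth for width in the value-mapping module only. That construction has three parts. The quantization part $f_{\mathcal{T}1}$ uses $\mathcal{O}(\varepsilon^{-1/\alpha})$ feed-forward layers. The contextual mapping $f_{\mathcal{T}2}$ is a single attention layer. The value mapping $f_{\mathcal{T}3}$ uses $\mathcal{O}(\varepsilon^{-d_0/\alpha})$ feed-forward layers and dominates the block count. Each of these layers has width $W=4d$.

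The first key observation is that, inside a Transformer block, the feed-forward sub-layer with residual connection is a sum $\sum_{i=1}^{r_2}W_2^i\sigma_R[W_1^i\cdot+b_1^i\mathds{1}_L^\top]$. Widening it therefore lets us add, within one block, the contributions of several narrow value-mapping layers in parallel. This works whenever those contributions act on disjoint sets of contextual ids. In the construction, each value-mapping layer shifts tokens whose contextual id lies in a prescribed small interval, and it leaves all other tokens unchanged. So $k$ consecutive such layers can be merged into one layer of width $k\cdot 4d$, provided the residual stream on which they read (the id coordinate) is not modified by the merged layers.

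To guarantee this, I would use the two additional linear embedding layers. They lift the $d$-dimensional tokens to a wider working dimension with a dedicated "id" channel and a dedicated "output" channel, and at the end project back. The merged layers then read only the id channel and write only to the output channel. Since the bumps act on disjoint id intervals, summing them in parallel equals composing them sequentially.

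Given $W'\ge4d$, I set $k=\lfloor W'/(4d)\rfloor$ and group the $\mathcal{O}(\varepsilon^{-d_0/\alpha})$ value-mapping layers into $\lceil \mathcal{O}(\varepsilon^{-d_0/\alpha})/k\rceil$ blocks. The quantization layers are grouped in the same way; they too act on disjoint coordinates or thresholds, or can simply be kept, since $\varepsilon^{-1/\alpha}\le\varepsilon^{-d_0/\alpha}$. Unused attention sub-layers are set to zero. The constant $6$ in $D'\ge6$ comes from a fixed minimal number of stages: the embedding-compatible quantization stage(s), the contextual-mapping block, at least one value-mapping block, and the cleanup or projection blocks. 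These are counted so that even when $W'$ is maximal the architecture retains all stages. Then $D'W'\lesssim(\varepsilon^{-d_0/\alpha}/k+6)\cdot 4dk=\mathcal{O}(\varepsilon^{-d_0/\alpha})$.

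Conversely, if $W'$ is fixed and $D'$ is larger, I pad with identity blocks (zero weights), which the residual connections make trivial.

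The main obstacle is checking that the sequential value-mapping layers of the original construction really commute and act on disjoint supports, so that summing them in parallel reproduces their composition. If the original layers instead update the id channel cumulatively (as in Yun-style constructions), I would first rewrite them so that every bump reads a frozen copy of the contextual id, stored in an extra channel by the input embedding. This is exactly why the extra linear embedding layers are needed.
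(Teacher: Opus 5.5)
\textbf{Gap in merging the value-mapping layers.} Your argument merges $k$ consecutive value-mapping layers into one layer of width $k\cdot 4d$ that reads only a frozen id channel and writes only an output channel. That step assumes each value-mapping unit is a single feed-forward layer implementing a bump on a scalar id. In Lemma \ref{lemma:value-mapping} this is not the case:
\begin{itemize}
\item The contextual ids $f_{\mathcal{T}2}(G)_{:,k}$ are $d$-dimensional vectors.
\item Each unit is a depth-two subunit. $\mathrm{FF}^{(i,k,1)}$ writes coordinatewise labels $\sqrt{d}r$ into the residual stream.
\item $\mathrm{FF}^{(i,k,2)}$ then reads their column sum through $\sigma_{\zeta2}[\mathds{1}_d^\top Z-d\sqrt{d}r\,\mathds{1}_L^\top]$. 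This performs the AND over the $d$ coordinates, and only after the first layer has finished.
\end{itemize}
The two layers of a subunit cannot be summed into one. Suppose you merge the first layers of $k$ subunits into one block and their second layers into the next, keeping a layout with only an id channel and an output channel. Then the labels superpose. As soon as one anchor fully labels a column, the shared column sum passes the threshold of $\sigma_{\zeta2}$. Every detector in the group then fires on that column and adds its own vector $Y^{(i)}_{G:,k}+(r+K)\mathds{1}_d-f_{\mathcal{T}2}(G^{(i)})_{:,k}$.

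Keeping the intermediate states apart needs at least one extra residual channel per parallel subunit, so the token dimension must grow with $k$. Your frozen-copy remedy handles cumulative updates of the id, but not this interference.

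\textbf{How the paper handles it, and how you could.} The paper replicates the input, $\tilde X=[X;\dots;X]\in\mathbb{R}^{nd\times L}$, and uses block-diagonal $\tilde W_1,\tilde W_2$ and biases. Each $d$-dimensional copy then runs one complete two-layer subunit on its own, which is why $D_2'\ge 2$. The copies are aggregated with $(I_d,\dots,I_d)$, and $(n-1)X$ is subtracted. This works because each column is matched by at most one anchor.

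If you want to keep your fixed layout instead, the missing step is to replace every subunit by a single layer of one-dimensional trapezoids in the scalar $u^\top\bar X_{:,l}$. Here $u$ is a direction that separates all $LM^{d_0}$ contextual ids, as in Lemma \ref{Lemma:single-cm-core-inequality}. With that change your parallel summation is valid.

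\textbf{Minor point on the quantization layers.} Drop the option of keeping them unmerged. $\mathcal{O}(\varepsilon^{-1/\alpha})$ blocks of width $W'$ contribute $\varepsilon^{-1/\alpha}W'$ to $D'W'$, which exceeds $\varepsilon^{-d_0/\alpha}$ when $W'$ is of that order. Merging them is legitimate: each entry is moved by at most one of those layers, and its new left endpoint lies outside the other layers' supports.
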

	This corollary reveals a flexible depth-width trade-off. Rather than strictly adhering to the fixed-width regime in Theorem
	\ref{theorem:approximation-upper-bounds}, one can achieve the same bound using a fixed-depth architecture with variable width, or more generally, an architecture where both depth and width vary simultaneously, as long as the  product satisfies $D' \cdot W' = \mathcal{O}(\varepsilon^{-d_0/\alpha})$.
	
	The following sections highlight the distinctive features of our Transformer architectures for approximation tasks. See Appendix \ref{appendix:approximation-error-upper-bound} for full details and proofs.
	\subsection{Approximation for H\"{o}lder Class by Piece-wise Constant Function Class}
	For any target function $f\in\mathcal{H}_{d_{0}}^{\alpha}(\mathcal{X},K)$, we denote the reshaped input set as
	\[\bar{\mathcal{X}}\coloneq\left\{X=R(x)\mid X\in \mathbb{R}^{d\times L},  x\in\mathcal{X}, x=R^{-1}(X) \right\},\]
	where $R$ and $R^{-1}$ are the reshape and flatten layers. We denote the reshaped target H\"{o}lder class by
	\begin{align}
		\label{eq:reshaped-holder-class}
		\mathcal{\bar{\mathcal{H}}}_{d,L}^{\alpha}\coloneq\left\{\bar{f}=R\circ f\circ R^{-1}:\mathbb{R}^{d\times L}\to\mathbb{R}^{d\times L}\right\}.
	\end{align}
	Now we define the intermediate piece-wise constant class $\mathcal{H}(\delta)$ approximating the reshaped H\"{o}lder class $\bar{\mathcal{H}}_{d,L}^{\alpha}$.
	\begin{definition}[Quantization Grid and Cube]
		\label{def:grid_cube}
		Let $d,L\in\mathbb{N}_{+}$. We set the cell width $\delta>0$ and the separation width $\delta^{*}>0$. Then the number of grid points per dimension can be defined as $M = \lfloor\frac{1}{\delta+\delta^{*}}\rfloor$. And the grid point set $\mathcal{G}_{\delta,\delta^{*}}$ over $[0,1]^{d\times L}$ is defined as $\mathcal{G}_{\delta,\delta^{*}} \coloneq \left\{0, \delta+\delta^{*},2(\delta+\delta^{*}), \ldots, (M-1)(\delta+\delta^{*})\right\}^{d\times L}.$ For any grid point $G \in \mathcal{G}_{\delta,\delta^{*}}$, its associated hypercube is $\mathcal{S}_G\coloneq  \prod_{j\in[d],k\in[L]}[G_{j,k}, G_{j,k}+\delta]$.
	\end{definition}
	By construction, the union of these hypercubes satisfies $\bigcup_{G\in\mathcal{G}_{\delta,\delta^{*}}}\mathcal{S}_{G}\subset[0,1]^{d\times L}$. And any two cells maintain a separation margin of at least $\delta^{*}$ along each coordinate. Formally, for any $G_{j,k}\neq G^{\prime}_{j,k}\in \mathcal{G}_{\delta,\delta^{*}}$, their corresponding cells satisfy $d_{\min}([G_{j,k},G_{j,k}+\delta],[G^{\prime}_{j,k},G^{\prime}_{j,k}+\delta])\ge \delta^{*}$. This separation margin is crucial for our constructive proof for ReLU-based Transformers, as it guarantees the necessary geometric margin for the subsequent theoretical analysis.
	
	\begin{definition}[Piece-wise Constant Function Class]
		\label{def:pieceweise_const_func_class}    
		Let $\mathds{1}\{\cdot\}$ denote the indicator function.
		For each grid point $G \in \mathcal{G}_{\delta,\delta^{*}}$, its hypercube $\mathcal{S}_{G}$, and its output matrix $Y_G \in \mathbb{R}^{d\times L}$, the piece-wise constant function class with $X\in[0,1]^{d\times L}$ is defined as 
		\begin{align}
			\label{eq:piecewise-constant function}
			\mathcal{H}(\delta) \coloneq
			\Big\{f_{\delta}=\sum\nolimits_{G \in \mathcal{G}_{\delta,\delta^{*}}}\!\! Y_G \cdot \mathds{1}\{X \in \mathcal{S}_G\}\Big\}.	
		\end{align}
	\end{definition}
	\begin{lemma}[Approximation Error between $\bar{\mathcal{H}}_{d,L}^{\alpha}$ and $\mathcal{H}(\delta)$]
		\label{lemma:errors-holderclass-piecewiseclass}
		Let $\alpha\in(0,1], d_{0}\in\mathbb{N}_{+}$ with $d_{0}=dL> 2\alpha$. For any given $\varepsilon>0$ and a target function $\bar{f}\in\bar{\mathcal{H}}_{d,L}^{\alpha}$, there exists an $f_{\delta}\in\mathcal{H}(\delta)$ with $\delta=\mathcal{O}(\varepsilon^{\frac{1}{\alpha}})$ and $\delta^{*}\!=\mathcal{O} (\varepsilon^{\frac{2}{d_{0}}+\frac{1}{\alpha}} )$ such that $\left \Vert \bar{f}-f_{\delta}\right \Vert_{2}\le \varepsilon.$
	\end{lemma}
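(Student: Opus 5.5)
The plan is to split the $L_2$ error over $\bar{\mathcal{X}}=[0,1]^{d\times L}$ into the union of cubes $\bigcup_{G}\mathcal{S}_G$, where $f_\delta$ tracks $\bar f$ pointwise, and the complementary ``gap'' region $\Omega_{\mathrm{gap}}\coloneq[0,1]^{d\times L}\setminus\bigcup_G\mathcal{S}_G$, where $f_\delta=0$. On each cube we set $Y_G\coloneq\bar f(G)$, the value at the lower corner. Since $\bar f=R\circ f\circ R^{-1}$ and $R$ is a coordinate permutation, we have $\norm{\bar f(X)}_F=\norm{f(x)}_2$ and $\norm{X-X'}_F=\norm{x-x'}_2$. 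Hence the H\"older condition transfers: each of the $d_0$ output coordinates is $\alpha$-H\"older with constant $K$ and bounded by $K$.

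\textbf{Cube part.} For $X\in\mathcal{S}_G$ we have $\norm{X-G}_F\le\sqrt{d_0}\,\delta$. Therefore
\[
\norm{\bar f(X)-Y_G}_F^2\le d_0K^2(\sqrt{d_0}\,\delta)^{2\alpha}.
\]
The cubes are disjoint with total volume at most $1$, so this part contributes at most $\sqrt{d_0}\,K\,d_0^{\alpha/2}\delta^{\alpha}$ to the $L_2$ error. Choosing $\delta=c\,\varepsilon^{1/\alpha}$ with $c=(2\sqrt{d_0}\,K d_0^{\alpha/2})^{-1/\alpha}$ makes it at most $\varepsilon/2$.

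\textbf{Gap part.} On $\Omega_{\mathrm{gap}}$ we have $\norm{\bar f-f_\delta}_F^2=\norm{\bar f}_F^2\le d_0K^2$, so this part contributes at most $\sqrt{d_0}\,K\,|\Omega_{\mathrm{gap}}|^{1/2}$. We therefore need $|\Omega_{\mathrm{gap}}|\lesssim\varepsilon^2$. This is the main obstacle, since the measure of the uncovered region must be controlled carefully.

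In one coordinate, the covered length is $M\delta$ with $M=\lfloor 1/(\delta+\delta^*)\rfloor$. Thus the uncovered fraction per coordinate is
\[
1-M\delta\le 1-\Big(\tfrac{1}{\delta+\delta^*}-1\Big)\delta=\tfrac{\delta^*}{\delta+\delta^*}+\delta\le\tfrac{\delta^*}{\delta}+\delta.
\]
By a union bound over the $d_0$ coordinates, $|\Omega_{\mathrm{gap}}|\le d_0(\delta^*/\delta+\delta)$.

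The term $\delta^*/\delta$ is handled by the choice of $\delta^*$: with $\delta^*=c'\varepsilon^{2/d_0+1/\alpha}$, we get $\delta^*/\delta\asymp\varepsilon^{2/d_0}$. The boundary term $\delta$ is the delicate part. It comes from the leftover strip $[M(\delta+\delta^*),1]$, of length less than $\delta+\delta^*$, and from the final gap inside it. I would first try to remove it by letting the last cell absorb the leftover. Alternatively, I would choose $\delta$ so that $1/(\delta+\delta^*)$ is an integer, which still gives $\delta\asymp\varepsilon^{1/\alpha}$.

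Then $|\Omega_{\mathrm{gap}}|\le d_0\,\delta^*/\delta\cdot(1+o(1))$, and we need this to be $\lesssim\varepsilon^2$. A more careful count does better than the union bound: the gap volume is $1-(M\delta)^{d_0}$. Since there are about $M\approx\delta^{-1}$ gaps per axis, each of width $\delta^*$, the covered fraction per axis is $(1-\delta^*/(\delta+\delta^*))$, and so
\[
|\Omega_{\mathrm{gap}}|\approx 1-\big(1-\delta^*/\delta\big)^{d_0}\le d_0\,\delta^*/\delta\asymp\varepsilon^{2/d_0}.
\]
This is smaller than $\varepsilon^2$ only if the scaling is arranged differently. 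I expect the intended bookkeeping uses the exponent $2/d_0$ so that the volume counting across $d_0$ axes yields $\varepsilon^2$: the per-axis gap fraction raised through the $d_0$ axes, i.e.\ the complement measured as $\prod(\delta^*/\delta)$-type terms. I would work out that product estimate precisely, adjusting the constant $c'$ (and, if needed, the exponent) so that $\sqrt{d_0}\,K\,|\Omega_{\mathrm{gap}}|^{1/2}\le\varepsilon/2$. The condition $d_0>2\alpha$ ensures that $\delta^*\ll\delta$, so the grid remains nondegenerate and $M\ge1$.

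Summing the two parts via $\norm{\bar f-f_\delta}_2^2=\int_{\cup\mathcal{S}_G}+\int_{\Omega_{\mathrm{gap}}}$ then gives $\norm{\bar f-f_\delta}_2\le\varepsilon$.
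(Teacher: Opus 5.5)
Your decomposition into $\bigcup_G\mathcal{S}_G$ and $\Omega_{\mathrm{gap}}$, your choice $Y_G=\bar f(G)$, and your cube estimate are the same as in the paper's proof. The gap is your unfinished last step, and the completion you sketch would not work. Your own count is correct. $\Omega_{\mathrm{gap}}$ is the set of $X$ with \emph{at least one} entry in a gap, so
\[
|\Omega_{\mathrm{gap}}|=1-(M\delta)^{d_0}\ge 1-M\delta\ge \delta^*/(\delta+\delta^*).
\]
With $\delta^*\asymp\varepsilon^{2/d_0+1/\alpha}$ and $\delta\asymp\varepsilon^{1/\alpha}$, this is of order $\varepsilon^{2/d_0}$, and no choice of the constant $c'$ fixes it. Take $\bar f\equiv K\mathds{1}_{d\times L}$, which lies in the class. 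Every $f_\delta\in\mathcal{H}(\delta)$ vanishes on $\Omega_{\mathrm{gap}}$, so $\norm{\bar f-f_\delta}_2\ge\sqrt{d_0}K|\Omega_{\mathrm{gap}}|^{1/2}$. This lower bound is of order $\varepsilon^{1/d_0}$ and exceeds $\varepsilon$ for small $\varepsilon$ once $d_0\ge2$.

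The product-type estimate you propose to fall back on, $|\Omega_{\mathrm{gap}}|\approx(1-M\delta)^{d_0}$, is the measure of a much smaller set: the points where \emph{every} entry lies in a gap. This is exactly the step the paper's proof takes. It bounds the gap contribution by $(1-M\delta)^{d_0/2}\sqrt{d_0}K$ and then requires $1-M\delta<c_2\asymp\varepsilon^{2/d_0}$. That is where the exponent $2/d_0$ comes from, and also the hypothesis $d_0>2\alpha$. That hypothesis guarantees $c_2>\delta$, i.e.\ room for the boundary strip; it is not what makes $\delta^*\ll\delta$. So do not try to reconstruct that bookkeeping: it is invalid for $d_0\ge2$.

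The fix is to change the exponent. For the boundary strip, use your second option. Your first option, enlarging the last cell, leaves $\mathcal{H}(\delta)$, whose cells all have width exactly $\delta$. The construction is as follows.
\begin{itemize}
\item Take $M\in\mathbb{N}_+$ with $M\asymp\varepsilon^{-1/\alpha}$ and $1/M\le c\,\varepsilon^{1/\alpha}$.
\item Let $\eta=\varepsilon^2/(4d_0^2K^2)$, assuming $\eta<1$ (otherwise $Y_G=0$ already works, since $\norm{\bar f}_2\le\sqrt{d_0}K$).
\item Set $\delta=(1-\eta)/M$ and $\delta^*=\eta/M$.
\end{itemize}
Then $\lfloor1/(\delta+\delta^*)\rfloor=M$ and the uncovered fraction per axis is exactly $1-M\delta=\eta$. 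Your cube estimate still gives $\varepsilon/2$. By Bernoulli's inequality,
\[
\Big(\int_{\Omega_{\mathrm{gap}}}\norm{\bar f(X)}_F^2\,\mathrm{d}X\Big)^{1/2}\le\sqrt{d_0}K\big(1-(1-\eta)^{d_0}\big)^{1/2}\le\sqrt{d_0}K\sqrt{d_0\eta}=\frac{\varepsilon}{2}.
\]

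This choice gives $\delta\asymp\varepsilon^{1/\alpha}$ and $\delta^*\asymp\varepsilon^{2+1/\alpha}$. The latter is still $\mathcal{O}(\varepsilon^{2/d_0+1/\alpha})$ for $\varepsilon\le1$. So the lemma holds as literally stated, with $\mathcal{O}$ read as an upper bound and without needing $d_0>2\alpha$. What fails is the larger separation $\delta^*\asymp\varepsilon^{2/d_0+1/\alpha}$ that the paper intends; by the example above, it is unattainable.

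Since $M\asymp\varepsilon^{-1/\alpha}$ is unchanged, the block count downstream is unaffected. The same correction is needed where the paper reuses the $(1-M\delta)^{d_0}$ estimate to bound $\norm{f_\delta-f_{\mathcal{T}}}_2$.
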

	The inequality condition $d_{0}> 2\alpha$ can be easily satisfied since $\alpha\in(0,1]$, with its purpose to ensure that $\delta^{*}$ remains strictly positive in the proof. In a sense, this highlights that our work can handle a harder approximation task with large input dimension $d_{0}$ and small function smoothness $\alpha$. 
	
	\subsection{Approximation for Piece-wise Constant Function Class by Standard Transformers}
	To approximate the piece-wise constant function class $\mathcal{H}(\delta)$, the Transformer network $f_{\mathcal{T}}\in\mathcal{T}^{D}$ comprises three primary parts: the quantization module, contextual mapping and value mapping. 
	
	\begin{lemma}[Quantization Module $f_{\mathcal{T}1}$]
		\label{Lemma:quantization module}
		Let $\delta, \delta^{*}\in(0,1), M=\lfloor\frac{1}{\delta+\delta^{*}}\rfloor$ and $ d_{0}=dL$. By processing the inputs $X\in\bigcup_{G\in\mathcal{G}_{\delta,\delta^{*}}}\mathcal{S}_{G}$, there exists a quantization module  $f_{\mathcal{T}1}$ composed of $d_{0}M$ feed-forward layers and a positional encoding $E=\mathds{1}_{d}\cdot [1,2,\ldots,L]^{\top}$, constructing a quantization positional grid $\mathcal{G}_{\delta,\delta^{*}}^{p}=\mathcal{G}_{\delta,\delta^{*}}+E$.
	\end{lemma}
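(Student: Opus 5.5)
We construct $f_{\mathcal{T}1}$ explicitly as a stack of $d_0M$ Transformer blocks whose self-attention layers are switched off (all $W_O^i=0$, so each Attn reduces to the identity through its residual connection). Only the feed-forward layers act, each as a residual ReLU map. The positional encoding is added at the input, so the module receives $X+E$, whose entry $(j,k)$ lies in $[G_{j,k}+k,\,G_{j,k}+k+\delta]$ for some grid value $G_{j,k}=t(\delta+\delta^*)$ with $t\in\{0,\dots,M-1\}$. Since $k$ is a known integer offset per column, quantizing $X_{j,k}+k$ to $G_{j,k}+k$ is equivalent to quantizing $X_{j,k}$ to $G_{j,k}$. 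The output is therefore $G+E\in\mathcal{G}^p_{\delta,\delta^*}$, which is exactly what the lemma asks for.

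The key primitive is a one-dimensional residual staircase. For a fixed coordinate $(j,k)$ and a threshold index $t\in[M]$, we use a feed-forward layer with two neurons acting only on entry $(j,k)$. Its update is
\[
x\mapsto x-\tfrac{\delta}{\delta^*}\Big(\sigma_R\big(x-k-t(\delta+\delta^*)+\delta^*\big)-\sigma_R\big(x-k-t(\delta+\delta^*)\big)\Big),
\]
possibly with a correction of the coefficient choice. The residual connection contributes the leading $x$, and the bracketed ReLU difference is a clipped ramp $\phi_t$ that vanishes below the gap $((t-1)(\delta+\delta^*)+\delta,\,t(\delta+\delta^*))$ and is constant above it. The cleaner design is a layer that, on inputs already known to lie in cells $\ge t$, subtracts exactly the fractional offset. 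Concretely, we process thresholds from the top down, $t=M-1,\dots,1$, and then a final step for cell $0$. Each layer maps $x\in[\,t(\delta+\delta^*),\,t(\delta+\delta^*)+\delta\,]$ to the point $t(\delta+\delta^*)$ and leaves lower cells untouched.

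The difficulty here is that a residual ReLU layer is $x+\text{(piecewise linear)}$, so collapsing an interval of width $\delta$ to a point requires a slope $-1$ segment. That segment is $-\sigma_R(x-a)+\sigma_R(x-a-\delta)$ with $a=t(\delta+\delta^*)$, and it behaves as follows:
\begin{itemize}
\item on $[a,a+\delta]$ it sends $x$ to $a$;
\item for $x\ge a+\delta$ it shifts $x$ down by $\delta$;
\item below $a$ it does nothing.
\end{itemize}
The downward shift of higher cells by $\delta$ is bookkept by adding the constant $\delta$ back with a bias on the region above the gap. We do this using a ramp supported in the separation interval $(a+\delta,\,a+\delta+\delta^*)$, of the form $\tfrac{\delta}{\delta^*}\big(\sigma_R(x-a-\delta)-\sigma_R(x-a-\delta-\delta^*)\big)$. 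This ramp is exactly where $\delta^*>0$ is needed: it makes the correction constant on every cell above. All of these pieces fit into one block with a few neurons, since $r_2$ is allowed.

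Processing cells in increasing order of $t$ within each coordinate keeps all already-collapsed cells exactly at grid values. A finite induction over $t$ then shows that after $M$ layers coordinate $(j,k)$ equals $G_{j,k}+k$, with the offset $k$ handled by shifting the thresholds. Repeating this for all $d_0=dL$ coordinates, with layers acting on disjoint entries, gives $d_0M$ feed-forward layers in total and the claimed positional grid. The main obstacle is this inductive invariant: we must verify that each residual layer fixes all points already on the lattice and all cells not yet processed, and that no point is ever pushed into a gap. If the exact-cancellation bookkeeping fails, the fallback is to place the correction ramps at the top of each cell's gap so that the invariant holds by monotonicity.
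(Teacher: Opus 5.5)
Your final layer, the slope $-1$ collapse segment plus the correction ramp over the separation interval, sums to exactly the paper's three-ReLU update $-\sigma_R[s]+\frac{\delta+\delta^{*}}{\delta^{*}}\sigma_R[s-\delta]-\frac{\delta}{\delta^{*}}\sigma_R[s-(\delta+\delta^{*})]$ with $s=x-k-t(\delta+\delta^{*})$, applied to one row and stacked $d_0M$ times after the positional encoding. So your proof is correct and follows the paper's construction. Because this update vanishes outside $[k+t(\delta+\delta^{*}),\,k+(t+1)(\delta+\delta^{*})]\subseteq[k,k+1]$, which also keeps the token-shared feed-forward layer inert on every other column, the processing order and the inductive invariant you flag as the main obstacle hold automatically, and the initial two-neuron staircase primitive can simply be dropped.
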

	The quantization module $f_{\mathcal{T}1}$ maps most of received continuous input sequences from $[0,1]^{d\times L}$ to discrete grid points in $\mathcal{G}_{\delta,\delta^{*}}^{p}$. For input sequences in the complement set $[0,1]^{d\times L}\setminus\bigcup_{G\in\mathcal{G}_{\delta,\delta^{*}}}\mathcal{S}_{G}$, outputs of $f_{\mathcal{T}1}$ remain controllable, since $f_{\mathcal{T}1}$ is bounded and the measure of the complement set is $(1-\delta\lfloor\frac{1}{\delta+\delta^{*}}\rfloor)^{d_{0}}$ which will be negligible with sufficiently small $\delta^{*}$.
	
	To construct the next contextual mapping $f_{\mathcal{T}2}$ based on the positional grid $\mathcal{G}_{\delta,\delta^{*}}^{p}$, we first introduce several necessary concepts. 
	\begin{definition}[Token Vocabulary]
		Let $X^{(1)},\ldots,X^{(N)}\in\mathbb{R}^{d\times L}$ be $N$ input sequences consisting of $L$ tokens and $d$ embedding dimension. The $i$-th vocabulary set is $\mathcal{V}^{(i)}=\bigcup_{k\in[L]}{X}_{:,k}^{(i)}\subset\mathbb{R}^{d}$ for $i\in[N]$,
		and the whole vocabulary set is defined as $\mathcal{V}=\bigcup_{i\in[N]}\mathcal{V}^{(i)}\subset\mathbb{R}^{d}$.
	\end{definition}
	\begin{definition}[Token-wise Separatedness]
		\label{definition:Token-wise-separatedness}
		Let $r_{\min}$, $r_{\max},$ $\beta \in\mathbb{R}_{+}$. The $N$ input sequences ${X}^{(1)},\ldots,{X}^{(N)} \in \mathbb{R}^{d\times L}$ are token-wise $(r_{\min},r_{\max},\beta)$-separated if
		the following conditions hold,
		\begin{enumerate}
			\item For any $i\in[N]$ and $k\in[L]$, the boundedness $r_{\min} \le \big\Vert{X_{:,k}^{(i)}}\big\Vert_{2}\le r_{\max}$ holds.
			\item For any $i,j\in[N]$ and $k,l\in[L]$ with $X_{:,k}^{(i)}\neq X_{:,l}^{(j)}$,  $\big\Vert{X_{:,k}^{(i)}-X_{:,l}^{(j)}}\big\Vert_{2}\ge \beta$ holds.
		\end{enumerate}
	\end{definition}
	Definition \ref{definition:Token-wise-separatedness} only considers separatedness among different tokens. As first clarified in \cite{yun2019transformers}, sequence models like Transformers should  distinguish between different input sequences sharing same tokens. To address this limitation, we adopt the essential single-layer contextual mapping in \cite{kajitsuka2023transformers}, enabling Transformers to capture whole context of each sequence and generate unique context ids.
	\begin{lemma}[Contextual Mapping $f_{\mathcal{T}2}$]
		\label{lemma:true-contextual-mapping-of-module-utilized}
		All the $M^{d_{0}}$ grid points  $G\in\mathcal{G}_{\delta,\delta^{*}}^{p}$ are $(\sqrt{d},\sqrt{d}(L+1),\sqrt{d}(\delta+\delta^{*}))$-separated. A single self-attention layer equipped with one Softmax head, can be an \((r, \gamma)\)-contextual mapping $f_{\mathcal{T}2}:\mathbb{R}^{d\times L}\to\mathbb{R}^{d\times L}$ for the input sequences $G$, satisfying
		\begin{enumerate}
			\item For any \(i\in[M^{d_{0}}]\) and \(k\in[L]\), the boundedness \(\big\Vert{f_{\mathcal{T}2}(G^{(i)})_{:,k}}\big\Vert_{2}\le r\) holds.
			\item For any \(i, j\in[M^{d_{0}}]\) and \(k, l\in[L]\) such that either \(\mathcal{V}^{(i)}\neq\mathcal{V}^{(j)}\) or \(G_{:,k}^{(i)}\neq G_{:,l}^{(j)}\), the separatedness \(\norm{f_{\mathcal{T}2}(G^{(~i)})_{:,k}-f_{\mathcal{T}2}(G^{(j)})_{:,l}}_{2}\ge \gamma\) holds.
		\end{enumerate}
		with \(r\) and \(\gamma\) defined by $r=\sqrt{d}(L+1)+\frac{\sqrt{d}(\delta+\delta^{*})}{4}$ and $\gamma=\tilde{\mathcal{O}}\left(\exp\left(-\frac{L^{6}M^{4d_{0}}}{\delta+\delta^{*}}\right)\right)$.	
	\end{lemma}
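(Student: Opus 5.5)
The plan has two parts. First we check directly that the positional grid is token-wise separated. Then we invoke the single-layer Softmax contextual mapping theorem of \cite{kajitsuka2023transformers} (their Theorem on $(r,\gamma)$-contextual mappings for token-wise $(r_{\min},r_{\max},\beta)$-separated inputs), plugging in our explicit parameters.

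\textbf{Step 1: separatedness.} A token of $G\in\mathcal{G}^{p}_{\delta,\delta^{*}}$ at position $k$ is $G_{:,k}=g+k\mathds{1}_d$, where every entry of $g$ lies in $\{0,\delta+\delta^*,\ldots,(M-1)(\delta+\delta^*)\}\subset[0,1)$.
\begin{itemize}
\item \emph{Lower bound on the norm.} All entries are at least $k\ge 1$, so $\norm{G_{:,k}}_2\ge\sqrt d$.
\item \emph{Upper bound on the norm.} All entries are below $k+1\le L+1$, so $\norm{G_{:,k}}_2\le\sqrt d(L+1)$.
\item \emph{Pairwise separation.} Take two distinct tokens $g+k\mathds{1}_d\neq g'+l\mathds{1}_d$. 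If $k=l$, then $g\neq g'$, so some coordinate differs by a nonzero multiple of $\delta+\delta^*$. The difference vector is then nonzero with entries in $(\delta+\delta^*)\mathbb{Z}$. If $k\neq l$, every entry of the difference is $(k-l)+(g_j-g'_j)$, whose absolute value is at least $1-(M-1)(\delta+\delta^*)\ge\delta+\delta^*$, because $M(\delta+\delta^*)\le1$. In that case all $d$ coordinates are large, giving distance at least $\sqrt d(\delta+\delta^*)$.
\end{itemize}
For the $k=l$ case we get only $\delta+\delta^*$ from a single coordinate. I expect either the authors' convention, or a slightly weaker stated $\beta$, to absorb the $\sqrt d$ factor. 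If it cannot be absorbed, I will state $\beta=\delta+\delta^*$ and carry it through; this changes only constants inside $\tilde{\mathcal O}$.

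\textbf{Step 2: apply the contextual mapping theorem.} The cited result takes $N$ token-wise $(r_{\min},r_{\max},\beta)$-separated sequences with vocabulary $\mathcal V$. It gives a one-layer, one-head Softmax attention with residual connection, $W_O W_V$ of small norm and $W_K^\top W_Q$ chosen so that a scalar projection $v^\top x$ separates the vocabulary. The resulting map is an $(r,\gamma)$-contextual mapping with
\[
r=r_{\max}+\tfrac{\beta}{4},\qquad \gamma=\tilde{\mathcal O}\Big(\frac{r_{\max}^2\,\beta}{N^2 |\mathcal V|^{\cdot}}\exp\big(-\mathrm{poly}(|\mathcal V|, r_{\max}/r_{\min})/\beta\big)\Big).
\]
The exact polynomial is the one given by their theorem. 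We substitute $N=M^{d_0}$, $|\mathcal V|\le LM^{d}$, $r_{\max}/r_{\min}=L+1$ and $\beta=\sqrt d(\delta+\delta^*)$. This yields $r=\sqrt d(L+1)+\sqrt d(\delta+\delta^*)/4$ immediately. For $\gamma$, we bound every polynomial factor in $L$ and $M$ crudely by $L^{6}M^{4d_0}$ in the exponent, with the prefactors hidden in $\tilde{\mathcal O}$. The result is $\gamma=\tilde{\mathcal O}(\exp(-L^6M^{4d_0}/(\delta+\delta^*)))$.

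\textbf{Main obstacle.} The hardest part is the bookkeeping in Step 2: matching their exponent precisely, so that the stated powers $L^6$ and $M^{4d_0}$ are genuine upper bounds on the Softmax-scaling terms. This matters because the $k\neq l$ versus $k=l$ cases contribute different vocabulary-separation margins.

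A second point needs checking. Their proof projects tokens onto a direction $v$ chosen by a probabilistic argument so that the projections are $\beta/(\mathrm{poly}(|\mathcal V|)\sqrt d)$-separated. I will verify that this lemma applies verbatim to our input set. It should, since their argument only uses properties 1–2 of Definition~\ref{definition:Token-wise-separatedness}.
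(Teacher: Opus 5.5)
Your plan matches the paper's: check token-wise separatedness of $\mathcal{G}^{p}_{\delta,\delta^{*}}$, then apply the single-layer Softmax contextual-mapping lemma adapted from \cite{kajitsuka2023transformers} with $r_{\min}=\sqrt d$ and $r_{\max}=\sqrt d(L+1)$. Your worry about the case $k=l$ is justified, and no convention absorbs it. The paper only checks $\norm{G_{:,k}-G_{:,l}}_2\ge\sqrt d(\delta+\delta^*)$ for $k\neq l$. Two grid sequences that differ in a single entry contain same-position tokens at distance exactly $\delta+\delta^*$, so for $d\ge2$ the first sentence of the statement is false as written. Commit to $\beta=\delta+\delta^*$ and substitute that value, not $\sqrt d(\delta+\delta^*)$, in your Step 2. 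Item 1 then holds with the stated $r$, because $r_{\max}+\beta/4\le\sqrt d(L+1)+\sqrt d(\delta+\delta^*)/4$. Item 2 still holds with $\gamma$ of the stated order, as shown below.

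Step 2 needs two further corrections.

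\emph{Missing hypothesis.} The cited lemma also assumes that no sequence repeats a token; the Boltzmann-operator separation step depends on it. State this hypothesis and note that your $k\neq l$ computation already supplies it: every coordinate of $G_{:,k}-G_{:,l}$ has modulus at least $\delta+\delta^*>0$. This is exactly the role of the positional encoding $E$.

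\emph{Formula for $\gamma$.} The form you recalled is inaccurate: it does not involve $N$, and $r_{\max}^2$ sits in the denominator of the prefactor. The lemma gives
\[
\gamma=\frac{r_{\min}(\beta\log L)^{2}}{4dr_{\max}^{2}(2\log L+3)(\abs{\mathcal V}+1)^{4}}\exp\Big(-(\abs{\mathcal V}+1)^{4}\frac{2dr_{\max}^{2}(2\log L+3)}{\beta r_{\min}}\Big).
\]
With $\beta=\delta+\delta^*$, the exponent becomes $(\abs{\mathcal V}+1)^4\cdot 2d^{3/2}(L+1)^2(2\log L+3)/(\delta+\delta^*)$. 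Your count $\abs{\mathcal V}=LM^{d}$ is the correct one. The paper uses the overcount $LM^{d_0}$, which is also admissible because the construction only needs an upper bound on $\abs{\mathcal V}$; that overcount is where $M^{4d_0}$ comes from. With $\abs{\mathcal V}=LM^{d}$ and $L,M\ge2$, the ratio $(LM^{d_0}+1)/(LM^{d}+1)\ge 2^{d+1}/3$ more than compensates for the extra powers of $d$ caused by the smaller $\beta$. Hence both the prefactor and the exponent are at least as favorable as in the $\sqrt d(\delta+\delta^*)$ computation, and $\gamma=\tilde{\mathcal O}\big(\exp(-L^{6}M^{4d_0}/(\delta+\delta^*))\big)$ follows in the same loose sense the statement uses. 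The bookkeeping you identified as the main obstacle therefore closes immediately.
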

	The case $\mathcal{V}^{(i)}\neq\mathcal{V}^{(j)}$ in the second condition includes the scenario of different sequences sharing some of same tokens. Consequently, $f_{\mathcal{T}2}$ maps all columns of grid points $G\in\mathcal{G}_{\delta,\delta^{*}}^{p}$ to the unique vectors named as contextual ids. With the help of the contextual mapping $f_{\mathcal{T}2}$, next step of designing a one-to-one value mapping is feasible. 
	\begin{lemma}[Value Mapping $f_{\mathcal{T}3}$]
		\label{lemma:value-mapping}
		Suppose $Y_{G}$ is the output matrix of $G$. Let $\delta, \delta^{*}\in(0,1),M=\lfloor\frac{1}{\delta+\delta^{*}}\rfloor, d,L\in\mathbb{N}_{+}$ and $d_{0}=dL$, there exists a value mapping $f_{\mathcal{T}3}:\mathbb{R}^{d\times L}\to\mathbb{R}^{d\times L}$ composed of $2LM^{d_{0}}+1$ feed-forward layers such that $f_{\mathcal{T}3}\circ f_{\mathcal{T}2}(G)=Y_{G},\forall G\in\mathcal{G}_{\delta,\delta^{*}}^{p}.$
	\end{lemma}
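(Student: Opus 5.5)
We construct $f_{\mathcal{T}3}$ token by token, using only feed-forward layers with residual connections; the attention layers in these blocks are switched off by setting $W_O^i=0$. By Lemma~\ref{lemma:true-contextual-mapping-of-module-utilized}, the set of all contextual ids
\[\mathcal{C}=\{f_{\mathcal{T}2}(G)_{:,k}: G\in\mathcal{G}^{p}_{\delta,\delta^{*}},\,k\in[L]\}\]
is finite, with at most $LM^{d_0}$ elements. Its elements lie in the ball of radius $r$ and are pairwise $\gamma$-separated. Moreover, whenever two grid points $G\neq G'$ differ, or we compare different columns, the corresponding ids differ. So each id $c\in\mathcal{C}$ determines a unique pair $(G,k)$ with $c=f_{\mathcal{T}2}(G)_{:,k}$, and hence a unique target column $(Y_G)_{:,k}$. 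It therefore suffices to build a token-wise map $\phi:\mathbb{R}^d\to\mathbb{R}^d$ with $\phi(c)=(Y_G)_{:,k}$ for every $c\in\mathcal{C}$. Feed-forward layers act column-wise, so applying $\phi$ to every column gives the claim.

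\textbf{Step 1: scalar projection.} Pick a unit vector $u\in\mathbb{R}^d$ such that the projections $u^\top c$, $c\in\mathcal{C}$, are pairwise distinct with a positive gap $\eta>0$. Such a $u$ exists because only finitely many hyperplanes must be avoided; a $u$ in the spirit of \citep{kajitsuka2023transformers} gives an explicit $\eta$ in terms of $\gamma$. Sort the projections as $s_1<\dots<s_N$ with $N\le LM^{d_0}$.

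\textbf{Step 2: one target at a time.} Use one coordinate of the residual stream, or the current token itself, as a carrier. For each id $c_j$ we spend two feed-forward blocks.
\begin{itemize}
\item The first block adds, via ReLU neurons with weights $\pm u^\top/\eta$, a trapezoidal bump $b_j(x)=\sigma_R(\cdot)-\sigma_R(\cdot)\ldots$ of the scalar $u^\top x$. It is chosen to equal $1$ at $s_j$ and $0$ at all other $s_i$; this is possible because of the gap $\eta$. The bump is multiplied by the displacement vector $(Y_G)_{:,k}-c_j$ and added through $W_2$ to the residual.
\item The second block is needed because this addition changes the token, which would corrupt later bumps. To avoid this, we instead write the displacement into the token only after all bumps are computed. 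Alternatively, and more simply, we process the ids in decreasing order of $s_j$ and check that the modified token's projection moves out of the range of all later bumps. The second block of each pair is used to push the modified value safely away, for example by adding a large offset along $u$ that is removed at the end.
\end{itemize}
The final extra layer removes the accumulated offsets and adjusts the bias $b_2$, giving $2N+1\le 2LM^{d_0}+1$ layers in total.

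\textbf{Main obstacle.} The hard part is the interaction with the residual connection. Once a token has been mapped to its target, later blocks must not move it again, and tokens that are not yet processed must stay exactly at their ids. I would handle this by moving each processed token into a region where $u^\top x$ lies outside $[s_1-\eta,s_N+\eta]$: for instance, add $R u$ with $R>2r$ together with the displacement, keep the target stored in the other directions, and subtract $Ru$ uniformly in the last layer. Every processed token then receives the same shift $Ru$ in the projection, so a single final bias can undo it. Checking that the width stays $\mathcal{O}(d)$ and that the bumps vanish outside their windows completes the proof.
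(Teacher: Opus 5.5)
Your construction is correct once one constant is fixed, but it recognises tokens differently from the paper. The version that works is the one in your last paragraph: a single layer per id, $x\mapsto x+b_j(u^\top x)\,(y_j-c_j+Ru)$, where $b_j$ is a hat of half-width $\eta$ and peak $1$ at $s_j$.

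\textbf{Comparison with the paper.} For each anchor $c=f_{\mathcal{T}2}(G^{(i)})_{:,k}$ the paper uses two layers. The first applies coordinate-wise trapezoids $\sigma_{\zeta1}$ that add a label $\sqrt{d}r$ to every entry within $\gamma_1/(2\sqrt d)$ of the matching entry of $c$; this relies on the $\ell_\infty$ gap $\gamma/\sqrt d$. The second fires only on a column carrying all $d$ labels (column-sum threshold $\sigma_{\zeta2}$), writes $Y_{G:,k}+(r+K)\mathds{1}_d-c$ there, and erases labels with $\sigma_{\zeta3}$. The parking idea is the same as yours: your $Ru$ plays the role of $(r+K)\mathds{1}_d$.

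What your route buys:
\begin{itemize}
\item One layer per id suffices. Your ``second block'' is unnecessary; pad with identity blocks to reach $2LM^{d_0}+1$.
\item Each hat vanishes exactly at every other projected id, so untouched tokens never move and no clean-up is needed. By contrast, the paper's $\sigma_{\zeta3}$ restores an entry exactly only if it received the full label or none. It therefore implicitly needs that no entry of a non-target id lands on a ramp of $\sigma_{\zeta1}$.
\end{itemize}
What the paper's route buys is a detection window of width about $\gamma/\sqrt d$ taken directly from the contextual mapping. Your $\eta$, made explicit via Lemma~\ref{Lemma:single-cm-core-inequality}, is only $\gtrsim\gamma/(\sqrt d\,L^2M^{2d_0})$. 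That is harmless here, since only exactness on the grid is claimed.

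\textbf{Details to repair.}
\begin{itemize}
\item $R>2r$ is not enough in general. You need $u^\top y+R\ge s_N+\eta$ for every target column $y$, so take $R>r+\eta+\max_{G,k}\norm{(Y_G)_{:,k}}_2$. The lemma puts no bound on $Y_G$, and even in the application $\sqrt d K$ may exceed $r-\eta$.
\item Well-definedness of $\phi$ requires $(G,k)\mapsto f_{\mathcal{T}2}(G)_{:,k}$ to be injective. When $G^{(i)}\neq G^{(j)}$ but $G^{(i)}_{:,k}=G^{(j)}_{:,k}$, Lemma~\ref{lemma:true-contextual-mapping-of-module-utilized} separates the ids only if $\mathcal{V}^{(i)}\neq\mathcal{V}^{(j)}$. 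This holds because the positional encoding puts every entry of column $m$ in $[m,m+1)$, so the vocabulary set determines $G$. You should state this.
\item If you plug this into Theorem~\ref{theorem:approximation-upper-bounds}, you also need an $N$-independent bound on $f_{\mathcal{T}3}$ off the grid. Processing the ids in decreasing order of $s_j$ gives it. With the corrected $R$ every push is upward, so each column meets at most two hats.
\end{itemize}
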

	Anchored on the unique contextual id $f_{\mathcal{T}2}(G)_{:,k}, k\in[L]$, a subunit composed of two feed-forward layers can map this column to the target $Y_{G:,k}$. This indicates that our constructions of $LM^{d_{0}}$ subunits identify target positions of inputs and assign the corresponding output values. Additionally, the extra layer processes the accumulated calculation results by preceding subunits. 
	\begin{proposition}[Verification]
		\phantomsection
		\label{proposition:verification}
		Set $\varepsilon>0$ and any $\bar{f}\in\mathcal{\bar{\mathcal{H}}}_{d,L}^{\alpha}$ with $d=d_{0}L$. By choosing $\delta=\mathcal{O}(\varepsilon^{\frac{1}{\alpha}})$ and $\delta^{*}=\mathcal{O}(\varepsilon^{\frac{2}{d_{0}}+\frac{1}{\alpha}})$, there exists a Transformer network $f_{\mathcal{T}}$ composed of at most $\mathcal{O}(\varepsilon^{-\frac{d_{0}}{\alpha}})$ blocks such that $\norm{\bar{f}-f_{\mathcal{T}}}_{2}<\varepsilon.$
	\end{proposition}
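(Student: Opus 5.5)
The plan is to assemble the pieces already stated. Set $f_{\mathcal{T}}=f_{\mathcal{T}3}\circ f_{\mathcal{T}2}\circ f_{\mathcal{T}1}$ and split the error by the triangle inequality:
\[
\norm{\bar f-f_{\mathcal{T}}}_2\le \norm{\bar f-f_\delta}_2+\norm{f_\delta-f_{\mathcal{T}}}_2 .
\]
Apply Lemma \ref{lemma:errors-holderclass-piecewiseclass} with accuracy $\varepsilon/2$. This fixes $\delta=c_1\varepsilon^{1/\alpha}$ and $\delta^*=c_2\varepsilon^{2/d_0+1/\alpha}$, where the constants depend on $K,d_0$, and it yields $f_\delta\in\mathcal{H}(\delta)$ with the first term at most $\varepsilon/2$. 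We may take the outputs $Y_G$ to be bounded entrywise by $K$, for instance by using cell values of $\bar f$, which the Hölder bound controls.

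For the second term, split $[0,1]^{d\times L}$ into the good set $\Omega=\bigcup_G\mathcal{S}_G$ and its complement. On $\Omega$, Lemma \ref{Lemma:quantization module} sends $X\in\mathcal{S}_G$ to the positional grid point $G+E$. Lemma \ref{lemma:true-contextual-mapping-of-module-utilized} turns these points into separated contextual ids, and Lemma \ref{lemma:value-mapping} then outputs exactly $Y_G$. Hence $f_{\mathcal{T}}=f_\delta$ on $\Omega$, up to measure-zero cell boundaries.

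On the complement, $f_\delta=0$, and I would argue that $f_{\mathcal{T}}$ is uniformly bounded there by a constant $C=C(K,d_0)$ that does not depend on $\varepsilon$. The quantization layers are clipped piecewise-linear maps with bounded range, the attention layer is a convex combination of value vectors plus a residual, so it stays bounded, and each value-mapping subunit adds a bounded, localized bump. I would verify this last point by making each subunit's ReLU bump trapezoidal with height at most $|Y_G|$ and noting that at most one bump fires at a time. This gives
\[
\norm{f_\delta-f_{\mathcal{T}}}_2^2\le C^2\,\mu(\Omega^c)\le C^2\,d_0\,\frac{\delta^*}{\delta+\delta^*},
\]
using that each coordinate's bad set has measure at most $M\delta^*+(1-M(\delta+\delta^*))\le\delta^*/(\delta+\delta^*)+(\delta+\delta^*)$. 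This is the step I expect to be the main obstacle. Pure gaps give $\delta^*/\delta\asymp\varepsilon^{2/d_0}$, but the leftover boundary strip of width up to $\delta+\delta^*$ must also be controlled. To handle it, I would choose $\delta+\delta^*=1/M$ exactly, so the strip vanishes; this changes $\delta,\delta^*$ only by constant factors. The bound then becomes $C^2d_0\varepsilon^{2/d_0}$, which is not yet $\varepsilon^2/4$. The paper's scaling suggests the intended estimate is the full product measure $1-(M\delta)^{d_0}$ combined with a sharper bound on $f_{\mathcal{T}}$ off $\Omega$. My first attempt would be to tune $c_2$ small and use the exact product form $(M\delta)^{d_0}=(1-\delta^*/(\delta+\delta^*))^{d_0}$, so that $1-(M\delta)^{d_0}\lesssim d_0 c_2\varepsilon^{2/d_0}$. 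If that is insufficient, I would shrink $\delta^*$ further; only $M$ enters the depth count, so this costs at most constants in the block count. The hypothesis $d_0>2\alpha$ is used only to make the exponent choices consistent.

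The block count is then
\[
d_0M+1+(2LM^{d_0}+1)=\mathcal{O}(M^{d_0}),
\]
with $M\asymp(\delta+\delta^*)^{-1}\asymp\varepsilon^{-1/\alpha}$, so the total is $\mathcal{O}(\varepsilon^{-d_0/\alpha})$. Finally, feed-forward-only blocks count as Transformer blocks whose attention heads are zeroed, leaving only the residual, and vice versa for the attention-only block. The stacked pieces are therefore standard blocks in $\mathcal{T}^D$.
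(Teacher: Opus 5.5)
Your outline is the paper's own proof. It uses the triangle inequality through $f_\delta$ and exact agreement $f_{\mathcal{T}}=f_\delta$ on $\Omega=\bigcup_G\mathcal{S}_G$ from the three module lemmas. It then bounds $f_{\mathcal{T}}$ uniformly off $\Omega$ (the paper asserts $\norm{f_{\mathcal{T}}(X)}_2\le 2\sqrt{d_0}(4d_0+K)$) and counts $d_0M+2LM^{d_0}+2$ blocks.

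The one step where you depart from the paper is the measure of $\Omega^c$, and there you are right and the paper is not. The paper uses $(1-M\delta)^{d_0}$, which is the measure of the set where \emph{every} coordinate is bad. But $\Omega^c$ needs only \emph{one} bad coordinate, so it has measure $1-(M\delta)^{d_0}\ge 1-M\delta$. The paper admits $\delta^*$ as large as order $\varepsilon^{2/d_0}\delta$, so for $d_0\ge2$ its argument delivers error terms of order $\varepsilon^{1/d_0}$, not $\varepsilon$.

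The same flaw sits in the proof of Lemma~\ref{lemma:errors-holderclass-piecewiseclass}. Every $f_\delta\in\mathcal{H}(\delta)$ vanishes on $\Omega^c$, so for $f\equiv K$ one has $\norm{\bar f-f_\delta}_2\ge\sqrt{d_0}K\,\mu(\Omega^c)^{1/2}$. You should therefore not invoke that lemma at the stated scaling as a black box. Your guess that the paper relies on a sharper bound for $f_{\mathcal{T}}$ off $\Omega$ is also off: along this route, no bound on $f_{\mathcal{T}}$ can rescue that scaling.

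Consequently your first attempt, tuning $c_2$, cannot work for $d_0\ge2$, since $d_0c_2\varepsilon^{2/d_0}\gg\varepsilon^2$. Commit to the fallback:
\begin{itemize}
\item Choose $M\asymp\varepsilon^{-1/\alpha}$ large enough for the H\"older term.
\item Set $\delta+\delta^*=1/M$ exactly. This is genuinely needed: otherwise the leftover strip can add about $d_0\varepsilon^{1/\alpha}$ to $\mu(\Omega^c)$, which is too much when $\alpha>1/2$.
\item Take $\delta^*\le c\,\varepsilon^2\delta/(d_0C^2)$, where $C$ bounds both $\sqrt{d_0}K$ and $f_{\mathcal{T}}$ off $\Omega$.
\end{itemize}
Then $\mu(\Omega^c)\le d_0\delta^*/(\delta+\delta^*)$ controls both terms. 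This choice still satisfies $\delta^*=\mathcal{O}(\varepsilon^{2/d_0+1/\alpha})$ as an upper bound. It leaves $M$, and hence the block count, unchanged, and the hypothesis $d_0>2\alpha$ is never used.

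To get $C$ you do not need to redesign the value mapping. Your claim that at most one bump fires is not evident for the paper's two-layer label-and-sum subunits off $\Omega$, where partial labels and intermediate values of $\sigma_{\zeta2}$ occur. It suffices that $\sigma_{\zeta1},\sigma_{\zeta2},\sigma_{\zeta3}$ have ranges independent of $\delta^*$. Each subunit then moves every entry by $\mathcal{O}(1)$, with constants depending only on $d,L,K$, so $C\lesssim LM^{d_0}$ at worst. Since $\delta^*$ does not enter $M$, taking $\delta^*$ polynomially smaller in $\varepsilon$ absorbs even this factor.
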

	This result is the reshaped version of Theorem \ref{theorem:approximation-upper-bounds}, see proof in Appendix \ref{Proof-of-proposition-theorem-upper-bound}.
	\subsection{Novelty of Constructions}
	The novelty of our construction is two-fold. A direct comparison with prior methods is provided in Table \ref{tab:differences-of-contrsuctions}.
	
	\textbf{\textit{\romannumeral1}}) For the quantization module $f_{\mathcal{T}1}$ and value mapping $f_{\mathcal{T}3}$ composed of feed-forward layers, our constructions are novel and unique since all residual connections and ReLU operators are kept, thereby reflecting the inductive bias of standard Transformers used in practice. For the constructions detail, refer to Appendix \ref{appendix:proof-of-quantization-module} \& \ref{appendix:proof-of-value-mapping}.
	
	\textbf{\textit{\romannumeral2}}) For the contextual mapping $f_{\mathcal{T}2}$ composed of a single standard Softmax-based self-attention layer, the construction has shown that one activated attention layer is sufficient for strong theoretical approximation power. This result underscores the essential role of the attention mechanism's separation enhancement property in distinguishing tokens. 
	
To complement our theoretical framework, further discussions on the connections between theoretical assumptions and practice are provided in Appendix \ref{appendix:further-discussions}.
	
	\section{Approximation Lower Bound with VC-dimension}
	\label{section:approximation-lower-bound-main-text}
	In this section, we derive the approximation error lower bound via a VC-dimension upper bound of the Transformer. 
	\begin{definition}[VC-dimension]
		\label{def:VC-dimension}
		Given an indicator function class $\mathcal{M}$ from $\mathcal{X}$ to $\{0,1\}$. The VC-dimension of $\mathcal{M}$, denoted as $\mathrm{VCdim}(\mathcal{M})$, is the largest integer $n$ such that there exists a sample set $S=\left\{x_{1}, \ldots, x_{n}\mid x_{i}\in \mathcal{X}, i\in[n]\right\}$ where the restriction $\mathcal{M}|_{S}=\{[m(x_{1}), \ldots, m(x_{n})]\mid m \in \mathcal{M}, x_{i}\in S\}$ has cardinality $\abs{\mathcal{M}|_{S}} = 2^{n}$. If $n$ is arbitrarily large, then $\mathrm{VCdim}(\mathcal{M})$ is infinite.
	\end{definition}
	In other words, the VC-dimension quantifies the hypothesis space's maximum capacity to shatter sample points. It is worth analyzing the upper bound of the VC-dimension.
	\begin{theorem}[{\citealp[Theorem 8.14]{anthony2009neural}}]
		\label{Theorem:vc-dim-of-number-operations}
		Suppose $m$ is a function from $\mathbb{R}^{\omega}\times\mathbb{R}^{d_{0}}$ to $\{0,1\}$ and let
		\[
		\mathcal{M} = \left\{x\mapsto m(a,x)\mid a\in\mathbb{R}^{\omega}, x\in\mathbb{R}^{d_{0}} \right\}
		\]
		be the class determined by $m$ with $\omega$ parameters and input $x$. Suppose that $m$ can be computed by an algorithm that takes as input the pair $(a,x)$ and returns $m(a,x)$ after no more than $t$ operations of the following types:
		\begin{enumerate}
			\setlength{\itemsep}{0pt}
			\item the exponential function $\alpha\mapsto e^{\alpha}$ on real numbers.
			\item the arithmetic operations $+$, $-$, $\times$, and $/$ on real numbers.
			\item the jumps conditioned on $>$, $\geq$, $<$, $\leq$, $=$, and $\neq$ comparisons of real numbers.
			\item output $0$ or $1$.
		\end{enumerate}
		Then 
		\begin{equation}
			\label{eq:VC-dimension-Theorem8.14-barlett}
			\mathrm{VCdim}(\mathcal{M})\leq t^2\omega(\omega + 19\log_2(9\omega)).
		\end{equation}
	\end{theorem}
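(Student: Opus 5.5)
This is Theorem 8.14 of \citet{anthony2009neural}, so in the paper it is a cited result. If one wanted to reprove it, I would follow the classical route of Goldberg--Jerrum and Karpinski--Macintyre: turn the algorithm into a Boolean combination of sign conditions on Pfaffian functions of the parameters, then count the sign patterns those functions can realize.

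\textbf{Step 1: unfold the algorithm into a computation tree.} Fix the input $x$ and view the algorithm as a computation on the parameter vector $a\in\mathbb{R}^{\omega}$. Each intermediate quantity is built from earlier ones by $+,-,\times,/$ or $\exp$. Each conditional jump tests the sign of such a quantity. The computation therefore unfolds into a tree of depth at most $t$, and for each $x$ the value $m(a,x)$ is a Boolean function of the signs of at most $t$ functions $\phi_{1}(a,x),\dots,\phi_{t}(a,x)$ evaluated along the path.

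\textbf{Step 2: make the test functions Pfaffian.} Introduce one auxiliary variable for each exponential and each division; at most $t$ are needed. Call the exponentials $y_{j}=e^{u_j}$ and the quotients $z_{j}=p/q$. With these variables every tested quantity is a polynomial of degree at most $2^{t}$. Alternatively, the tested quantities can be viewed directly as Pfaffian functions of chain length at most $t$ and degree bounded polynomially in $t$. Khovanskii's theorem then bounds the number of connected components of their nonsingular zero sets.

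\textbf{Step 3: count sign patterns and conclude.} Take $n$ shattered points $x_{1},\dots,x_{n}$. The number of distinct dichotomies is at most the number of sign patterns of the $nt$ functions $a\mapsto\phi_{i}(a,x_{k})$ on $\mathbb{R}^{\omega}$. Applying the Warren/Khovanskii-type bound (in the form of Karpinski--Macintyre) gives
\[
2^{n}\le 2^{(\omega t)^{2}/2}\,\bigl(C\,n t\bigr)^{\omega}\, (\text{poly}(t))^{\omega}.
\]
Taking logarithms gives $n\lesssim \omega^{2}t^{2}+\omega\log_2(nt)$. Solving this inequality for $n$, using the elementary lemma that $n\le A+B\log_2 n$ implies $n\le 2A+2B\log_2 B$, and tracking constants carefully yields $n\le t^{2}\omega(\omega+19\log_2(9\omega))$.

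\textbf{Main obstacle.} The delicate step is the Pfaffian component count in Step 2. Khovanskii's bound contributes the $2^{O((\omega t)^2)}$ factor, which is where the $t^{2}\omega^{2}$ term originates. One must also handle degenerate (non-generic) zero sets by a perturbation argument. The explicit constants $19$ and $9$ then come only from careful bookkeeping in the final logarithmic inequality. Since the paper uses this theorem as a black box, I would simply cite \citet[Theorem 8.14]{anthony2009neural}.
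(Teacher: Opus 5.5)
Your proposal matches the paper: the paper gives no proof of this statement and simply invokes \citet[Theorem 8.14]{anthony2009neural} as a black box. Your optional sketch follows the Karpinski--Macintyre route used for that theorem (computation tree, Pfaffian/Khovanskii component bound, sign-pattern counting), with one small correction: because the path through the computation tree depends on $a$, you must count the tests at every branching node, up to $2^{t}$ per input rather than $t$, which only adds about $\omega t$ to the logarithm and is absorbed by the $t^{2}\omega^{2}$ term.
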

	VC-dimension upper bound \eqref{eq:VC-dimension-Theorem8.14-barlett} is determined by the total number of operations and parameters constituting the model space $\mathcal{M}$. Particularly, it takes into account the exponential function operations, which captures the operational characteristics of Transformers' Softmax operators.
	
	By convention of \cite{bartlett2019nearly}, for a real-valued function class $\mathcal{M}$, we extend the VC-dimension by defining $\mathrm{VCdim}(\mathcal{M})\coloneq\mathrm{VCdim}(\mathrm{sgn}(\mathcal{M}))$ where 
	\[ \mathrm{sgn}(\mathcal{M})\coloneq\{\mathrm{sgn}(m)\mid m\in\mathcal{M}\},~\text{and}~\mathrm{sgn}(x)=\mathds{1}[x>0].\]
	The application of $\mathrm{sgn}(\cdot)$ only introduces a single thresholding operation (an extra jump on ``$>$'' in item $\textit{3}$ of Theorem \ref{Theorem:vc-dim-of-number-operations}), while leaves the parameter number $\omega$ unchanged. Therefore, we can directly apply the Theorem \ref{Theorem:vc-dim-of-number-operations} to the standard Transformers.
	\begin{theorem}[Lower Bound of Approximation Error]
		\label{Theorem:lower-bounds-approximation-error}
		For any $\varepsilon>0$, a Transformer architecture $\mathcal{T}_{R}^{D}$ with $D$ blocks, which is capable of approximating any function $f\in\mathcal{H}_{d_{0}}^{\alpha}(\mathcal{X},K)$ with smooth index $\alpha\in(0,1]$ under error $\varepsilon$, must possess $t$ operations, $\omega$ parameters and $D$ blocks satisfying
		\begin{equation}
			\label{eq:lower-bound}
			\!\! D^4\gtrsim t^2\omega(\omega + 19\log_2(9\omega)) \ge \mathrm{VCdim}(\mathcal{T}_{R}^{D})\gtrsim \varepsilon^{-\frac{d_{0}}{\alpha}}.
		\end{equation}
		Equivalently, the block number $D\gtrsim \varepsilon^{-\frac{d_{0}}{4\alpha}}$.
	\end{theorem}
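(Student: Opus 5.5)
The proof establishes the chain \eqref{eq:lower-bound} from its two ends. For the middle inequality, Theorem \ref{Theorem:vc-dim-of-number-operations} is applied to $\mathrm{sgn}(\mathcal{T}_R^{D})$. A scalar output coordinate is first fixed, say the first one, so that we work with a real-valued class. Every $g\in\mathcal{T}_R^{D}$ is computed by an algorithm built from a fixed list of primitive operations. The reshape and flatten layers $R,R^{-1}$ are merely re-indexing, and the embeddings are linear maps. Matrix products in the attention layers are arithmetic. The Softmax uses $\exp$, sums and a division. ReLU is a single comparison jump, $\sigma_R(z)=z$ if $z>0$ and $0$ otherwise. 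Finally, $\mathrm{sgn}$ adds one more comparison and the output of $0$ or $1$.

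Since the width parameters $d,L,m,l,r_1,r_2$ are fixed (by the bounded-width remark, the inner dimensions are $\mathcal{O}(d_0)$), each block uses $\mathcal{O}(1)$ operations and $\mathcal{O}(1)$ parameters, with constants depending only on $d_0$. Summing over the $D$ blocks gives $t\lesssim D$ and $\omega\lesssim D$. The bound of Theorem \ref{Theorem:vc-dim-of-number-operations} is therefore $t^2\omega(\omega+19\log_2(9\omega))\lesssim D^2\cdot D\cdot(D+\log D)\lesssim D^4$. This yields both the leftmost inequality and the middle inequality $\mathrm{VCdim}(\mathcal{T}_R^{D})\le t^2\omega(\omega+19\log_2(9\omega))$.

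The right end, $\mathrm{VCdim}\gtrsim\varepsilon^{-d_0/\alpha}$, is the main obstacle. I would argue it by a Yarotsky-type bump construction adapted to the $L_2$ norm. Partition $[0,1]^{d_0}$ into $N\asymp\varepsilon^{-d_0/\alpha}$ cubes of side $h\asymp\varepsilon^{1/\alpha}$, and place a scaled bump $\phi_j(x)=K h^{\alpha}\phi((x-c_j)/h)$ in each cube. For every sign pattern $s\in\{\pm1\}^N$, the function $f_s=\sum_j s_j\phi_j$ lies in $\mathcal{H}^{\alpha}$ with amplitude of order $h^{\alpha}\asymp c\,\varepsilon$. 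Here the constant $c$ in $h$ is chosen so that the bump height exceeds the approximation error by a fixed factor.

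The $L_2$ norm does not give pointwise control, so I would argue on average. If $\norm{f_s-g}_2<\varepsilon$, then on all but a small fraction of cubes (fewer than $N/4$, say) the $L_2$ error restricted to the cube is less than a constant times $\varepsilon h^{d_0/2}$. On those cubes, $g$ must have the sign of $s_j$ on a set of positive measure near the bump center. This shatters "most" of $N$ points chosen adaptively. To turn it into a genuine shattering, I would either use a combinatorial argument (a class realizing sign patterns agreeing with all $s$ on $3N/4$ coordinates has VC-dimension $\gtrsim N$, via a Sauer–Shelah counting bound $2^{cN}\le(eN/V)^V$), or translate the input (the input embedding allows shifts) to select good points. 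Either route gives $\mathrm{VCdim}(\mathcal{T}_R^{D})\gtrsim N\asymp\varepsilon^{-d_0/\alpha}$.

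Combining the two ends gives $D^4\gtrsim\varepsilon^{-d_0/\alpha}$, that is, $D\gtrsim\varepsilon^{-d_0/(4\alpha)}$.
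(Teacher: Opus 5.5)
Your treatment of the two left-hand inequalities is the paper's: Theorem~\ref{Theorem:vc-dim-of-number-operations} applied to $\mathrm{sgn}$ of a scalar-output class, followed by $t,\omega\lesssim D$. Your justification, that fixed width gives $\mathcal{O}(1)$ operations and weights per block, is cleaner than the paper's, which reads $t$ and $\omega$ off its own upper-bound construction and then posits $t\sim\omega\sim D$. Note, though, that fixed width is a standing assumption on the architecture here; the bounded-width remark only concerns the construction and does not supply it.

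The right-hand inequality is where you diverge. The paper uses the same sign-pattern bump family. It gets pointwise control by asserting that, for bounded functions on a compact domain, the $L^2$ and $L^\infty$ errors coincide, so $\abs{g_\phi-f_\phi}\le\frac{82}{81}\varepsilon$ off a null set. Then any point in the central $\frac{1}{10}$-subcube of each cell has the right sign for every $\phi$ at once, and a fixed set of $M^{d_0}$ points is shattered outright. You are right not to rely on that claim, since a small $L^2$ error allows narrow spikes. But your replacement leaves a gap. The positive-measure sets where $\mathrm{sgn}(g_s)$ matches $s_j$ depend on $s$, so you never obtain a fixed point set, and the Sauer--Shelah count you invoke presupposes fixed coordinates. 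The translation option does not repair this: a single global shift cannot move $N$ cell-wise good sets onto prescribed points simultaneously.

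What is missing is a probabilistic choice of points.
\begin{itemize}
\item Draw $x_j$ uniformly from the central subcube $C_j$ of cell $j$, where $\abs{f_s}\ge 0.9A$ and $A$ is the bump height. Note $\abs{C_j}=10^{-d_0}/N$.
\item A sign mismatch at $x_j$ forces $\abs{f_s(x_j)-g_s(x_j)}\ge 0.9A$. By Markov's inequality applied to $\abs{f_s-g_s}^2$, the expected number of mismatches is at most $\norm{f_s-g_s}_2^2/(0.81A^2\abs{C_j})\lesssim 10^{d_0}N\varepsilon^2/A^2$. This is at most $N/20$ once $A\ge C_{d_0}\varepsilon$.
\item Hence, for each $s$, the probability of more than $N/10$ mismatches is at most $1/2$. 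So the expected number of $s$ with at most $N/10$ mismatches is at least $2^{N-1}$, and some fixed tuple $(x_1,\ldots,x_N)$ attains this.
\item Each realized pattern lies within Hamming distance $N/10$ of every $s$ it serves. So these $s$ produce at least $2^{N-1}/\sum_{i\le N/10}\binom{N}{i}\ge 2^{cN}$ distinct patterns on that tuple.
\item Sauer--Shelah then gives $\mathrm{VCdim}(\mathcal{T}_R^D)\gtrsim N\asymp\varepsilon^{-d_0/\alpha}$.
\end{itemize}
With this step added, your route is a valid and more careful alternative to the paper's $L^\infty$ argument. Without it, the VC lower bound is not established.
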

	\begin{proof}[Proof Sketches]
		The proof is divided into two steps. For details, see Appendix \ref{appendix:lower-bound}.
		
		\textbf{Step $\bm{1.}$ VC-dimension of Transformers.} Due to the detail-oriented structure of constructed Transformers for the approximation upper bound, we can directly derive that $t$ and $\omega$ share the same order of magnitude as the block number $D$, i.e., $t\sim\omega\sim D$. With Theorem \ref{Theorem:vc-dim-of-number-operations}, we have $\mathrm{VCdim}(\mathcal{T}_{R}^{D})\lesssim D^4$.
		
		\textbf{Step $\bm{2.}$ Error lower bounds.} \textbf{\textit{\romannumeral1}}) The VC-dimension formally quantifies the maximum memorization capacity of the hypothesis space. \textbf{\textit{\romannumeral2}}) Once the approximation of a special H\"{o}lder classification function on $\mathcal{O}(\varepsilon^{-\frac{d_{0}}{\alpha}})$ points is achieved, we show that Transformers can at least shatter those points, which means $\mathrm{VCdim}(\mathcal{T}_{R}^{D})\gtrsim\varepsilon^{-\frac{d_{0}}{\alpha}}$.
	\end{proof}
	Theorem \ref{Theorem:lower-bounds-approximation-error} establishes the theoretic limit of standard Transformers in approximation theory, which implies such architectures must possess at least $\Omega(\varepsilon^{-\frac{d_{0}}{4\alpha}})$ blocks. 
	
	\paragraph{Tightness Analysis.} Based on Theorem \ref{theorem:approximation-upper-bounds} \& \ref{Theorem:lower-bounds-approximation-error}, there is a gap between the lower and upper bounds, i.e., $ \varepsilon^{-\frac{d_{0}}{4\alpha}}\lesssim D\lesssim \varepsilon^{-\frac{d_{0}}{\alpha}}$, which suggests  scope for improvement:
	
	\textbf{\textit{\romannumeral1}}) The current token-to-token value mapping of Lemma \ref{lemma:value-mapping} fails to exploit the higher smoothness of the target function class when $\alpha>1$, and thus increases the Transformer blocks unnecessarily. This stems from the inherent challenge of constructing the piece-wise polynomials via parallel sequence feed-forward layers in standard Transformers, which is a prevalent problem in the field.
	
	\textbf{\textit{\romannumeral2}}) The VC-dimension upper bound  \eqref{eq:VC-dimension-Theorem8.14-barlett} should be tighter. \citet{shen2022optimal} has proven the optimal lower bound only for deep ReLU FNNs via a nearly-tight VC-dimension result. However, even today, upper and lower bounds of networks incorporating other operators, especially the Sigmoid-like\footnote{Operators involve exponential operations (e.g., $e^{x}$ or $e^{-x}$).} ones, remain inconsistent.
	
	While a gap remains between the upper and lower bounds, to the best of our knowledge, this work establishes the first quantitative characterization of the approximation capability of standard Transformers, serving as a landmark that can inspire a host of subsequent studies. We leave tightening these bounds as a promising direction for future research.

	\begin{remark}[VC-dimension Lower Bound]
		Note that the class of fixed-width, depth $D$ feed-forward networks, denoted by $\mathcal{F}^D$, is a strict subset of the Transformer function class $\mathcal{T}_R^{D}$. This structural inclusion inherently yields $\mathrm{VCdim}(\mathcal{T}^D_R) \ge \mathrm{VCdim}(\mathcal{F}^D)$. Since \citet{bartlett2019nearly} established that $\mathrm{VCdim}(\mathcal{F}^D) = \Omega(D^2)$, it follows that $\mathrm{VCdim}(\mathcal{T}^D_R)= \Omega(D^2)$. Therefore, fully closing the error gap to match the desired approximation lower bound of $\Omega(\varepsilon^{-d_0/(2\alpha)})$, a tighter VC-dimension upper bound of Transformers approaching this $\Omega(D^2)$ rate is required.
	\end{remark}
	
	\section{Applications in a General Regression Task}
	\label{section:applications-in-general-tasks}
	In this section, we apply obtained results to a general regression task, demonstrating Transformer's approximation capabilities and effectiveness in real-world settings . 
	\paragraph{Regression Tasks.}
	
	Suppose the random variable pair $z=(x,y)\sim\mathbb{P}$, where $\mathbb{P}$ is the joint distribution on $\mathcal{X}\times\mathcal{Y}$ with $\mathcal{X}=[0,1]^{d_{0}}$ and $\mathcal{Y}\subset\mathbb{R}.$ Based on the finite i.i.d dataset $\mathbb{D}=\{z^{(1)},\ldots,z^{(N)}\mid z^{(i)}=(x^{(i)},y^{(i)})\}$, the regression task is to estimate the target H\"{o}lder function $f^{*}:\mathbb{R}^{d_{0}}\to\mathbb{R}$,
	\[f^{*}=\!\underset{f\in\mathcal{H}_{d_{0},1}^{\alpha}\!(\mathcal{X},K)}{\arg\min}\mathcal{L}(f)=\!\underset{f\in\mathcal{H}_{d_{0},1}^{\alpha}\!(\mathcal{X},K)}{\arg\min}\mathbb{E}_{z}\left[\left(f(x)-y\right)^{2}\right],\]
	where $\mathcal{L}$ is the population risk under the distribution.
	
	With the $N$ sample pairs, we define the empirical risk minimizer $\hat{f}$ over the Transformer function class $\mathcal{T}_{R}^{D}$ (with embedding dimensions $d_{x}=d_{0}$ and $ d_{y}=1$) as
	\begin{equation}
		\label{eq:best-empirical-solution}
		\hat{f}(x)=\underset{f\in\mathcal{T}_{R}^{D}}{\arg\min} ~\widehat{\mathcal{L}}(f)=\frac{1}{N}\sum_{i=1}^{N}\left (f(x^{(i)})-y^{(i)}\right )^{2}.
	\end{equation}
	By introducing an optimization solver $s(\tau)$, we obtain the actual estimator $f_{s(\tau)}\in\mathcal{T}_{R}^{D}$ such that
	\begin{equation}
		\label{eq:best-numerical-solution}
		\widehat{\mathcal{L}}(f_{s(\tau)})\le\widehat{\mathcal{L}}(\hat{f})+\tau,
	\end{equation}
	where $\tau$ is the optimization error from the solver $s(\tau)$. Therefore, the expectation of excess risk which evaluates the average performance of the estimator $f_{s(\tau)}$ is defined as
	\begin{equation}
		\label{eq:expectation-of-excess-risk}
		\mathbb{E_{D}}\left[\mathcal{E}(f_{s(\tau)})\right]=\mathbb{E_{D}}\left[\mathcal{L}(f_{s(\tau)})-\mathcal{L}(f^{*})\right].
	\end{equation}
	Further we can decompose \eqref{eq:expectation-of-excess-risk} into three parts\footnote{See the error decomposition details in Appendix \ref{Appendix:error-decom-information}.} like
	\begin{equation}
		\label{eq:decomposition-of-expectation-of-excess-risk}
		\mathbb{E_{D}}\left[\mathcal{E}(f_{s(\tau)})\right]\le \mathcal{E}_{\mathrm{sta}}+2\mathcal{E}_{\mathrm{app}}^2+2\tau,
	\end{equation} 
	where $\mathcal{E}_{\mathrm{app}}=\underset{p\in\mathcal{H}_{d_{0},1}^{\alpha}}{\sup}\underset{f\in\mathcal{T}_{R}^{D}}{\inf}\norm{f-p}_{2}$ is the  approximation error and the statistical error is defined as .
	\begin{equation}
		\label{eq:statistical-error}
		\mathcal{E}_{\mathrm{sta}}\coloneq\mathbb{E_{D}}[\mathcal{L}(f_{s(\tau)})-2\widehat{\mathcal{L}}(f_{s(\tau)})+\mathcal{L}(f^{*})]
	\end{equation}  The upper bound of $\mathcal{E}_{\mathrm{app}}$ is given in Theorem \ref{theorem:approximation-upper-bounds}. Now we derive the upper bound of $\mathcal{E}_{\mathrm{sta}}$.
	\begin{theorem}[Upper Bound of Statistical Error]
		\label{Theorem:Upper-Bound-of-Statistical-Error}
		Let $\mu>0$ and sample size $N\in\mathbb{N}_{+}$. For the Transformer function class $\mathcal{T}_{R}^{D}$ and target $f^{*}\in\mathcal{H}^{\alpha}_{d_{0},1}([0,1]^{d_{0}},K)$, the statistical error $\mathcal{E}_{\mathrm{sta}}=\mathbb{E_{D}}[\mathcal{L}(f_{s(\tau)})-2\widehat{\mathcal{L}}(f_{s(\tau)})+\mathcal{L}(f^{*})]$ is upper bounded by
		\begin{equation}
			\label{eq:Upper-Bound-of-Statistical-Error}
			\!\!\mathcal{E}_{\mathrm{sta}}\lesssim\frac{(16K+8)(D^4\log(eKND^{-4}/\mu)+1)}{N}+3\mu,
		\end{equation}
		where $\mu$ is the adjustable radius parameter.
	\end{theorem}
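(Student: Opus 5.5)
The plan is to control $\mathcal{E}_{\mathrm{sta}}$ with the standard offset/ratio-type symmetrization argument for least squares, using a covering number of $\mathcal{T}_R^D$ bounded through the pseudo-dimension. The pseudo-dimension in turn is bounded by the VC-dimension estimate $\lesssim D^4$ from Theorem \ref{Theorem:vc-dim-of-number-operations}, as in Step 1 of the proof of Theorem \ref{Theorem:lower-bounds-approximation-error}.

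First I will reduce to bounded losses. I assume $|y|\le K$ (or clip), and truncate the Transformer outputs at level $K$; truncation is one extra comparison per output and does not increase the risk. Then the loss $\ell_f(z)=(f(x)-y)^2$ satisfies $0\le\ell_f\le 4K^2$. I write $g_f(z)=\ell_f(z)-\ell_{f^*}(z)$, so that
\[
\mathcal{L}(f)-2\widehat{\mathcal{L}}(f)+\mathcal{L}(f^*)=\mathbb{E}g_f-2\widehat{\mathbb{E}}g_f+(\text{mean-zero terms in }f^*).
\]
The $f^*$-terms have zero expectation, since $f^*$ is fixed. The key structural fact is the Bernstein-type variance condition $\mathbb{E}g_f^2\le c(K)\,\mathbb{E}g_f$. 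This holds because $f^*$ is the regression function: $\mathbb{E}g_f=\|f-f^*\|_{L^2(\mathbb{P}_x)}^2$, and $|g_f|\lesssim K|f-f^*|$.

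Next I discretize. Let $\mathcal{F}_\mu$ be a $\mu$-cover of $\mathcal{T}_R^D$ in sup-norm on the sample points (equivalently, on a ghost sample), with size
\[
\log|\mathcal{F}_\mu|\lesssim \mathrm{Pdim}\,\log\!\big(eKN/(\mu\,\mathrm{Pdim})\big)
\]
by the Anthony--Bartlett bound (Theorem 12.2). Since $\mathrm{Pdim}(\mathcal{T}_R^D)\le \mathrm{VCdim}$ of the thresholded class $\{(x,s)\mapsto \mathrm{sgn}(f(x)-s)\}$, which adds one parameter and one operation, I get $\mathrm{Pdim}\lesssim D^4$. Substituting gives the factor $D^4\log(eKND^{-4}/\mu)$. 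Replacing $f_{s(\tau)}$ by its nearest cover element $f_\mu$ changes each of $\mathcal{L}$ and $2\widehat{\mathcal{L}}$ by at most $O(K\mu)$; these contributions are the $3\mu$ terms after rescaling $\mu$.

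For the cover element I apply, for each fixed $g$ in the finite class, Bernstein's inequality to $\mathbb{E}g-2\widehat{\mathbb{E}}g$. Writing $\mathbb{E}g-2\widehat{\mathbb{E}}g=2(\mathbb{E}g-\widehat{\mathbb{E}}g)-\mathbb{E}g$ and using the variance condition, I obtain
\[
\mathbb{P}\big(\mathbb{E}g-2\widehat{\mathbb{E}}g>t\big)\le \exp\!\big(-Nt/(cK)\big),\qquad c\approx 16K+8 \text{ after tracking constants}.
\]
A union bound over $\mathcal{F}_\mu$, followed by integrating the tail, yields
\[
\mathbb{E}\max_{g}\big(\mathbb{E}g-2\widehat{\mathbb{E}}g\big)\lesssim (16K+8)\,\frac{\log|\mathcal{F}_\mu|+1}{N}.
\]
Combining this with the covering estimate gives \eqref{eq:Upper-Bound-of-Statistical-Error}.

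The main obstacle is the pseudo-dimension bound $\mathrm{Pdim}\lesssim D^4$. Theorem \ref{Theorem:vc-dim-of-number-operations} gives $t^2\omega(\omega+19\log_2 9\omega)$, so I need both $t\lesssim D$ and $\omega\lesssim D$ for the architecture. This requires fixing $d,L$, and the widths (bounded width, as in the Bounded Width remark) as constants, and counting exp/arithmetic operations per block. The softmax uses $O(L^2 m)$ exponentials and divisions per head, and the ReLU uses comparisons, so each block costs $O(1)$ in $D$. The $\log\omega$ factor is then absorbed into the $\lesssim$. Care is also needed because the cover must be taken for the truncated class, which requires confirming that truncation preserves the operation count.
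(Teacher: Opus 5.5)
Your proposal follows the paper's proof essentially step for step. The paper also passes to $g_f=\ell_f-\ell_{f^*}$, gets the Bernstein variance condition from $f^*=\mathbb{E}[y\mid x]$, applies Bernstein plus a union bound over a $\mu$-cover and integrates the tail (Lemma~\ref{lemma:statistical-erorr-bounded-by-covering-number}), and bounds $\log\mathcal{N}_\infty(\mu,\mathcal{T}_R^D,N)\lesssim D^4\log(eKND^{-4}/\mu)$ through $\mathrm{Pdim}=\mathrm{VCdim}$ of the thresholded class, Theorem~\ref{Theorem:vc-dim-of-number-operations} and Anthony--Bartlett's Theorem 12.2 (Lemma~\ref{lemma:upper-bound-of-covering-number}). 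Your truncation and $|y|\le K$ assumption supply the boundedness and Lipschitz constant that the paper merely asserts; tracked honestly, they give a prefactor of order $K^2$ and a $\mu$-term of order $K\mu$ rather than exactly $16K+8$ and $3\mu$, which is immaterial for fixed $K$.

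One caveat, which the paper's proof shares: a cover built on the sample points is itself data-dependent. So neither the ``Bernstein for each fixed $g$, then union bound'' step nor the estimate $|\mathcal{L}(f_{s(\tau)})-\mathcal{L}(f_\mu)|=O(K\mu)$ follows as written. The standard repair is to take the cover on the $2N$ points of the sample together with an independent ghost sample and symmetrize, as in Theorem 11.4 of Gy\"orfi et al.\ (2002), which leaves the form of the bound unchanged up to constants.
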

	Theorem \ref{Theorem:Upper-Bound-of-Statistical-Error} characterizes the finite-sample error of estimator $f_{s(\tau)}$ offered by a fixed Transformer. Intuitively, this error approaches $0$ as $N\to+\infty$ with a decaying $\mu$. Notably, proving this statistical error upper bound amounts to bounding the uniform covering number of Transformers given a finite sample size, which leverages our derived novel VC-dimension upper bound. (For details, refer to Appendix \ref{appendix:section-statistical-error-bound} \& Remark \ref{remark:VC-dimension-utilization}).	 
	\begin{theorem}[Convergence Rate of Excess Risk]
		\label{theorem:covergence-rate-of-excess-risk}
		Assume the regression function $f^{*}\in\mathcal{H}^{\alpha}_{d_{0},1}([0,1]^{d_{0}},K)$ for some $\alpha\in(0,1]$ and $K>0$. Let $f_{s(\tau)}$ be the estimator over the i.i.d samples $\{(x_{i},y_{i})\}_{i=1}^{N}$, the excess risk bound holds,
		\begin{equation}
			\label{eq:bound-of-excess-risk-result}
			\mathbb{E_{D}}\left[\mathcal{E}(f_{s(\tau)})\right]-2\tau\lesssim N^{-\frac{\alpha}{2d_{0}+\alpha}}\log(N)+N^{-\frac{\alpha}{2d_{0}+\alpha}},
		\end{equation}
		where $\tau$ is the optimization error.
	\end{theorem}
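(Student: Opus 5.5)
We plug the two upper bounds into the error decomposition \eqref{eq:decomposition-of-expectation-of-excess-risk}, $\mathbb{E_{D}}[\mathcal{E}(f_{s(\tau)})]\le \mathcal{E}_{\mathrm{sta}}+2\mathcal{E}_{\mathrm{app}}^2+2\tau$, and then choose the block number $D$ (equivalently the accuracy $\varepsilon$) as a function of $N$ to balance the two terms. Moving $2\tau$ to the left-hand side, it suffices to bound $\mathcal{E}_{\mathrm{sta}}+2\mathcal{E}_{\mathrm{app}}^2$.

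\textbf{Approximation term.} For the scalar-output case $d_y=1$ we use Theorem \ref{theorem:approximation-upper-bounds}. The coordinate-wise construction with a linear output embedding $\mathcal{E}_{\mathrm{out}}$ selecting one coordinate carries over verbatim. Given $D$ blocks, we invert $D\asymp\varepsilon^{-d_0/\alpha}$ to get $\mathcal{E}_{\mathrm{app}}\lesssim D^{-\alpha/d_0}$, so $2\mathcal{E}_{\mathrm{app}}^2\lesssim D^{-2\alpha/d_0}$. One point needs checking here. Theorem \ref{Theorem:Upper-Bound-of-Statistical-Error} is stated for a fixed class $\mathcal{T}_R^D$ whose outputs are bounded (the factor $16K+8$ signals outputs clipped to $[-K,K]$ or so). We must therefore confirm that the approximant from Theorem \ref{theorem:approximation-upper-bounds} can be taken in that bounded class. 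We would either note that the value mapping outputs lie in $[-K,K]$, or compose with a clipping, which costs $O(1)$ extra feed-forward layers and does not increase the $L_2$ error, since $f^*$ is bounded by $K$.

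\textbf{Statistical term.} We take Theorem \ref{Theorem:Upper-Bound-of-Statistical-Error} with $\mu=1/N$, giving
$$\mathcal{E}_{\mathrm{sta}}\lesssim \frac{D^4\log(eKN^2D^{-4})+1}{N}+\frac{3}{N}\lesssim \frac{D^4\log N}{N}+\frac1N.$$

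\textbf{Balancing.} We must make $D^4\log N/N$ and $D^{-2\alpha/d_0}$ of the same order. Setting $D^4\asymp N^{\theta}$ gives statistical error $N^{\theta-1}\log N$ and approximation error $N^{-\theta\alpha/(2d_0)}$. Equating the exponents, $\theta-1=-\theta\alpha/(2d_0)$, yields $\theta=2d_0/(2d_0+\alpha)$, and the common rate is $N^{-\alpha/(2d_0+\alpha)}$. Concretely, we choose $D\asymp N^{d_0/(2(2d_0+\alpha))}$. Then the statistical term is $\lesssim N^{-\alpha/(2d_0+\alpha)}\log N$ and the approximation term is $\lesssim N^{-\alpha/(2d_0+\alpha)}$, which matches the two summands in \eqref{eq:bound-of-excess-risk-result}. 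Since $D$ must be a positive integer, we round up, which changes only constants.

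\textbf{Main obstacle.} The delicate step is the one flagged in the approximation paragraph: the approximation theorem and the statistical bound must refer to the same class $\mathcal{T}_R^D$. This requires the same boundedness (truncation) convention, and the operation and parameter counts behind the $D^4$ VC bound must hold for the networks produced in the upper-bound construction. The latter is exactly Step 1 of Theorem \ref{Theorem:lower-bounds-approximation-error}, where $t\sim\omega\sim D$, so we would invoke it. Everything else is algebraic exponent matching.
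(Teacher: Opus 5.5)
Your proposal is correct and follows the paper's proof: you use the same error decomposition, the same inversion $\mathcal{E}_{\mathrm{app}}^2\lesssim D^{-2\alpha/d_0}$, and the same choice $D\asymp N^{d_0/(4d_0+2\alpha)}$. The only difference is that you take $\mu=1/N$ where the paper takes $\mu=D^{-2\alpha/d_0}$. This is immaterial, since both choices keep the logarithm of order $\log N$ and the $3\mu$ term at or below $N^{-\alpha/(2d_0+\alpha)}$. Your remark about clipping the outputs so the approximant lies in a bounded class addresses a point the paper passes over silently.
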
 
	\begin{proof}[Proof Sketches]
		With Theorem \ref{theorem:approximation-upper-bounds} \& \ref{Theorem:Upper-Bound-of-Statistical-Error} and equation  \eqref{eq:decomposition-of-expectation-of-excess-risk}, set $\mu = D^{-\frac{2\alpha}{d_{0}}}$ and $D\asymp N^{\frac{d_{0}}{4d_{0}+2\alpha}}$, the result is proved.
	\end{proof} 
	For this regression task, (\ref{eq:bound-of-excess-risk-result}) implies a trade-off between the approximation error (model complexity) and the statistical error (generalization ability) under finite $N$ samples. With a fixed sample size $N$, the excess risk initially decreases and subsequently increases as the model complexity $D$ grows. The optimal estimation case is achieved when the relation $D\asymp N^{\frac{d_{0}}{4d_{0}+2\alpha}}$ holds. This insight provides crucial guidance for practical engineering design of Transformers. 
	\begin{remark}[Sub-optimality]
		Note that the minimax optimal convergence rate for deep ReLU networks in the regression setting is $N^{-\frac{2\alpha}{d_{0}+2\alpha}}$ \cite{Johannes2020}. Our derived rate in \eqref{eq:bound-of-excess-risk-result} is sub-optimal, primarily attributed to the employed loose VC-dimension upper bound \eqref{eq:lower-bound}. Specifically, the nearly-tight VC-dimension upper bound for fixed-width deep ReLU networks scales as $\tilde{\mathcal{O}}(D^{2})$ \cite{bartlett2019nearly} while ours scales as $\mathcal{O}(D^{4})$. Narrowing this gap via aspects discussed in ``Tightness Analysis'' part after Theorem \ref{Theorem:lower-bounds-approximation-error} will promisingly yield a tighter convergence rate.
	\end{remark}
	\begin{remark}[Optimization Error]
		Bounding the optimization error $\tau$ in \eqref{eq:bound-of-excess-risk-result} is also necessary for a complete analysis of the excess risk, yet it remains highly challenging in theory at present. Importantly, our analysis holds independently of the optimization process, and we leave the analysis of optimization error $\tau$ for further work.
	\end{remark}
	\section{Conclusion}
	\label{section:conclusion}
	This paper establishes error upper and lower bounds for standard Transformers (with Softmax operators, ReLU activations and residual connections) in approximating the bounded H\"{o}lder class, thereby quantitatively revealing the fundamental expressive power of Transformers. For the upper bound $\mathcal{O}(\varepsilon^{-{d_{0}}/{\alpha}})$ on the block number, we explicitly construct three key modules of standard Transformers: the quantization module, contextual mapping and value mapping; For the lower bound, a new yet non-optimal conclusion reveals that at least $\Omega(\varepsilon^{-{d_{0}}/({4\alpha})})$ blocks are required to achieve $\varepsilon$ accuracy. The error gap of two bounds highlights substantial room for approach improvement and can inspire the subsequent studies. We then derive excess risk rates of Transformers in regression tasks involving the H\"{o}lder class. 
	
	Our analysis reveals several key theoretical challenges for further work: \textit{\romannumeral1}) Simplifying constructions of the value mapping and utilizing higher smoothness of target functions for tighter upper bounds. \textit{\romannumeral2})  Tighter VC-dimension upper bounds to obtain (nearly) optimal approximation lower bounds. \textit{\romannumeral3}) Role of compositions of multiple self-attention layers in the approximation tasks.

	\section*{Acknowledgements}
	We would like to thank the anonymous reviewers and the Area Chair for their constructive comments and suggestions, which have helped improve this paper. This work is supported by the National Key Research and Development Program of China (Grant No. 2025YFA1017800), the National Natural Science Foundation of China (Grant Nos. 12125103, 12371424, 12371441, U24A2002, 12561160122, 12526216), the Natural Science Foundation of Hubei Province (2024AFE006) and the Fundamental Research Funds for the Central Universities.

	\section*{Impact Statement}
	This paper presents theoretical results regarding the approximation capabilities of standard Transformer architectures, contributing to the interpretability theory of Transformers. As a foundational work in statistical learning theory, its immediate societal consequences are primarily indirect. Understanding the theoretical model complexity requirements may help reduce the computational and environmental costs of training. And we acknowledge that our bounds provide theoretical limits and may differ from the empirical scale of current highly complex large language models. 
	% In the unusual situation where you want a paper to appear in the
	% references without citing it in the main text, use \nocite
	\bibliography{paper_reference}
	\bibliographystyle{icml2026}
	
	%%%%%%%%%%%%%%%%%%%%%%%%%%%%%%%%%%%%%%%%%%%%%%%%%%%%%%%%%%%%%%%%%%%%%%%%%%%%%%%
	%%%%%%%%%%%%%%%%%%%%%%%%%%%%%%%%%%%%%%%%%%%%%%%%%%%%%%%%%%%%%%%%%%%%%%%%%%%%%%%
	% APPENDIX
	%%%%%%%%%%%%%%%%%%%%%%%%%%%%%%%%%%%%%%%%%%%%%%%%%%%%%%%%%%%%%%%%%%%%%%%%%%%%%%%
	%%%%%%%%%%%%%%%%%%%%%%%%%%%%%%%%%%%%%%%%%%%%%%%%%%%%%%%%%%%%%%%%%%%%%%%%%%%%%%%
	\newpage
	\appendix
	\onecolumn
	
	\section*{Appendix Index}
	\begin{itemize}
		\item Appendix \ref{Appendix:additional-discussion} provides a summary table of notations for quick-reference, along with supplementary technical remarks.
		\item Appendix \ref{appendix:approximation-error-upper-bound} derives the error upper bound on the H\"{o}lder class and discusses the underlying theoretical assumptions.
		\item Appendix \ref{appendix:lower-bound} establishes the error lower bound by first analyzing the VC-dimension upper bound of Transformers.
		\item Appendix \ref{appendix:regression-task} details the application of our theory to the general regression task.
	\end{itemize}
	\section{Additional Technical Notes}
	\label{Appendix:additional-discussion}
	\subsection{The Summary of Notations}
	\label{appendix:the-summary-of-notations}
	\begin{table*}[h]
		\centering
		\caption{The list of notations through the paper.}
		\renewcommand{\arraystretch}{1.35}
		{\small \begin{tabular}{c l}
				\toprule
				\makebox[2.5cm][c]{\textbf{Symbols}}&\makebox[13.5cm][l]{\textbf{Descriptions}}\\
				\midrule
				$\mathcal{H}_{d_{x},d_{y}}^{\alpha}(\mathcal{X},K)$& The H\"{o}lder class with smoothness $\alpha$ in Definition \ref{definition:Holder-class} and $\mathcal{H}_{d_{0}}^{\alpha}(\mathcal{X},K)\coloneq\mathcal{H}_{d_{0},d_{0}}^{\alpha}(\mathcal{X},K)$. \\
				$\bar{\mathcal{H}}_{d,L}^{\alpha}$&Reshaped sequence form of $\mathcal{H}_{d_{0}}^{\alpha}(\mathcal{X},K)$ defined in \eqref{eq:reshaped-holder-class} with $d_{0}=dL$.\\
				$\mathcal{H}(\delta)$&Intermediate piece-wise constant function class defined in \eqref{eq:piecewise-constant function}.  \\
				$\mathcal{T}_{R}^{D}$& The Transformer function class with $D$ blocks and shape layer $R$ in \eqref{eq:Transformer-network-for-function-appro}.\\
				\midrule
				\makebox[0.4cm][c]{$\delta$}~$\&$~\makebox[0.4cm][c]{$\delta^{*}$}& The cell width $\delta$ and the separation width $\delta^{*}$ in Definition \ref{def:grid_cube}.\\
				$\mathcal{G}_{\delta,\delta^{*}}~\&~\mathcal{G}_{\delta,\delta^{*}}^{p}$& Quantization grid in Definition \ref{def:grid_cube} and its positional counterpart $\mathcal{G}_{\delta,\delta^{*}}^{p}=\mathcal{G}_{\delta,\delta^{*}}+E$ in Lemma \ref{Lemma:quantization module}.\\
				\midrule
				\makebox[0.6cm][c]{$\mathcal{E}_{\rm app}$}~$\&$~\makebox[0.6cm][l]{$f_{\rm app}$}&The approximation error $\mathcal{E}_{\rm app}$ defined in \eqref{eq:problem-setting-approximation-error} and the best approximator $f_{\rm app}$ defined in \eqref{eq:best-approximator}. \\
				
				\makebox[0.6cm][c]{$\mathcal{E}_{\rm sta}$}~$\&$~\makebox[0.6cm][l]{$\hat{f}$}& The statistical error $\mathcal{E}_{\rm sta}$ defined in \eqref{eq:statistical-error} and the best empirical estimator $\hat{f}$ in \eqref{eq:best-empirical-solution}.\\
				
				\makebox[0.6cm][c]{$\tau$}~$\&$~\makebox[0.6cm][l]{$f_{s(\tau)}$}& The optimization error $\tau$ endowed by the solver $s(\tau)$ and the best numerical solution in \eqref{eq:best-numerical-solution}.\\
				\makebox[0.3cm][c]{$l$}~$\&$~\makebox[0.3cm][c]{$\widehat{\mathcal{L}}$}~$\&$~\makebox[0.3cm][c]{$\mathcal{L}$}& The square loss function $l(x,y)=(x-y)^{2}$, the empirical loss $\widehat{\mathcal{L}}$ in \eqref{eq:best-empirical-solution} and the population risk $\mathcal{L}$.\\
				\makebox[0.4cm][c]{$\mu$}~$\&$~\makebox[0.4cm][c]{$N$}& The covering radius $\mu$ in \eqref{eq:Upper-Bound-of-Statistical-Error} and the sample size $N$. \\
				\midrule
				$\lesssim~\&~\gtrsim$& $A\lesssim B$ and $B\gtrsim A$ denote $A\le cB$ for some absolute constant $c>0$. \\
				$\asymp$& $A\asymp B$ means $c_{1}A\le B\le c_{2}A$ for some absolute constants $c_{1},c_{2}>0$.\\
				
				$\mathcal{O}(\cdot)~\&~\tilde{\mathcal{O}}(\cdot)$& $\mathcal{O}(\cdot)$ is used to hide constant factors, while $\tilde{\mathcal{O}}(\cdot)$ignores logarithmic factors.\\
				
				\makebox[0.7cm][c]{$\mathds{1}_{d}$}~$\&$~\makebox[0.7cm][c]{$\mathds{1}_{d\times L}$}& The $d$-dimensional vector and the $d\times L$ matrix with all one entries.\\
				$\text{VCdim}(\mathcal{F})$&VC-dimension of the function class $\mathcal{F}$, see Definition \ref{def:VC-dimension}.\\
				\bottomrule
		\end{tabular}}
		\label{tab:summary-of-notations}
	\end{table*}
	
	\subsection{Technical Remarks}
	\paragraph{Reshape and Flatten Layers.} In Definition \ref{definition:reshape-layher} and \eqref{eq:Transformer-network-for-function-appro}, we introduced the reshape layer and its reverse. We now provide an example of such layers transforming representations between vector form and sequence form.
	
	\textbf{Example} (Column Reshape). Suppose $d_{0},d,L\in\mathbb{N}_{+}$ with $ d_{0}=dL$. Let $\{e_{k}\}_{k=1}^{d_{0}}$ be the standard basis vectors of $\mathbb{R}^{d_{0}}$ (where $e_{k}$ has a $1$ at entry $k$ and $0$ elsewhere). Let $\{E_{ij}\}_{i=1,j=1}^{i=d,j=L}$ be the canonical basis matrices of $\mathbb{R}^{d\times L}$ (where $E_{ij}$ has a $1$ at entry $(i,j)$ and $0$ elsewhere). The following constructed $R:\mathbb{R}^{d_{0}}\to\mathbb{R}^{d\times L}$ can be a reshape layer
	\begin{equation}
		\label{eq:example-of-reshape layer}
		R(e_{k}) = E_{i(k),j(k)},~\text{where}~ i(k)=((k-1)~\text{mod}~ d)+1,~\text{and}~j(k)=\lfloor\frac{k-1}{d}\rfloor+1.
	\end{equation}
	and its reverse $R^{-1}$ is defined as $R^{-1}(E_{ij})=e_{(j-1)d+i}$. Therefore, for a general  vector $x=\sum_{k=1}^{d_{0}}x_{k}e_{k}$, the output of reshape layer \eqref{eq:example-of-reshape layer} is $R(x)=\sum_{k=1}^{d_{0}}x_{k}E_{i(k),j(k)}$. For a matrix $X\in\mathbb{R}^{d\times L}$, $R^{-1}(X)=\sum_{i=1}^{d}\sum_{j=1}^{L}X_{ij}\cdot e_{(j-1)d+i}$.
	\section{Approximation Error Upper Bound}
	\label{appendix:approximation-error-upper-bound}
	\paragraph{Roadmap.} In this section, we derive the error upper bound for the proposed Transformer architecture on the H\"{o}lder class, followed by an extended discussion on the underlying theoretical assumptions.
	\begin{itemize} 
		\item \textbf{Regularity Analysis} (Section \ref{app-subsec-regularity-of-holder} \& \ref{Appendix:Proof-of-lemma:errors-holderclass-piecewiseclass}): Establishes the regularity properties of the H\"{o}lder class, which facilitates the reduction to piecewise constant function approximation in the sequence form. 
		\item \textbf{Module Construction} (Section \ref{app-sub-sec-module-for-upper-bound}): Constructs the three core Transformer modules. 
		\item \textbf{Main Result} (Section \ref{Proof-of-proposition-theorem-upper-bound}): Combines the above results to derive the final approximation error upper bound and the corresponding corollary.
		\item \textbf{Extended Discussions} (Section \ref{appendix:further-discussions}): Justifies the theoretical simplifications while exploring the implications for real-world models. 
		\end{itemize}
	\subsection{Regularity of the H\"{o}lder Class}
	\label{app-subsec-regularity-of-holder}
	The H\"{o}lder class has the following smoothness property.
	\begin{lemma}[Smoothness Property of H\"{o}lder Class]
		\label{lemma:The-Smoothness-Property-of Holder-Classes}
		Let $d_{x},d_{y}, K \in\mathbb{N}_{+}, \mathcal{X}\subseteq\mathbb{R}^{d_{x}}$ and $\alpha\in(0,1]$, for any H\"{o}lder function ${f}\in\mathcal{H}^{\alpha}_{d_{x},d_{y}}(\mathcal{X},K)$ and $x, y\in\mathcal{X}$, we have
		\begin{equation}
			\label{eq:constant-c-alpha-K-d0}
			\left \Vert{f}({x})-{f}({y})\right \Vert_{2}\le \sqrt{d_{y}}K\left \Vert{x}-{y}\right \Vert_{2}^{\alpha}\coloneq c_{d_{y},K}\norm{x-y}_{2}^{\alpha}.
		\end{equation}
	\end{lemma}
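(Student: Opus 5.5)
The argument is a direct, componentwise consequence of Definition \ref{definition:Holder-class} in the case $\alpha\in(0,1]$. Since $\alpha\le 1$, we have $r=\lfloor\alpha\rfloor=0$ when $\alpha<1$ and $r=1$ when $\alpha=1$. We treat these two cases separately, but in both we aim for the same scalar estimate
\[
\abs{f_k(x)-f_k(y)}\le K\norm{x-y}_2^{\alpha}\quad\text{for every } k\in[d_y].
\]
Once this holds, summing squares over the $d_y$ output coordinates gives
\[
\norm{f(x)-f(y)}_2=\Big(\sum_{k=1}^{d_y}\abs{f_k(x)-f_k(y)}^2\Big)^{1/2}\le \sqrt{d_y}\,K\norm{x-y}_2^{\alpha},
\]
which is exactly \eqref{eq:constant-c-alpha-K-d0} with $c_{d_y,K}=\sqrt{d_y}K$.

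\textbf{Case $\alpha\in(0,1)$.} Here $r=0$ and $\alpha_0=\alpha$. The only multi-index with $\norm{\bm{n}}_1=0$ is $\bm n=0$, and $\partial^{0}f_k=f_k$ by convention. The second condition of Definition \ref{definition:Holder-class} then reads $\abs{f_k(x)-f_k(y)}\le K\norm{x-y}_2^{\alpha}$ for $x\neq y$. The case $x=y$ is trivial, so the scalar estimate holds.

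\textbf{Case $\alpha=1$.} Here $\alpha=r+\alpha_0$ with $\alpha_0\in(0,1]$ forces $r=0$ and $\alpha_0=1$, since $r=1$ would require $\alpha_0=0$. So the second condition is again a direct Lipschitz bound on $f_k$ with constant $K$, and the same one-line argument applies. If instead one reads $r=\lfloor\alpha\rfloor=1$ literally, then the first condition only bounds the first derivatives by $K$. In that reading one would apply the mean value theorem along the segment from $y$ to $x$, which needs a convex domain such as $[0,1]^{d_x}$, and this yields a factor $\sqrt{d_x}K$ instead of $K$.

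\textbf{Main obstacle.} There is no real difficulty; the only subtle point is this bookkeeping about $r$ at $\alpha=1$. Under the definition's constraint $\alpha_0\in(0,1]$, the decomposition of $\alpha=1$ is $r=0$, $\alpha_0=1$, and the proof goes through with constant $K$ as claimed.
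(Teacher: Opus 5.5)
Your proposal is correct and matches the paper's proof: you bound each component by $\abs{f_k(x)-f_k(y)}\le K\norm{x-y}_2^{\alpha}$ using the $r=0$ H\"{o}lder condition, then sum squares over the $d_y$ outputs to get the constant $\sqrt{d_y}K$. The paper also treats only the case $r=0$. Your remark that $\alpha_0\in(0,1]$ forces $r=0$, $\alpha_0=1$ at $\alpha=1$ makes explicit what the paper leaves implicit.
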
 
	\begin{proof}[Proof:]
		Without loss of generality, we take the component $f_{1}$ of $f=(f_{1},\ldots,f_{d_{y}})$ for instance. In the case of $r=0$, we have $\alpha=\alpha_0$. Then since 
		\[\abs{f_{1}(x)-f_{1}(y)}\le K\cdot \norm{x-y}_{2}^{\alpha_{0}},~\mathrm{for~any~}x,y\in\mathcal{X},\] 
		we can get $\norm{f(x)-f(y)}_{2}\le \left (\sum_{i=1}^{d_{y}}\abs{f_{i}(x)-f_{i}(y)}^{2}\right )^{1/2}\le\sqrt{d_{y}}K\left\Vert x-y\right\Vert_{2}^{\alpha_{0}}.$ 
	\end{proof}
	With equivalent transformations between the vector norm $\norm{\cdot}_{2}$ and matrix norm $\norm{\cdot}_{F}$, we obtain following corollary for the reshaped H\"{o}lder class $\bar{f}\in\bar{\mathcal{H}}_{d,L}^{\alpha}$ defined in \eqref{eq:reshaped-holder-class} when $d_{x}=d_{y}=d_{0}$.
	\begin{corollary}[Regularity of Reshaped H\"{o}lder Class]
		Suppose $d,L,d_{0}=dL\in\mathbb{N}_{+}$ and $\alpha\in(0,1]$. For any two sequences $X,Y\in\bar{\mathcal{X}}=[0,1]^{d\times L}$ and any reshaped H\"{o}lder function $\bar{f}\in\bar{\mathcal{H}}_{d,L}^{\alpha}$ w.r.t. $f\in\mathcal{H}_{d_{0}}^{\alpha}$, we have $\left \Vert \bar{f}({X})-\bar{f}({Y})\right \Vert_{2}\le c_{d_{0},K}\left \Vert {X}-{Y}\right \Vert_{F}^{\alpha},$ where $c_{d_{0},K}=\sqrt{d_{0}}K.$
	\end{corollary}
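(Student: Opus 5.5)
The corollary follows by applying Lemma \ref{lemma:The-Smoothness-Property-of Holder-Classes} with $d_x=d_y=d_0$ and transporting the statement through the reshape layer $R$. The only ingredient is that $R$ and $R^{-1}$ are isometries between $(\mathbb{R}^{d_0},\norm{\cdot}_2)$ and $(\mathbb{R}^{d\times L},\norm{\cdot}_F)$. This is immediate from Definition \ref{definition:reshape-layher} and the column-reshape example: $R$ is a bijective rearrangement of coordinates, sending the standard basis $\{e_k\}$ onto the canonical basis $\{E_{ij}\}$. Hence for every $x\in\mathbb{R}^{d_0}$,
\[
\norm{R(x)}_F^2=\sum_{i,j}R(x)_{ij}^2=\sum_{k}x_k^2=\norm{x}_2^2 .
\]
Since $R$ is linear, the same identity holds for differences, and likewise for $R^{-1}$.

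Given $X,Y\in\bar{\mathcal{X}}$, I set $x=R^{-1}(X)$ and $y=R^{-1}(Y)$. Both lie in $\mathcal{X}=[0,1]^{d_0}$, because the entries are merely permuted. By the definition \eqref{eq:reshaped-holder-class}, $\bar f(X)=R(f(x))$ and $\bar f(Y)=R(f(y))$. I interpret the norm $\norm{\bar f(X)-\bar f(Y)}_2$ on a $d\times L$ matrix as the Frobenius norm, which is the convention consistent with the definition of $\norm{\cdot}_2$ for seq-to-seq functions. The isometry then gives
\[
\norm{\bar f(X)-\bar f(Y)}_F=\norm{f(x)-f(y)}_2\le \sqrt{d_0}K\norm{x-y}_2^{\alpha}=\sqrt{d_0}K\norm{X-Y}_F^{\alpha},
\]
where the inequality is Lemma \ref{lemma:The-Smoothness-Property-of Holder-Classes} with $d_y=d_0$, so $c_{d_0,K}=\sqrt{d_0}K$, and the last step is the isometry once more.

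There is no real obstacle. The only point that needs care is stating explicitly that the norm on matrix outputs is the Frobenius norm, and that $R$ is an orthogonal coordinate permutation, so no extra constant appears.
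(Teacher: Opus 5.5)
Your proposal is correct and is essentially the paper's own argument, which appears inline in the proof of Lemma \ref{lemma:errors-holderclass-piecewiseclass}: write $\bar f=R\circ f\circ R^{-1}$, use that the reshape preserves norms (Euclidean to Frobenius), and apply the H\"{o}lder smoothness lemma with $d_y=d_0$ to get the constant $\sqrt{d_0}K$. Reading the left-hand matrix norm as Frobenius also covers the other reading: if $\|\cdot\|_2$ is taken literally as the spectral norm, as in the paper's notation section, the bound still follows from yours because $\|A\|_2\le\|A\|_F$.
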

	\subsection{Approximation Error Between H\"{o}lder Class and Piece-wise Constant Function Class}
	\label{Appendix:Proof-of-lemma:errors-holderclass-piecewiseclass}
	\noindent \textbf{Lemma \ref{lemma:errors-holderclass-piecewiseclass}} (Approximation Error between $\bar{\mathcal{H}}_{d,L}^{\alpha}$ and $\mathcal{H}(\delta)$; Restatement)\textbf{.} \textit{Let $\alpha\in(0,1], d_{0}\in\mathbb{N}_{+}$ with $d_{0}=dL> 2\alpha$. For any given $\varepsilon>0$ and a target function $\bar{f}\in\bar{\mathcal{H}}_{d,L}^{\alpha}$, there exists an $f_{\delta}\in\mathcal{H}(\delta)$ with $\delta=\mathcal{O}(\varepsilon^{\frac{1}{\alpha}})$ and $\delta^{*}\!=\mathcal{O} (\varepsilon^{\frac{2}{d_{0}}+\frac{1}{\alpha}} )$ such that $\left \Vert \bar{f}-f_{\delta}\right \Vert_{2}\le \varepsilon.$}
	\begin{proof}[\textbf{Proof of Lemma \ref{lemma:errors-holderclass-piecewiseclass}}]
		Assume $d_{x}=d_{y}=d_{0}$. Let $\delta$ and $\delta^{*}$ be positive values to be determined later. 
		
		Suppose we have a grid and cube set: 
		\[\mathcal{G}_{\delta,\delta^{*}} = \left\{0, \delta+\delta^{*},2(\delta+\delta^{*}), \ldots, (M-1)(\delta+\delta^{*})\right\}^{d\times L}~\mathrm{and}~~\left \{\mathcal{S}_G\mid  G\in\mathcal{G}_{\delta,\delta^{*}}\right\}.\]
		We define the following piece-wise constant function for any $\bar{f}\in\bar{\mathcal{H}}_{d,L}^{\alpha}$ by
		\begin{equation}
			\nonumber
			f_{\delta}(X) = \sum\nolimits_{G \in \mathcal{G}_{\delta,\delta^{*}}} \bar{f}(G)\cdot \mathds{1}\{X \in \mathcal{S}_G\}.
		\end{equation}
		For any $X,Y\in[0,1]^{d\times L}$ and $\bar{f}=R\circ f\circ R^{-1}\in\bar{\mathcal{H}}_{d,L}^{\alpha}$ w.r.t. $f\in\mathcal{H}_{d_{0}}^{\alpha}$, we have
		\begin{equation}
			\nonumber
			\begin{aligned}
				\left \Vert \bar{f}({X})-\bar{f}({Y})\right \Vert_{2}&=\left \Vert R\circ f\circ R^{-1}({X})-R\circ f\circ R^{-1}({Y})\right \Vert_{2}=\left \Vert f\circ R^{-1}({X})-f\circ R^{-1}({Y})\right \Vert_{2}\\
				&\le c_{d_{0},K}\left  \Vert R^{-1}({X})-R^{-1}({Y})\right\Vert_{2}^{\alpha}=c_{1}\left \Vert {X}-{Y}\right \Vert_{F}^{\alpha},
			\end{aligned}
		\end{equation}
		with $c_{1}\coloneq c_{d_{0},K}=\sqrt{d_{0}}K$ defined in \eqref{eq:constant-c-alpha-K-d0}.
		
		For any $X\in\mathbb{\mathcal{S}}_{G}$ and its corresponding grid point $G$, the relation $\left \Vert X - {G}\right \Vert_{F}\le \sqrt{dL}\delta=\sqrt{d_{0}}\delta$ holds. We get
		\begin{equation}
			\begin{aligned}
				\left \Vert \bar{f}(X)-f_{\delta}(X)\right \Vert_{F}
				=\left \Vert \bar{f}(X)-\bar{f}({G})\right \Vert_{F}\le c_{1}\left \Vert X - {G}\right \Vert_{F}^{\alpha}\le c_{1}(\sqrt{d_{0}}\delta)^{\alpha},
			\end{aligned}
		\end{equation}
		On the other hand, for any $X\in[0,1]^{d\times L}\setminus \bigcup_{G\in\mathcal{G}_{\delta,\delta^{*}}}\mathcal{S}_{G}$, we have
		\begin{equation}
			\begin{aligned}
				&\left \Vert \bar{f}(X)-f_{\delta}(X)\right \Vert_{F}=\left \Vert \bar{f}(X)\right \Vert_{F}\le\sqrt{dL}K=\sqrt{d_{0}}K.
			\end{aligned}
		\end{equation} 
		Therefore, with $M=\lfloor\frac{1}{\delta+\delta^{*}}\rfloor$ and $\bigcup_{G}\mathcal{S}_{G}\coloneq \bigcup_{G\in\mathcal{G}_{\delta,\delta^{*}}}\mathcal{S}_{G}$,
		\begin{equation}
			\label{equation:detla-star-function-with-piecewise-function-appro-rates}
			\!\!\!\!\!\!\!\begin{aligned}
				\norm{\bar{f}-f_{\delta}}_{2}&=\Big(\int_{[0,1]^{d\times L}}\left\Vert \bar{f}(X)-f_{\delta}(X)\right \Vert_{F}^{2}\mathrm{d}{X}\Big)^{1/2}\\
				&=\Big(\int_{\bigcup_{G}\mathcal{S}_{G}} \norm{\bar{f}(X)-f_{\delta}(C_{G})}_{F}^{2}\mathrm{d}{X}\Big)^{1/2}+\Big(\int_{[0,1]^{d\times L}\setminus\bigcup_{G}\mathcal{S}_{G}} \norm{\bar{f}(X)}_{F}^{2}\mathrm{d}{X}\Big)^{1/2}\\
				&\le c_{1}(\sqrt{d_{0}}\delta)^{\alpha}+{(1-M\delta)^{d_{0}/2}\sqrt{d_{0}}K}.
			\end{aligned}
		\end{equation}
		For any accuracy $\varepsilon>0$, we first let $c_{1}(\sqrt{d_{0}}\delta)^{\alpha}<{\varepsilon}/{2}$, and it indicates
		\[\delta<\frac{\varepsilon^{1/\alpha}}{(2c_{1})^{1/\alpha}d_{0}^{1/2}}=d_{0}^{-\frac{\alpha+1}{2\alpha}}(2K)^{-\frac{1}{\alpha}}\varepsilon^{\frac{1}{\alpha}}.\] 
		Denote $c_{2}=( \frac{\varepsilon}{2\sqrt{d_{0}}K})^{2/d_{0}}$. If we set a smaller $\delta^{*}$ satisfying $\delta^{*}<\frac{(c_{2}-\delta)\delta}{2}<\frac{(c_{2}-\delta)\delta}{1+\delta-c_{2}}$, we have
		\begin{equation}
			\label{eq:1-Mdelta-bound}
			\begin{aligned}
				(1-M\delta)^{d_{0}/2}&\sqrt{d_{0}}K\le\Big(1-\Big (\frac{1}{\delta+\delta^{*}}-1\Big )\delta\Big )^{d_{0}/2}\sqrt{d_{0}}K<\frac{\varepsilon}{2}.
			\end{aligned}
		\end{equation}
		Therefore, we derive
		the error $\norm{\bar{f}-f_{\delta}}_{2}<\varepsilon.$ Equivalently, if we want the right hand of \eqref{equation:detla-star-function-with-piecewise-function-appro-rates} to be smaller than any accuracy $\varepsilon>0$, we should let
		\begin{equation}
			\nonumber
			\begin{aligned}
				\delta&<\varepsilon^{\frac{1}{\alpha}}d_{0}^{-\frac{\alpha+1}{2\alpha}}(2K)^{-\frac{1}{\alpha}}=\mathcal{O} (\varepsilon^{\frac{1}{\alpha}}),\\
				\delta^{*}\!\!&<\frac{(c_{2}-\delta)\delta}{2}=\frac{1}{2}\big(\varepsilon^{\frac{2}{d_{0}}}(2K)^{-\frac{2}{d_{0}}}d_{0}^{-\frac{1}{d_{0}}}-\delta\big)\delta=\mathcal{O}(\varepsilon^{\frac{2}{d_{0}}+\frac{1}{\alpha}}).
			\end{aligned}
		\end{equation} 
		We clarify that the requirement of $c_{2}>\delta$ should be satisfied to keep $\delta^{*}$ greater than zero. In a sense, this indicates $d_{0}>2\alpha$ and the target function  inherently possesses higher input dimension with lower smoothness. 
	\end{proof}
	\subsection{Approximation for Piece-wise Constant Function Class by Standard Transformers}
	\label{app-sub-sec-module-for-upper-bound}
	Before starting the proof of components within the standard Transformers, we first demonstrate the novelty and uniqueness of our architecture with related works in this field. See Table \ref{tab:differences-of-contrsuctions}.
	\subsubsection{Quantization Module $f_{\mathcal{T}1}$}
	\label{appendix:proof-of-quantization-module}
	\noindent \textbf{Lemma \ref{Lemma:quantization module}} (Quantization Module $f_{\mathcal{T}1}$; Restatement)\textbf{.} \textit{Let $\delta, \delta^{*}\in(0,1), M=\lfloor\frac{1}{\delta+\delta^{*}}\rfloor$ and $ d_{0}=dL$. By processing the inputs $X\in\bigcup_{G\in\mathcal{G}_{\delta,\delta^{*}}}\mathcal{S}_{G}$, there exists a quantization module  $f_{\mathcal{T}1}$ composed of $d_{0}M$ feed-forward layers and a positional encoding $E=\mathds{1}_{d}\cdot [1,2,\ldots,L]^{\top}$, constructing a quantization positional grid $\mathcal{G}_{\delta,\delta^{*}}^{p}=\mathcal{G}_{\delta,\delta^{*}}+E$.}
	\begin{table}[htbp]
		\centering
		\renewcommand{\arraystretch}{1.2}
		\caption{Novelty and uniqueness in our constructions of Transformers.}
		{\small\begin{tabular}{c|cccc}
				\toprule
				Part&Paper&Operator&\!\!\!\! Residual\!\!\!\!&Remark\\
				\hline
				\!\multirow{4}{*}{\shortstack{Quantization \\ Mapping}}\!&\multirow{2}{*}{\cite{yun2019transformers}}&Designed discontinuous&\multirow{2}{*}{$\surd$}&\multirow{1}{*}{Transformations}\\
				&&piece-wise linear&&between operators\\
				\cline{2-5}
				&\!\!\cite{kajitsuka2023transformers,hu2024fundamental,hu2024conditionaldiffusion,jiao2025approximation-bounds-Transformers}\!\!\!\!\!\!&ReLU&$\times$&Variant\\
				\cline{2-5}
				&\textbf{Ours}&ReLU&$\surd$&Standard structure\\
				\hline
				\multirow{3}{*}{\shortstack{Contextual\\ Mapping}}&\cite{yun2019transformers}&Hardmax&$\surd$&Transformations\\
				\cline{2-5}
				&\!\!\cite{kajitsuka2023transformers,hu2024fundamental,hu2024conditionaldiffusion,jiao2025approximation-bounds-Transformers}\!\!\!\!\!\!&Softmax&$\surd$&Single layer \\
				\cline{2-5}
				&\textbf{Ours}&Softmax&$\surd$&Single layer \\
				\hline

				\multirow{4}{*}{\shortstack{Value\\ Mapping}}&\multirow{2}{*}{\cite{yun2019transformers}}&Designed discontinuous&\multirow{2}{*}{$\surd$}&\multirow{1}{*}{Transformations}\\
				&&piece-wise linear&&between operators\\
				\cline{2-5}
				&\!\!\cite{kajitsuka2023transformers,hu2024fundamental,hu2024conditionaldiffusion,jiao2025approximation-bounds-Transformers}\!\!\!\!\!\!&ReLU&$\times$&Variant\\
				\cline{2-5}
				&\textbf{Ours}&ReLU&$\surd$&Standard structure\\
				\bottomrule
		\end{tabular}}
		\label{tab:differences-of-contrsuctions}
	\end{table}
	\begin{proof}[\textbf{Proof of Lemma \ref{Lemma:quantization module}}]
		Since it will be required in the subsequent constructions of contextual mapping composed of a self-attention layer, we first introduce the following positional encoding $E\in\mathbb{R}^{d\times L}$,
		\begin{equation}
			\label{eq:Positional-encoding}
			\mathrm{PoE}: X\mapsto X+E, ~~\mathrm{where}~~ E=\mathds{1}_{d}\cdot[1,2,\ldots,L]^{\top}.
		\end{equation}
		Suppose $\delta>0, \delta^{*}>0$. For any $j\in[L],$ we create the grid $\mathcal{G}_{\delta,\delta^{*}}^{p}=\mathcal{G}_{\delta,\delta^{*}}+E$ by mapping all the elements in $[j,j+1]$ to $\left \{j,j+\delta+\delta^{*},\ldots,j+(M-1)(\delta+\delta^{*})\right\}$.  Specifically, take $ i=1,2,\ldots,d;  j=1,2, \ldots,L,$ and $k=0,1,\ldots,M-1$ respectively with $t= X-(j+k(\delta+\delta^{*}))\cdot\mathds{1}_{d\times L}$, and we design the following feed-forward layer,
		\begin{equation}
			\label{eq:quantization-layer}
			\!\!\mathrm{FF}^{(i,j,k+1)}(X)=X+ e_{i}\Big (-\sigma_{R}[t]+\frac{\delta+\delta^{*}}{\delta^{*}}\sigma_{R}[t-\delta\cdot\mathds{1}_{d\times L}]-\frac{\delta}{\delta^{*}}\sigma_{R}\left [t-(\delta+\delta^{*})\cdot\mathds{1}_{d\times L}\right ]\Big ),
		\end{equation}
		where
		\begin{equation}
			\nonumber
			\begin{aligned}
				&-\sigma_{R}[t]+\frac{\delta+\delta^{*}}{\delta^{*}}\sigma_{R}[t-\delta\cdot\mathds{1}_{d\times L}]-\frac{\delta}{\delta^{*}}\sigma_{R}\left [t-(\delta+\delta^{*})\cdot\mathds{1}_{d\times L}\right ]=\left\{
				\begin{aligned}
					&0,&&t<0 ~ \mathrm{or} ~ t\ge \delta+\delta^{*},\\
					&-t,&&0\le t\le \delta,\\
					&\frac{\delta}{\delta^{*}}t-\frac{\delta(\delta+\delta^{*})}{\delta^{*}}, && \delta\le t\le \delta+\delta^{*}.
				\end{aligned}\right.
			\end{aligned}
		\end{equation}
		\begin{itemize}
			\item Layer $\mathrm{FF}^{(i,j,k+1)}$ maps the elements of $i$-row of $X$ into $j+k(\delta+\delta^{*})$, if those elements are in the target cube $[j+k(\delta+\delta^{*}), j+k(\delta+\delta^{*})+\delta]$.
			\item For the remaining elements in the separation gap cube $\left[ j+k(\delta+\delta^{*})+\delta,j+(k+1)(\delta+\delta^{*})\right]$, although they are mapped to random values in $[j+k(\delta+\delta^{*}),j+(k+1)(\delta+\delta^{*})]$, the measure of those points is small and won't cause much disturbance to the final conclusion of approximation upper bound, which will be checked carefully in the subsequent verification part.
		\end{itemize}
		Stack all above $dLM=d_{0}M$ layers, and we construct
		\begin{equation}
			\label{eq:quantization-module}
			f_{\mathcal{T}1}(X)=\mathrm{FF}^{(d,L,M)}\circ\cdots\circ\mathrm{FF}^{(1,1,1)}(X+E).
		\end{equation}
		The quantization module $f_{\mathcal{T}1}$ maps elements of $X$ in $\bigcup_{G}\mathcal{S}_{G}$ to the left-hand end points  of their target cubes with a form of $j+k(\delta+\delta^{*}),$ as these elements must be in one of the target cubes. Therefore, we create the clean grid $\mathcal{G}_{\delta,\delta^{*}}^{p}=\mathcal{G}_{\delta,\delta^{*}}+E$. 
	\end{proof}
	\begin{remark}
		Further dive into \eqref{eq:quantization-module}, we can just combine all ReLU units of the $dLM$ layers into a single feed-forward layer by boosting the layer width instead of increasing the depth, which makes room for a simple and flexible trade-off between the depth and width.
	\end{remark}
	\subsubsection{Contextual Mapping $f_{\mathcal{T}2}$}
	\citet{yun2019transformers} pointed out that
	sequence models like Transformers should  distinguish between different input sequences sharing same tokens, and then proposed an essential module called the contextual mapping which is composed of numerous self-attention layers with Hardmax operators. Furthermore, \citet{kajitsuka2023transformers} simplified \cite{yun2019transformers}'s methodology by designing a single self-attention layer contextual mapping based only on the Softmax operator. We utilize this essential contextual mapping in the approximation  upper bound tasks.
	\begin{lemma}[Modified Version of {\citep[Lemma 2.2]{kajitsuka2023transformers}}]
		\label{lemma:single-layer-cm-mapping}
		Let \(X^{(1)}, \ldots, X^{(N)} \in \mathbb{R}^{d \times L}\) be \(N\) input sequences with no duplicate word token in each sequence, that is, $X^{(i)}_{:,k} \neq X^{(i)}_{:,l}$
		for any \(i \in [N]\) and \(k, l \in [L]\) with \(k\neq l\). Further assume \(X^{(1)}, \ldots, X^{(N)}\) are token-wise \((r_{\min}, r_{\max}, \beta)\)-separated. Then, there exists a Softmax self-attention layer which is an \((r, \gamma)\)-contextual mapping for the input sequences \(X^{(1)}, \ldots, X^{(N)} \in \mathbb{R}^{d \times L}\) with \(r\) and \(\gamma\) defined by
		\[r = r_{\max}+\frac{\beta}{4},~~\mathrm{and}~~\gamma=\frac{r_{\min}(\beta\log L)^{2} }{ 4dr_{\max}^{2}(2\log L+3)(\abs{\mathcal{V}} + 1)^4}\exp\Big(-(\abs{\mathcal{V}} + 1)^4\frac{2d r_{\max}^{2}(2\log L+3)}{\beta r_{\min}}\Big).\]
	\end{lemma}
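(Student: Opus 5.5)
We follow the construction of \citep[Lemma 2.2]{kajitsuka2023transformers}: a single Softmax head whose weights are rank one, built from a carefully chosen unit direction. The argument runs in four steps.

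\textbf{Step 1: choose a good projection direction.} Pick a unit vector $v\in\mathbb{R}^{d}$ such that the projection $u\mapsto v^{\top}u$ nearly preserves all pairwise distances in the finite vocabulary $\mathcal{V}$. Concretely, we require
\[
\abs{v^{\top}(u-u')}\ge \frac{\beta}{(\abs{\mathcal{V}}+1)^{2}}\sqrt{\frac{8}{\pi d}}
\quad\text{for all distinct } u,u'\in\mathcal{V},
\]
and also $\abs{v^{\top}u}\ge r_{\min}/(\abs{\mathcal{V}}+1)^{2}\cdot\sqrt{8/(\pi d)}$ for every $u\in\mathcal{V}$. Such a $v$ exists by a probabilistic (anti-concentration) argument for a uniformly random direction, combined with a union bound over at most $(\abs{\mathcal{V}}+1)^{2}$ pairs.

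\textbf{Step 2: define the attention head.} Set $W_{K}=W_{Q}\propto v^{\top}$, scaled by a factor $\lambda$. The attention scores are then $\lambda^{2}(v^{\top}X_{:,k})(v^{\top}X_{:,l})$. Choose $W_{O}W_{V}=\epsilon' u v^{\top}$ for a tiny $\epsilon'$, so that the residual output is
\[
X_{:,k}+\epsilon' u\,\langle v, X\sigma_{S}[\cdot]_{:,k}\rangle .
\]
Two properties follow. First, the additive perturbation has norm at most $\beta/4$, which gives the bound $r=r_{\max}+\beta/4$. Second, in the case $X_{:,k}\neq X_{:,l}$ the token-wise separation survives, since a shift of at most $\beta/4$ on each side leaves at least $\beta/2\ge\gamma$.

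\textbf{Step 3: separate sequences with equal tokens but different vocabularies.} This is the hard case and the main obstacle. We need the scalar Softmax-weighted average of $v^{\top}X_{:,l}$, with query $v^{\top}X_{:,k}$, to differ whenever $\mathcal{V}^{(i)}\neq\mathcal{V}^{(j)}$. Write the average as
\[
\frac{\sum_{a}a\,e^{\lambda^{2}qa}}{\sum_{a}e^{\lambda^{2}qa}},
\]
where $a$ ranges over the distinct projected values. The assumption of no duplicate tokens within a sequence means each $a$ appears once. Viewed as a function of $\lambda$, this is a ratio of exponential polynomials with distinct exponents. The plan is to follow Kajitsuka et al.'s argument: pick $\lambda^{2}\asymp(\abs{\mathcal{V}}+1)^{4}(2\log L+3)\,r_{\max}^{2}/(\beta r_{\min})$, so the largest-score term dominates. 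A lexicographic comparison of the ordered score sets then gives a difference of at least
\[
\frac{(\beta\log L)^{2}}{\cdots}\exp(-\lambda^{2}\cdots)
\]
between the two averages. The losses from the projection margins of Step 1 enter here and produce the $(\abs{\mathcal{V}}+1)^{4}$ factors.

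\textbf{Step 4: assemble the constants.} Multiply the gap from Step 3 by $\epsilon'$, choosing $\epsilon'$ so that the perturbation stays below $\beta/4$. Tracking these constants yields exactly the stated value of $\gamma$.
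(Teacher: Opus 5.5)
Your proposal follows essentially the same route as the paper: rank-one key/query/value weights along a Park et al.-type projection direction $v$ (applied to $\mathcal{V}\cup\{0\}$), and a value--output scale $\beta/(4r_{\max})$ that gives $r=r_{\max}+\beta/4$ and the $\beta/2$ gap for distinct tokens. For a shared query token with $\mathcal{V}^{(i)}\neq\mathcal{V}^{(j)}$, both arguments then use Kajitsuka et al.'s Boltzmann-operator separation lemma on score vectors that are $(\kappa,2\log L+3)$-separated, and divide the resulting gap $(\log L)^2e^{-2\kappa}$ by $\lambda^2\abs{v^{\top}X_{:,k}}\le\lambda^2 r_{\max}$ to separate the value averages. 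One bookkeeping fix is needed to obtain exactly the stated $\gamma$: take $\lambda^2=(\abs{\mathcal{V}}+1)^4 d(2\log L+3)/(\beta r_{\min})$, which needs the factor $d$ for the score gap and no $r_{\max}^2$, so that $\kappa=\lambda^2 r_{\max}^2$ gives the exponent in the statement.
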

	\textbf{Based on Lemma \ref{lemma:single-layer-cm-mapping}}, we can prove the following lemma for our constructed contextual mapping $f_{\mathcal{T}2}$.
	
	\noindent \textbf{Lemma \ref{lemma:true-contextual-mapping-of-module-utilized}} (Contextual Mapping $f_{\mathcal{T}2}$; Restatement)\textbf{.} \textit{All the $M^{d_{0}}$ grid points  $G\in\mathcal{G}_{\delta,\delta^{*}}^{p}$ are $(\sqrt{d},\sqrt{d}(L+1),\sqrt{d}(\delta+\delta^{*}))$-separated. A single self-attention layer equipped with one Softmax head, can be an \((r, \gamma)\)-contextual mapping $f_{\mathcal{T}2}:\mathbb{R}^{d\times L}\to\mathbb{R}^{d\times L}$ for the input sequences $G$, satisfying
		\begin{enumerate}
			\item For any \(i\in[M^{d_{0}}]\) and \(k\in[L]\), the boundedness \(\big\Vert{f_{\mathcal{T}2}(G^{(i)})_{:,k}}\big\Vert_{2}\le r\) holds.
			\item For any \(i, j\in[M^{d_{0}}]\) and \(k, l\in[L]\) such that either \(\mathcal{V}^{(i)}\neq\mathcal{V}^{(j)}\) or \(G_{:,k}^{(i)}\neq G_{:,l}^{(j)}\), the separatedness \(\norm{f_{\mathcal{T}2}(G^{(~i)})_{:,k}-f_{\mathcal{T}2}(G^{(j)})_{:,l}}_{2}\ge \gamma\) holds.
		\end{enumerate}
		with \(r\) and \(\gamma\) defined by $r=\sqrt{d}(L+1)+\frac{\sqrt{d}(\delta+\delta^{*})}{4}$ and $\gamma=\tilde{\mathcal{O}}\left(\exp\left(-\frac{L^{6}M^{4d_{0}}}{\delta+\delta^{*}}\right)\right)$.	}

	\begin{proof}[\textbf{Proof of Lemma \ref{lemma:true-contextual-mapping-of-module-utilized}}]
		The positional encoding $E$ helps any grid point $G\in\mathcal{G}_{\delta,\delta^{*}}^{p}$ meet the no duplicate token condition. It's easy to check that $|{\mathcal{G}_{\delta,\delta^{*}}^{p}}|=M^{d_{0}}$ and $\abs{\mathcal{V}_{\mathcal{G}}}= LM^{d_{0}}$ with $M=\lfloor\frac{1}{\delta+\delta^{*}}\rfloor.$
		Furthermore, all grid points of $\mathcal{G}^{p}_{\delta,\delta^{*}}$ are $\left(\sqrt{d},\sqrt{d}(L+1),\sqrt{d}(\delta+\delta^{*})\right)$-separated, that is, for any $k,l\in[L], k\neq l$,
		\begin{equation}
			\nonumber
			\begin{aligned}
				\norm{G_{:,k}}_{2}&\ge\sqrt{d},\\
				\norm{G_{:,k}}_{2}&\le\sqrt{d}(L+1-(\delta+\delta^{*}))\le \sqrt{d}(L+1),\\
				\norm{G_{:,k}-G_{:,l}}_{2}&\ge \sqrt{d}(\delta+\delta^{*}).
			\end{aligned}
		\end{equation}
		Directly apply Lemma \ref{lemma:single-layer-cm-mapping} to the grid set $\mathcal{G}^{p}_{\delta,\delta^{*}}$, and we can get the $(r,\gamma)$-contextual mapping $f_{\mathcal{T}2}$ composed of a single self-attention layer
		where
		\begin{equation}
			\label{eq:r-gamma-contextual-mapping-to-the-quantization-grid}
			\begin{aligned}
				r&=\sqrt{d}(L+1)+\frac{\sqrt{d}(\delta+\delta^{*})}{4},\\
				\gamma&=\frac{(\delta+\delta^{*})^{2}(\log L)^{2}}{ 4\sqrt{d}(2\log L\!+\!3)(L\!+\!1)^{2}(LM^{d_{0}}\! +\! 1)^4}\exp\Big(-(LM^{d_{0}}\!+\! 1)^4\frac{2d(2\log L\!+\!3)(L\!+\! 1)^{2}}{\delta+\delta^{*}}\Big).
			\end{aligned}
		\end{equation}
		We have obtained the main results.
	\end{proof}
	
	Now it is crucial to ensure the correctness of Lemma \ref{lemma:single-layer-cm-mapping}. We restate several lemmas first.
	\begin{lemma}[{\citealp[Lemma 13]{park2021provable}}]
		\label{Lemma:single-cm-core-inequality}
		Let \(N, d \in \mathbb{N}_{+}\). For any \(\mathcal{X}=\{x^{(i)}\mid x^{(i)}\in\mathbb{R}^{d}, i=1,2,\ldots,N\}\), there exists a unit vector \(u \in \mathbb{R}^{d}\) such that
		\begin{equation}
			\nonumber
			\frac{1}{N^2} \sqrt{\frac{8}{\pi d}}  \norm{x - x'}_2 \le \abs{u^{\top} (x - x')}\le \norm{x - x'}_2, ~  \text{ for all } x, x' \in \mathcal{X}.
		\end{equation}
	\end{lemma}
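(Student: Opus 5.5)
The upper inequality $\abs{u^{\top}(x-x')}\le\norm{x-x'}_2$ is immediate from Cauchy--Schwarz for any unit vector $u$. So the content of Lemma \ref{Lemma:single-cm-core-inequality} is the lower inequality, which I would prove by the probabilistic method. Draw $u$ uniformly from the unit sphere $\mathbb{S}^{d-1}$ and show that, with positive probability, the lower bound holds simultaneously for all pairs. Pairs with $x=x'$ are trivial, so only the at most $N(N-1)/2<N^2/2$ pairs with $x\neq x'$ matter.

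\textbf{Step 1 (reduction to one pair).} Fix a pair $x\neq x'$ and set $v=(x-x')/\norm{x-x'}_2$. By rotational invariance, $u^{\top}v$ has the same law as the first coordinate $u_1$. It therefore suffices to bound the small-ball probability $\mathbb{P}(\abs{u_1}\le t)$ with
\[
t=\frac{1}{N^2}\sqrt{\frac{8}{\pi d}} .
\]
The goal is $\mathbb{P}(\abs{u_1}\le t)\le t\sqrt{2d/\pi}$. With the choice of $t$ above this gives $\mathbb{P}(\abs{u_1}\le t)\le \frac{1}{N^2}\cdot\frac{4}{\pi}$.

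\textbf{Step 2 (density bound).} For $d\ge 3$, the density of $u_1$ on $[-1,1]$ is
\[
c_d(1-s^2)^{(d-3)/2},\qquad c_d=\frac{\Gamma(d/2)}{\sqrt{\pi}\,\Gamma((d-1)/2)} .
\]
It is maximized at $s=0$. The Gamma ratio satisfies $\Gamma(d/2)/\Gamma((d-1)/2)\le\sqrt{d/2}$, for instance by log-convexity of $\Gamma$ (Gautschi/Wendel's inequality). Hence the density is at most $\sqrt{d/(2\pi)}$, and integrating over $[-t,t]$ gives $\mathbb{P}(\abs{u_1}\le t)\le 2t\sqrt{d/(2\pi)}=t\sqrt{2d/\pi}$. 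The low dimensions are handled separately:
\begin{itemize}
\item For $d=1$, take $u=1$; then $\abs{u^{\top}(x-x')}=\norm{x-x'}_2$ exactly.
\item For $d=2$, write $u=(\cos\theta,\sin\theta)$ with $\theta$ uniform. Then $\mathbb{P}(\abs{\cos(\theta-\phi)}\le t)=\frac{2}{\pi}\arcsin t\le t$, since $\arcsin t\le \pi t/2$ on $[0,1]$. This is at most $t\sqrt{2d/\pi}$.
\end{itemize}

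\textbf{Step 3 (union bound).} Summing the per-pair failure probability over fewer than $N^2/2$ pairs gives a total failure probability below $\frac{N^2}{2}\cdot\frac{4}{\pi N^2}=\frac{2}{\pi}<1$. Hence some unit vector $u$ satisfies $\abs{u^{\top}(x-x')}\ge t\norm{x-x'}_2$ for every pair, which together with Cauchy--Schwarz is the claim.

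The only genuinely delicate point is Step 2: the Gamma-ratio estimate, or an equivalent uniform bound on the marginal density of $u_1$ of order $\sqrt{d}$, with a constant sharp enough that the union bound stays strictly below $1$. The remaining steps are bookkeeping.
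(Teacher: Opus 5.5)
Your proof is correct. For $d\ge 3$, the per-pair failure probability is at most $t\sqrt{2d/\pi}=4/(\pi N^2)$, and your separate $d=1,2$ treatments are valid because $t<1$ whenever some pair $x\neq x'$ exists; the union bound over fewer than $N^2/2$ pairs therefore leaves total failure probability below $2/\pi<1$. The paper gives no proof of its own and imports the lemma verbatim from \citet{park2021provable}; your argument (uniformly random unit vector, small-ball bound on $u_1$, union bound over pairs) is the standard route to that cited result.
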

	\begin{lemma}
		\label{lemma:core-results-proving-single-cm}
		Let \(r_{\min},r_{\max},\beta>0\). Given an \((r_{\min},r_{\max},\beta)\)-separated vocabulary \(\mathcal{V}\subset\mathbb{R}^d\) with $\abs{\mathcal{V}}<+\infty$. For any \(\delta_{0} > 0\), there exists a unit vector \({v}\in\mathbb{R}^d\) such that for any vectors \({u},{u}'\in\mathbb{R}^m\) with
		\[\abs{u^{\top}u'}=(\abs{\mathcal{V}}+1)^{4}\frac{ d\delta_{0}}{\beta r_{\min}},\]
		we have
		\[
		\left|\left({W}_{K}{v}_a\right)^{\top}{W}_{Q}{v}_c-\left({W}_{K}{v}_b\right)^{\top}{W}_{Q}{v}_c\right|\ge \delta_{0},~~\mathrm{and}~~~\frac{1}{(|\mathcal{V}| + 1)^2\sqrt{d}}\norm{{v}_c}_{2}\leq|{v}^{\top}{v}_c|\leq\norm{{v}_c}_{2},
		\]
		for any \({v}_a,{v}_b,{v}_c\in\mathcal{V}\) with \({v}_a\neq{v}_b\), where \({W}_{K}={u}{v}^{\top}\in\mathbb{R}^{m\times d}\) and \({W}_{Q}={u}'{v}^{\top}\in\mathbb{R}^{m\times d}\).
	\end{lemma}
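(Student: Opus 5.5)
Apply Lemma \ref{Lemma:single-cm-core-inequality} to a suitable finite point set built from $\mathcal{V}$, then scale the rank-one matrices $W_K=uv^{\top}$, $W_Q=u'v^{\top}$ through the scalar $u^{\top}u'$. For any $v_a,v_b,v_c\in\mathcal{V}$,
\[
(W_Kv_a)^{\top}W_Qv_c-(W_Kv_b)^{\top}W_Qv_c=(u^{\top}u')\,\bigl(v^{\top}(v_a-v_b)\bigr)\bigl(v^{\top}v_c\bigr),
\]
so the task reduces to choosing a unit vector $v$ for which both $\abs{v^{\top}(v_a-v_b)}$ and $\abs{v^{\top}v_c}$ are bounded below by known multiples of $\beta$ and $r_{\min}$.

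\textbf{Step 1 (choice of $v$).} Take the set $\mathcal{X}=\mathcal{V}\cup\{0\}$, which has at most $\abs{\mathcal{V}}+1$ points. Lemma \ref{Lemma:single-cm-core-inequality} gives a unit vector $v$ such that, for all $x,x'\in\mathcal{X}$,
\[
\frac{1}{(\abs{\mathcal{V}}+1)^2}\sqrt{\frac{8}{\pi d}}\,\norm{x-x'}_2\le\abs{v^{\top}(x-x')}\le\norm{x-x'}_2 .
\]
\begin{itemize}
\item Setting $x'=0$ and $x=v_c$, and using $\sqrt{8/\pi}>1$, gives $\frac{1}{(\abs{\mathcal{V}}+1)^2\sqrt d}\norm{v_c}_2\le\abs{v^{\top}v_c}\le\norm{v_c}_2$. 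This is exactly the second claim.
\item Setting $x=v_a$ and $x'=v_b$ with $v_a\neq v_b$, token-wise separatedness gives $\abs{v^{\top}(v_a-v_b)}\ge\frac{\beta}{(\abs{\mathcal{V}}+1)^2\sqrt d}$.
\item Since $\norm{v_c}_2\ge r_{\min}$, we get $\abs{v^{\top}v_c}\ge\frac{r_{\min}}{(\abs{\mathcal{V}}+1)^2\sqrt d}$.
\end{itemize}
Adjoining $0$ is what makes one $v$ control both differences and norms at once. Its only cost is replacing $\abs{\mathcal{V}}$ by $\abs{\mathcal{V}}+1$, which is the factor appearing in the statement.

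\textbf{Step 2 (scaling).} Multiply the two lower bounds from Step 1:
\[
\abs{v^{\top}(v_a-v_b)}\,\abs{v^{\top}v_c}\ge\frac{\beta r_{\min}}{(\abs{\mathcal{V}}+1)^4 d}.
\]
Now take any $u,u'\in\mathbb{R}^m$ with $\abs{u^{\top}u'}=(\abs{\mathcal{V}}+1)^4\frac{d\delta_0}{\beta r_{\min}}$; for instance $u=e_1$ and $u'$ equal to that scalar times $e_1$. The product then satisfies
\[
\abs{u^{\top}u'}\,\abs{v^{\top}(v_a-v_b)}\,\abs{v^{\top}v_c}\ge\delta_0,
\]
which is the first claim.

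\textbf{Main obstacle.} The only delicate point is that a single $v$ must satisfy the difference bounds and the norm bounds simultaneously. Applying Lemma \ref{Lemma:single-cm-core-inequality} to $\mathcal{V}$ alone would control only the differences, which is why the origin is added to the point set. The remaining concern is constant bookkeeping: the bound $\sqrt{8/\pi}\ge1$ ensures that the factor $\sqrt{8/(\pi d)}$ is at least $1/\sqrt d$, so the simplification to $1/\sqrt d$ is legitimate.
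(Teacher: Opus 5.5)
Your proposal is correct and follows essentially the same route as the paper. Both apply Lemma~\ref{Lemma:single-cm-core-inequality} to $\mathcal{V}\cup\{0\}$ to obtain one unit vector $v$ controlling both $\abs{v^{\top}(v_a-v_b)}$ and $\abs{v^{\top}v_c}$ (using $\sqrt{8/\pi}\ge 1$), then factor the score difference as $(u^{\top}u')\,v^{\top}(v_a-v_b)\,v^{\top}v_c$ and use $\beta$-separatedness together with $\norm{v_c}_2\ge r_{\min}$ to reach $\delta_0$.
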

	\begin{proof}[\textbf{Proof of Lemma \ref{lemma:core-results-proving-single-cm}}]
		By applying Lemma \ref{Lemma:single-cm-core-inequality} to $\mathcal{V}\cup\{0\}$, we know that there exists a unit vector $ {v}\in\mathbb{R}^d$ such that for any $ {v}_a, {v}_b\in\mathcal{V}\cup\{0\}$ with ${v}_a\neq{v}_b$, we have
		\begin{equation}
			\label{eq:lemmaA.3-mid1}
			\frac{1}{(|\mathcal{V}| + 1)^2\sqrt{d}}\norm{{v}_a-{v}_b}_{2}\le\frac{1}{(|\mathcal{V}| + 1)^2}\sqrt{\frac{8}{\pi d}}\norm{{v}_a-{v}_b}_{2}\leq|{v}^{\top}({v}_a - {v}_b)|\leq\norm{{v}_a-{v}_b}_{2}.
		\end{equation}
		In particular, \eqref{eq:lemmaA.3-mid1} also means that for any ${v}_c\in\mathcal{V}$, we have by setting the other vector equal to zero
		\begin{equation}
			\label{eq:lemmaA.3-mid2}
			\frac{1}{(|\mathcal{V}| + 1)^2\sqrt{d}}\norm{{v}_c}_{2}\leq|{v}^{\top}{v}_c|\leq\norm{{v}_c}_{2}.
		\end{equation}
		Furthermore, pick up two arbitrary vectors ${u},{u}'\in\mathbb{R}^m$ satisfying
		\[
		|{u}^{\top}{u}'|=(|\mathcal{V}| + 1)^4\frac{d\delta_{0}}{\beta r_{\min}}.
		\]
		Setting ${W}_{K}={u}{v}^{\top}\in\mathbb{R}^{m\times d}$ and ${W}_{Q}={u}'{v}^{\top}\in\mathbb{R}^{m\times d}$, with
		\eqref{eq:lemmaA.3-mid1} and \eqref{eq:lemmaA.3-mid2}, we have
		\begin{equation}
			\nonumber
			\begin{aligned}
				&\left|\left({W}_{K}{v}_a\right)^{\top}{W}_{Q}{v}_c-\left({W}_{K}{v}_b\right)^{\top}{W}_{Q}{v}_c\right|\\
				&=\left|\left({v}_a-{v}_b\right)^{\top}{W}_{K}^{\top}{W}_{Q}{v}_c\right|=\left|\left({v}_a -  {v}_b\right)^{\top} {v}\right|\cdot| {u}^{\top} {u}'|\cdot| {v}^{\top} {v}_c|\\
				&\ge\frac{1}{(|\mathcal{V}| + 1)^2\sqrt{d}} \norm{{v}_a- {v}_b}_{2}\cdot(|\mathcal{V}| + 1)^4\frac{d\delta_{0}}{\beta r_{\min}}\cdot\frac{1}{(|\mathcal{V}| + 1)^2\sqrt{d}} \norm{{v}_c}_{2}\ge \delta_{0},
			\end{aligned}
		\end{equation}
		where the last inequality comes from the $(r_{\min},r_{\max},\beta)$-separatedness of $\mathcal{V}$.
	\end{proof}
	\begin{lemma}[{\citealp[Lemma 1]{kajitsuka2023transformers}}]
		\label{lemma:botlz-operator}
		Let ${a}^{(1)}, \ldots, {a}^{(m)} \in \mathbb{R}^L$ be token-wise $(\kappa, \delta_{0})$-separated vectors with no duplicate element in each vector and
		\[\delta_{0} \ge 2 \log L + 3.\]
		Then, the outputs of Boltzmann operator with the form $\mathrm{boltz}(a)=a^{\top}\sigma_{S}(a)$ are $(\kappa, \delta_{1})$-separated, that is,
		\begin{align*}
			\norm{\mathrm{boltz}({a}^{(i)})}_{2} &\leq \kappa,\\
			\norm{\mathrm{boltz}({a}^{(i)}) - \mathrm{boltz}({a}^{(j)})}_{2} &> \delta_{1} = (\log L)^2 e^{-2\kappa},
		\end{align*}
		hold for each $i, j \in [m]$ with ${a}^{(i)} \neq {a}^{(j)}$.
	\end{lemma}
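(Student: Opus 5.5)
The boundedness claim is immediate, and the separation claim I would get by writing $\mathrm{boltz}(a)$ as "largest entry minus a small deficit" and comparing deficits. Throughout I read token-wise $(\kappa,\delta_{0})$-separatedness as: every entry of every $a^{(i)}$ has absolute value at most $\kappa$, and any two distinct entries (within or across vectors) differ by at least $\delta_{0}$.

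\textbf{Boundedness.} Since $\sigma_{S}(a)$ is a probability vector, $\mathrm{boltz}(a)=a^{\top}\sigma_{S}(a)$ is a convex combination of the entries of $a$. Hence $|\mathrm{boltz}(a)|\le\max_{k}|a_{k}|\le\kappa$.

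\textbf{Deficit representation.} For a vector $a$ with maximum $m$ and gaps $\Delta_{k}=m-a_{k}\ge 0$, I would write
\[
\mathrm{boltz}(a)=m-\Phi(a),\qquad \Phi(a)=\frac{\sum_{k}\Delta_{k}e^{-\Delta_{k}}}{\sum_{k}e^{-\Delta_{k}}} .
\]
Because there are no duplicates, the nonzero gaps are distinct. Sorted increasingly they satisfy $\Delta_{(2)}\ge\delta_{0}$ and $\Delta_{(j+1)}-\Delta_{(j)}\ge\delta_{0}$. Since $x e^{-x}$ is decreasing for $x\ge1$ and $\delta_{0}\ge 2\log L+3$, each nonzero term obeys $\Delta e^{-\Delta}\le \delta_{0}e^{-\delta_{0}}\le \delta_{0}e^{-3}/L^{2}$. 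Summing over at most $L$ terms gives $0\le\Phi(a)\le \delta_{0}e^{-3}/L<\delta_{0}/4$.

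\textbf{Case 1: the maxima differ.} Suppose $m(a)\ne m(b)$, so $|m(a)-m(b)|\ge\delta_{0}$. By the deficit bound, $|\mathrm{boltz}(a)-\mathrm{boltz}(b)|\ge \delta_{0}-\delta_{0}/4\ge \delta_{0}/2$. This exceeds $(\log L)^{2}e^{-2\kappa}$ because $\delta_{0}\ge 2\log L+3$, and this holds even for $\kappa=0$.

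\textbf{Case 2: same maximum, different vectors.} Here one must show $|\Phi(a)-\Phi(b)|\ge(\log L)^{2}e^{-2\kappa}$. Note that a distinct pair of vectors with a common maximum forces $L\ge2$, so the bound is not vacuous. Let $\Delta^{\star}$ be the smallest gap value present in exactly one of the two gap multisets; by symmetry say it occurs in $a$. Gaps smaller than $\Delta^{\star}$ are shared, so the numerators and denominators agree up to that level. Expanding the difference $\Phi(a)-\Phi(b)$, the leading discrepancy is then a single term of size roughly $\Delta^{\star}e^{-\Delta^{\star}}$, up to the normalizing denominator, which is bounded by $L$. All later terms lie at gaps at least $\Delta^{\star}+\delta_{0}$. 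By the geometric decay established above, each such term carries an extra factor at most $e^{-\delta_{0}}\le e^{-3}/L^{2}$, so the whole tail is at most a small fraction of the leading term. Finally $\Delta^{\star}\le 2\kappa$ and $\Delta^{\star}\ge\delta_{0}\ge 2\log L$. This should yield a lower bound of the form $\Delta^{\star}e^{-\Delta^{\star}}/L\cdot c\gtrsim (\log L)^{2}e^{-2\kappa}$; the $L$ in the denominator is absorbed using the slack from $\Delta^{\star}\ge 2\log L$.

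\textbf{Main obstacle.} The hardest step is the bookkeeping in Case 2. The discrepancy may come from an entry missing in $b$, or from two nearby unshared entries (one in each vector), whose contributions nearly cancel. In the latter case the two gaps differ by at least $\delta_{0}$, so the later one is again geometrically suppressed. I would handle both situations uniformly by treating $\Phi$ as a ratio $N/Z$ and bounding $|N_{a}Z_{b}-N_{b}Z_{a}|$ from below by the leading term minus the tail.
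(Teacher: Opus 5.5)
The paper gives no proof of this lemma; it is imported verbatim from Kajitsuka and Sato, so your argument has to stand on its own. The skeleton is sound. Boundedness, the identity $\mathrm{boltz}(a)=m-\Phi(a)$, the estimate $0\le\Phi(a)<\delta_0/4$, and the idea of isolating the smallest unshared gap $\Delta^\star$ in $N_aZ_b-N_bZ_a$ are all correct.

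The gap is the final step of Case~2, which is the only place where $(\log L)^2e^{-2\kappa}$ actually has to be produced. The facts you use there are $\delta_0\le\Delta^\star\le2\kappa$ and a loss of a factor $1/L$ from the normalizer. Since $xe^{-x}$ is decreasing on $[1,\infty)$, these give at best $c\,\Delta^\star e^{-\Delta^\star}/L\ge c\cdot2\kappa e^{-2\kappa}/L$. The bound $\Delta^\star\ge2\log L$ only makes $e^{-\Delta^\star}$ smaller, so it cannot absorb a $1/L$ in a \emph{lower} bound; the mechanism runs backwards.

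Nor does $2\kappa/L$ dominate $(\log L)^2$ in general. In Case~2 one can have $2\kappa=L\delta_0$: take $a$ to be the progression $\kappa,\kappa-\delta_0,\ldots,\kappa-(L-1)\delta_0$, and $b$ the same with its last entry moved down by $\delta_0$. Then $2\kappa/L=\delta_0=2\log L+3<(\log L)^2$ for large $L$ and minimal $\delta_0$. As written, Case~2 does not reach the claimed bound.

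Either of two missing observations closes it.

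(i) The normalizer is not of order $L$. Nonzero gaps are at least $\delta_0,2\delta_0,\ldots$, so $1\le Z_a,Z_b\le1+(e^{\delta_0}-1)^{-1}$. In $N_aZ_b-N_bZ_a$ the leading term is $e^{-\Delta^\star}(\Delta^\star Z_s-N_s)$, with $Z_s\ge1$ and $N_s=O(\delta_0e^{-\delta_0})$. Every other term, including cross terms from shared gaps above $\Delta^\star$ and from $b$'s unshared gaps, is $O(e^{-\delta_0})$ relative to it. Hence $|\Phi(a)-\Phi(b)|\ge\tfrac12\Delta^\star e^{-\Delta^\star}\ge\kappa e^{-2\kappa}$. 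The packing bound $2\kappa\ge(L-1)(2\log L+3)>2(\log L)^2$ for $L\ge2$ then finishes.

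(ii) Alternatively, use equal lengths. The unshared parts of the two gap sets have the same size, so $b$ also has an unshared gap. That gap is $\ge\Delta^\star+\delta_0$ by cross-separation and $\le2\kappa$. Hence $\Delta^\star\le2\kappa-\delta_0$, and $\Delta^\star e^{-\Delta^\star}\ge\delta_0e^{\delta_0}e^{-2\kappa}\ge(2\log L+3)e^{3}L^{2}e^{-2\kappa}$. This leaves ample room for your $1/L$, and it is exactly why the example above is harmless.

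Two smaller corrections:

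In Case~1, ``even for $\kappa=0$'' is false. $\delta_0/2$ can be $\log L+3/2<(\log L)^2$ once $L\ge7$. The correct reason is that two distinct maxima force $2\kappa\ge\delta_0$, so $(\log L)^2e^{-2\kappa}\le(\log L)^2e^{-3}L^{-2}<1$.

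Because $\mathrm{boltz}$ is permutation invariant, the statement as written is false for a nontrivially permuted copy. Your Case~2 silently needs the entry \emph{sets} to differ, since otherwise $\Delta^\star$ does not exist. That is the reading the paper actually uses ($\mathcal{V}^{(i)}\neq\mathcal{V}^{(j)}$), so state it explicitly.
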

	Based on the core results of Lemma \ref{lemma:core-results-proving-single-cm} and \ref{lemma:botlz-operator}, we now can prove Lemma \ref{lemma:single-layer-cm-mapping}.
	
	\begin{proof}[\textbf{Proof of Lemma \ref{lemma:single-layer-cm-mapping}}]
		There are three main targets,
		\begin{enumerate}
			\item Show that the upper bound of the contextual mapping is  $r=r_{\max}+\frac{\beta}{4}$.
			\item Prove that the constructed self-attention head maps the different tokens to unique ids.
			\item Demonstrate that the self-attention layer distinguishes between the duplicate input tokens within different contexts.
		\end{enumerate} 
		For the first target, since the self-attention layer with one Softmax-head $H$ is represented as
		\begin{equation}
			\label{eq:singe-head-attention-layer}
			\mathrm{Attn}(X)=X+H=X+W_{O}W_{V}X\sigma_{S}\left[ (W_{K}X)^{\top}W_{Q}X\right],
		\end{equation}
		we can construct a head satisfying $\norm{H}_{2}\le{\beta}/{4}$ such that 
		\begin{equation}
			\big\Vert{\mathrm{Attn}(X^{(i)})_{:,k}}\big\Vert_{2}\le \big\Vert{X_{:,k}^{(i)}}\big\Vert_{2}+\big\Vert{H_{:,k}^{(i)}}\big\Vert_{2}\le r_{\max}+\frac{\beta}{4}\coloneq r,~~\mathrm{for~all}~i\in[N], k\in[L].
		\end{equation}
		It is obvious that we can also have the lower bound
		\begin{equation}
			\label{eq:rmin-of-contextual-mapping}
			\big\Vert{\mathrm{Attn}(X^{(i)})_{:,k}}\big\Vert_{2}\ge \big\Vert{X_{:,k}^{(i)}}\big\Vert_{2}-\big\Vert{H_{:,k}^{(i)}}\big\Vert_{2}\ge  r_{\min}-\frac{\beta}{4}\coloneq r_{2},~~\mathrm{for~all}~i\in[N], k\in[L].
		\end{equation}
		For the second target, based on Lemma \ref{lemma:core-results-proving-single-cm}, we set $W_{K}=uv^{\top}\in\mathbb{R}^{m\times d}$ and $W_{Q}=u'v^{\top}\in\mathbb{R}^{m\times d}$ with $u,u'\in\mathbb{R}^{m}$ and $v\in\mathbb{R}^{d}$. Let $\delta_{0}=2\log L+3$ and fix these $u,u'$ by 
		\begin{equation}
			\label{eq:uu'}
			\abs{u^{\top}u'}=(\abs{\mathcal{V}} + 1)^4\frac{d\delta_{0}}{\beta r_{\min}},
		\end{equation}
		so we have for any $v_{a},v_{b},v_{c}\in\mathcal{V}$ with $v_{a}\neq v_{b}$,
		\begin{equation}
			\label{eq:two-inequalities-single-cm}
			\left|\left({W}_{K}{v}_a\right)^{\top}{W}_{Q}{v}_c-\left({W}_{K}{v}_b\right)^{\top}{W}_{Q}{v}_c\right|\ge \delta_{0} ~~ \text{and} ~~\frac{1}{(|\mathcal{V}| + 1)^2\sqrt{d}}\norm{{v}_c}_{2}\leq|{v}^{\top}{v}_c|\leq\norm{{v}_c}_{2}.
		\end{equation}
		Furthermore, let the other two weight matrices \(W_V\in\mathbb{R}^{m\times d}\) and \(W_O\in\mathbb{R}^{d\times m}\) satisfy $W_{V}= u''v^{\top}$ for any nonzero vector $u''\in\mathbb{R}^{m}$ and  $\norm{W_{O}u''}_{2}=\frac{\beta}{4r_{\max}}$.  As a result, the self-attention head $H$ can be bounded, since for any $k\in[L]$
		\begin{equation}
			\label{eq:head-bound}
			\begin{aligned}
				\Big\Vert{H_{:,k}^{(i)}}\Big\Vert_{2}&=\Big\|W_OW_VX^{(i)}\sigma_{S}\Big[\Big(W_KX^{(i)}\Big)^{\top}\Big(W_QX_{:,k}^{(i)}\Big)\Big]\Big\|_{2}=\Big\|\sum_{k' = 1}^{n}\sigma_{S}[\cdot]_{k'}W_OW_VX^{(i)}_{:,k'}\Big\|_{2}\\
				&\le\sum_{k' = 1}^{n}\sigma_{S}[\cdot]_{k'}\left\|W_OW_VX^{(i)}_{:,k'}\right\|_{2}\le\max_{k'\in[L]}\left\|W_OW_VX^{(i)}_{:,k'}\right\|_{2}\\
				&=\max_{k'\in[L]}\norm{W_{O}u''v^{\top}X^{(i)}_{:,k'}}_{2}=\max_{k'\in[L]}\abs{v^{\top}X^{(i)}_{:,k'}}\cdot\norm{W_{O}u''}_{2}\\
				&\le\max_{k'\in[L]}\norm{X^{(i)}_{:,k'}}_{2}\cdot \frac{\beta}{4r_{\max}}\le \frac{\beta}{4},
			\end{aligned}
		\end{equation}
		where $\sigma_{S}[\cdot]_{k'}$ represents the $k'$ element of $\sigma_{S}[(W_KX^{(i)})^{\top}(W_QX_{:,k}^{(i)})]$.
		
		With \eqref{eq:head-bound} and the $\beta$-separatedness of $X$, it is easy to show that
		\begin{align*}
			\left\|\mathrm{Attn}(X^{(i)})_{:,k}-\mathrm{Attn}(X^{(j)})_{:,l}\right\|_{2}&=\left\|X_{:,k}^{(i)}-X_{:,l}^{(j)}+H^{(i)}_{:,k}-H^{(j)}_{:,l}\right\|_{2}\\
			&\geq\left\|X_{:,k}^{(i)}-X_{:,l}^{(j)}\right\|_{2}-\left\|H^{(i)}_{:,k}\right\|_{2}-\left\|H^{(j)}_{:,l}\right\|_{2}\\
			&\ge\beta-\frac{\beta}{4}-\frac{\beta}{4}=\frac{\beta}{2},
		\end{align*}
		for any \(i,j\in[N]\) and \(k,l\in[L]\) such that \(X_{:,k}^{(i)}\neq X_{:,l}^{(j)}\). Till now we have already shown that this constructed self-attention layer can distinguish different tokens.
		
		For the last and most essential target that the designed self-attention layer $\mathrm{Attn}(X)$ can distinguish $X^{(i)}_{:,k}=X^{(j)}_{:,l}$ with $\mathcal{V}^{(i)}\neq\mathcal{V}^{(j)}$, we first define $a^{(i)}$ and $a^{(j)}$ as
		\[
		a^{(i)}=\Big(W_KX^{(i)}\Big)^{\top}W_QX_{:,k}^{(i)}\in\mathbb{R}^L ~\text{and}~~ a^{(j)}=\Big(W_KX^{(j)}\Big)^{\top}W_QX_{:,l}^{(j)}\in\mathbb{R}^L,
		\]
		with the weight matrices \(W_O,W_V,W_K,W_Q\) defined in the former targets.
		
		From \eqref{eq:uu'}-\eqref{eq:two-inequalities-single-cm}, we have that \(a^{(i)}\) and \(a^{(j)}\) are token-wise \((\kappa,\delta_{0})\)-separated where $\kappa$ is computed by for any $k'\in[L]$,
		\begin{equation}
			\nonumber
			\begin{aligned}
				\abs{a_{k'}^{(i)}}&=\abs{\left(W_KX^{(i)}_{:,k'}\right)^{\top}W_QX_{:,k}^{(i)}}=\abs{\left (X^{(i)}_{:,k'}\right )^{\top}v}\cdot\abs{u^{\top}u'}\cdot\abs{v^{\top}X^{(i)}_{:,k}}\\
				&\le (\abs{\mathcal{V}} + 1)^4\frac{d\delta_{0} r_{\max}^{2}}{\beta r_{\min}}\coloneq \kappa.
			\end{aligned}
		\end{equation}
		Since \(\mathcal{V}^{(i)}\neq\mathcal{V}^{(j)}\) and there is no duplicate token in \(X^{(i)}\) and \(X^{(j)}\) respectively, we obtain from Lemma \ref{lemma:botlz-operator} that
		\begin{equation}
			\label{eq:boltz-separa}
			\begin{aligned}
				\left|\text{boltz}(a^{(i)})-\text{boltz}(a^{(j)})\right|&=\left|(a^{(i)})^{\top}\sigma_{S}[a^{(i)}]-(a^{(j)})^{\top}\sigma_{S}[a^{(j)}]\right|>\delta_{1}=(\log L)^2e^{-2\kappa}.
			\end{aligned}
		\end{equation}
		For the assumed \(X_{:,k}^{(i)} = X_{:,l}^{(j)}\), we have
		\begin{equation}
			\label{eq:attention-separa}
			\begin{aligned}
				&\abs{(a^{(i)})^{\top}\sigma_{S}[a^{(i)}]-(a^{(j)})^{\top}\sigma_{S}[a^{(j)}]}\\
				&=\abs{\left(W_{Q}X_{:,k}^{(i)}\right)^{\top}W_K\left(X^{(i)}\sigma_{S}[a^{(i)}]-X^{(j)}\sigma_{S}[a^{(j)}]\right)}\\
				&=\abs{\left(X_{:,k}^{(i)}\right)^{\top}vu'^{\top}uv^{\top}\left(X^{(i)}\sigma_{S}[a^{(i)}]-X^{(j)}\sigma_{S}[a^{(j)}]\right)}\\
				&=\abs{\left(X_{:,k}^{(i)}\right)^{\top}v}\cdot\abs{u'^{\top}u}\cdot\abs{v^{\top}X^{(i)}\sigma_{S}[a^{(i)}]-v^{\top}X^{(j)}\sigma_{S}[a^{(j)}]}\\
				&\le (\abs{\mathcal{V}} + 1)^4\frac{d\delta_{0} r_{\max}}{\beta r_{\min}}\cdot\abs{v^{\top}X^{(i)}\sigma_{S}[a^{(i)}]-v^{\top}X^{(j)}\sigma_{S}[a^{(j)}]},
			\end{aligned}
		\end{equation}
		where the last inequality follows from \eqref{eq:uu'}-\eqref{eq:two-inequalities-single-cm}. Therefore, we have when $X_{:,k}^{(i)}=X_{:,l}^{(j)}$,
		\begin{align*}
			\left\|\mathrm{Attn}(X^{(i)})_{:,k}-\mathrm{Attn}(X^{(j)})_{:,l}\right\|_{2}&=\left\|H^{(i)}_{:,k}-H^{(j)}_{:,l}\right\|_{2}\\
			&=\left\|W_OW_VX^{(i)}\sigma_{S}[a^{(i)}]-W_OW_VX^{(j)}\sigma_{S}[a^{(j)}]\right\|_{2}\\
			&=\norm{W_{O}u''}_{2}\cdot\abs{v^{\top}X^{(i)}\sigma_{S}[a^{(i)}]-v^{\top}X^{(j)}\sigma_{S}[a^{(j)}]}\\
			&\ge \frac{\beta^{2} r_{\min}\delta_{1}}{ 4(\abs{\mathcal{V}} + 1)^4d\delta_{0} r_{\max}^{2}}\coloneq\gamma,
		\end{align*}
		where the last inequality gains from \eqref{eq:boltz-separa}-\eqref{eq:attention-separa}, $\delta_{0} = 2\log L+3$ and $\delta_{1}=(\log L)^{2}e^{-2\kappa}$ with \[\kappa=(\abs{\mathcal{V}} + 1)^4\frac{d\delta_{0} r_{\max}^{2}}{\beta r_{\min}}.\]
		Until now we have proved that the constructed single self-attention layer \eqref{eq:singe-head-attention-layer} is an $(r,\gamma)$-contextual mapping, where $W_{V}=u''v^{\top}, W_{K}= uv^{\top}, W_{Q}=u'v^{\top}$ and $\norm{W_{O}u''}_{2}=\frac{\beta}{4r_{\max}}.$
	\end{proof}

	\subsubsection{Value Mapping $f_{\mathcal{T}3}$}
	\label{appendix:proof-of-value-mapping}
	\noindent \textbf{Lemma \ref{lemma:value-mapping}} (Value Mapping $f_{\mathcal{T}3}$; Restatement)\textbf{.} \textit{Suppose $Y_{G}$ is the output matrix of $G$. Let $\delta, \delta^{*}\in(0,1),M=\lfloor\frac{1}{\delta+\delta^{*}}\rfloor, d,L\in\mathbb{N}_{+}$ and $d_{0}=dL$, there exists a value mapping $f_{\mathcal{T}3}:\mathbb{R}^{d\times L}\to\mathbb{R}^{d\times L}$ composed of $2LM^{d_{0}}+1$ feed-forward layers such that $f_{\mathcal{T}3}\circ f_{\mathcal{T}2}(G)=Y_{G},\forall G\in\mathcal{G}_{\delta,\delta^{*}}^{p}.$}
	\begin{proof}[\textbf{Proof of Lemma \ref{lemma:value-mapping}}]
		For the sake of convenience, we denote
		\begin{align*}
			r&=\sqrt{d}(L+1)+\frac{\sqrt{d}(\delta+\delta^{*})}{4}, ~\mathrm{and}~ r_{2}=\sqrt{d}-\frac{\sqrt{d}(\delta+\delta^{*})}{4},\\
			\gamma_{1}&=\frac{\delta^{2}(\log L)^{2}}{ 4\sqrt{d}(2\log L\!+\!3)(L\!+\!1)^{2}(LM^{d_{0}}\! +\! 1)^4}\exp\Big(-(LM^{d_{0}}\!+\! 1)^4\frac{2d(2\log L\!+\!3)(L\!+\! 1)^{2}}{\delta+\delta^{*}}\Big),\\
			\gamma_{2}&=\frac{(\delta^{*})^{2}(\log L)^{2}}{ 4\sqrt{d}(2\log L\!+\!3)(L\!+\!1)^{2}(LM^{d_{0}}\! +\! 1)^4}\exp\Big(-(LM^{d_{0}}\!+\! 1)^4\frac{2d(2\log L\!+\!3)(L\!+\! 1)^{2}}{\delta+\delta^{*}}\Big).
		\end{align*}
		Based on \eqref{eq:r-gamma-contextual-mapping-to-the-quantization-grid}, for any two different grid points $G^{(i)}, G^{(j)}, i\neq j$, we have
		\begin{equation}
			\label{eq:cm-eq-used-in-value-mapping}
			\norm{f_{\mathcal{T}2}(G^{(i)})_{:,k}-f_{\mathcal{T}2}(G^{(j)})_{:,l}}_{2}\ge \gamma >\gamma_{1}+\gamma_{2}, ~ \mathrm{for~all}~ k\neq l, k\in[L],l\in[L].
		\end{equation}
		Recall that $M=\lfloor\frac{1}{\delta+\delta^{*}}\rfloor$ is the number of grid points per dimension. We denote the total grid points as $M_{\mathrm{all}} = M^{dL}=M^{d_{0}}$, arrange all the grid points in a certain order, and assign the serial number \(i\in[M_{\mathrm{all}}]\) to each of the point and its output, i.e., $G^{(i)}$ and $Y_{G}^{(i)}$. 
		
		In this proof, we are considering the current input $\bar{X}\in\{ f_{\mathcal{T}2}\circ f_{\mathcal{T}1}(X)\mid X\in \bigcup_{G\in\mathcal{G}_{\delta,\delta^{*}}^{p}}\mathcal{S}_{G}\}$. Now take $i=1,2,\ldots,M_{\mathrm{all}}$ and $k=1,2,\ldots,L$ respectively, and we design a subunit composed of two following feed-forward layers, 
		\begin{equation}
			\label{value-mapping-layer1}
			\mathrm{FF}^{(i,k,1)}(\bar{X})=\bar{X} + \sigma_{\zeta1} \left[ \bar{X} - f_{\mathcal{T}2}(G^{(i)})_{:,k}\cdot \mathds{1}_L^\top\right]\coloneq Z,
		\end{equation}
		where
		\begin{align*}
			\sigma_{\zeta1}[t]&=\frac{2dr}{\gamma_{2}}\Big (\sigma_{R}\Big[t+\frac{\gamma_{1}+\gamma_{2}}{2\sqrt{d}}\Big]-\sigma_{R}\Big[t+\frac{\gamma_{1}}{2\sqrt{d}}\Big]-\sigma_{R}\Big[t-\frac{\gamma_{1}}{2\sqrt{d}}\Big]+\sigma_{R}\Big[t-\frac{\gamma_{1}+\gamma_{2}}{2\sqrt{d}}\Big]\Big)\\
			&=\begin{cases}
				0,&t<-\dfrac{\gamma_{1}+\gamma_{2}}{2\sqrt{d}} ~~ \mathrm{or} ~~ t>\dfrac{\gamma_{1}+\gamma_{2}}{2\sqrt{d}},\\
				\dfrac{2dr}{\gamma_{2}}t+\dfrac{\sqrt{d}r(\gamma_{1}+\gamma_{2})}{\gamma_{2}},&-\dfrac{\gamma_{1}+\gamma_{2}}{2\sqrt{d}}\le t\le -\dfrac{\gamma_{1}}{2\sqrt{d}},\\
				\sqrt{d}r,&-\dfrac{\gamma_{1}}{2\sqrt{d}}\le t\le \dfrac{\gamma_{1}}{2\sqrt{d}},\\
				-\dfrac{2{d}r}{\gamma_{2}}t+\dfrac{\sqrt{d}r(\gamma_{1}+\gamma_{2})}{\gamma_{2}},&\dfrac{\gamma_{1}}{2\sqrt{d}}\le t\le \dfrac{\gamma_{1}+\gamma_{2}}{2\sqrt{d}},\\
			\end{cases}
		\end{align*} 
		and 
		\begin{equation}
			\label{value-mapping-layer2}
			\begin{aligned}
				\!\!\!\!\mathrm{FF}^{(i,k,2)}(Z)=Z+ \left(Y_{G:,k}^{(i)}+(r+K)\cdot\mathds{1}_{d}-f_{\mathcal{T}2}(G^{(i)})_{:,k}\right)\sigma_{\zeta2}\left[\mathds{1}_{d}^{\top}\cdot Z-d\sqrt{d}r\cdot\mathds{1}_{L}^{\top}\right ]-\sigma_{\zeta3}\left[Z-\sqrt{d}r\mathds{1}_{d}\cdot\mathds{1}_{L}^{\top}\right],
			\end{aligned}
		\end{equation}
		where
		\begin{align*}
			&\sigma_{\zeta2}[t]=\dfrac{1}{\gamma_{2}}\sigma_{R}\left[t\right]-\dfrac{1}{\gamma_{2}}\sigma_{R}\left[t-\gamma_{2}\right]=\begin{cases}
				0,&t\le 0,\\
				\dfrac{1}{\gamma_{2}}t,& 0\le t\le \gamma_{2},\\
				1,& t\ge \gamma_{2},
			\end{cases}\\
			&\sigma_{\zeta3}[t]=\frac{\sqrt{d}r}{\gamma_{2}}\sigma_{R}\left[t+\gamma_{2}\right]-\frac{\sqrt{d}r}{\gamma_{2}}\sigma_{R}\left[t\right]=\begin{cases}
				0,&t\le -\gamma_{2},\\
				\dfrac{\sqrt{d}r}{\gamma_{2}}t+\sqrt{d}r,& -\gamma_{2}\le t\le 0,\\
				\sqrt{d}r,& t\ge 0.
			\end{cases}	
		\end{align*}
		$\bullet$ Layer $\mathrm{FF}^{(i,k,1)}(\bar{X})$ aims to endow the label $\sqrt{d}r$ to those elements of $\bar{X}$ in $[-\gamma_{1}/(2\sqrt{d}),\gamma_{1}/(2\sqrt{d})]$ and to output results with a form of ``$\bar{X}$ + label''. Specifically, suppose we have the target column $\bar{X}_{:,p}$ close to $f_{\mathcal{T},c2}(G^{(i)})_{:,k}$ under the criterion $\norm{\bar{X}_{:,p}-f_{\mathcal{T},c2}(G^{(i)})_{:,k}}_{\infty}\le{\gamma_{1}}/({2\sqrt{d}})$, every element of $\bar{X}_{:,p}$ will be added with the label $\sqrt{d}r$. Based on \eqref{eq:cm-eq-used-in-value-mapping}, this criterion takes effects as 
		\begin{equation}
			\label{eq:separation-infty-norm}
			\begin{aligned}
				\norm{\bar{X}_{:,p}-f_{\mathcal{T}2}(G^{(j)})_{:,l}}_{\infty}&\ge \norm{f_{\mathcal{T}2}(G^{(i)})_{:,k}-f_{\mathcal{T}2}(G^{(j)})_{:,l}}_{\infty}- \norm{\bar{X}_{:,p}-f_{\mathcal{T}2}(G^{(i)})_{:,k}}_{\infty}\\
				&\ge \frac{1}{\sqrt{d}}\norm{f_{\mathcal{T}2}(G^{(i)})_{:,k}-f_{\mathcal{T}2}(G^{(j)})_{:,l}}_{2} -\norm{\bar{X}_{:,p}-f_{\mathcal{T}2}(G^{(i)})_{:,k}}_{\infty}\\
				&\ge \frac{\gamma}{\sqrt{d}}-\frac{\gamma_{1}}{2\sqrt{d}}> \frac{\gamma_{1}+2\gamma_{2}}{2\sqrt{d}}> \frac{\gamma_{2}}{2\sqrt{d}}.
			\end{aligned}
		\end{equation}
		\eqref{eq:separation-infty-norm} also indicates that $\sigma_{\zeta1}$ won't endow labels to the non-target columns because of  the smooth gap ${\gamma_{2}}/({2\sqrt{d}})$. After layer $\mathrm{FF}^{(i,k,1)}(\bar{X})$, only the target column is added with the complete label vector $\sqrt{d}r\cdot \mathds{1}_{d}$, while the labels of other columns are incomplete. 
		
		$\bullet$ As for layer $\mathrm{FF}^{(i,k,2)}(Z)$,  the part ``$\mathds{1}_{d}^{\top}\cdot Z-d\sqrt{d}r\cdot\mathds{1}_{L}^{\top}$'' first calculates the column-wise sum of elements in $Z$ and then subtracts $d\sqrt{d}r$ element-wisely. Results of all the non-target columns $\mathrm{FF}^{(i,k,1)}(\bar{X}_{:,l})=Z_{:,l}$ with $l\neq p$ will not be greater than $0$, since they have at most $d-1$ labels and
		\begin{equation}
			\nonumber
			\begin{aligned}
				\max_{l\in[L], l\neq p}\sum_{j=1}^{d}Z_{j,l}-d\sqrt{d}r&=\max_{l\in[L], l\neq p}\sum_{j=1}^{d}\bar{X}_{j,l}+(d-1)\sqrt{d}r-d\sqrt{d}r\\
				&\le \max_{l\in[L]}\norm{\bar{X}_{:,l}}_{1}-\sqrt{d}r\le \sqrt{d}\max_{l\in[L]}\norm{\bar{X}_{:,l}}_{2}-\sqrt{d}r\le0.
			\end{aligned}
		\end{equation} 
		Furthermore, $\sigma_{\zeta2}$ maps the target column to $1$ successfully, since for the target column, based on \eqref{eq:rmin-of-contextual-mapping} we have
		\begin{equation}
			\nonumber
			\begin{aligned}
				\norm{\bar{X}_{:,p}}_{1}\ge \norm{\bar{X}_{:,p}}_{2}\ge r_{2}>\gamma_{2}.
			\end{aligned}
		\end{equation}
		The part $\sigma_{\zeta3}$ just eliminates the temporary labels remaining in other non-target columns. Remind that we suppose the target column $\bar{X}_{:,p}$ is close to $f_{\mathcal{T},c2}(G^{(i)})_{:,k}$. Thus for this current input $\bar{X}\in\{ f_{\mathcal{T}2}\circ f_{\mathcal{T}1}(X)\mid X\in \bigcup_{G\in\mathcal{G}_{\delta,\delta^{*}}^{p}}\mathcal{S}_{G}\}$, we have $\mathrm{FF}^{(i,k,2)}\circ\mathrm{FF}^{(i,k,1)}(\bar{X}_{:,p})=Y_{G:,k}^{(i)}+(r+K)\cdot\mathds{1}_{d}$ and other columns remain unchanged. Result of this subunit will not involve in the calculations of other subunits due to the existence of $r+K.$ 
		
		$\bullet$ We have at most  $LM_{\mathrm{all}}$ such subunits with anchor vectors $f_{\mathcal{T}2}(G^{(i)})_{:,k},$ for all $ i\in[M_{\mathrm{all}}], k\in[L]$. Stacking these $2LM_{\mathrm{all}}=2LM^{d_{0}}$ feed-forward layers, any contextual grid column $f_{\mathcal{T}2}(G^{(i)})_{:,k}$ is mapped to the target value vector $Y_{G:,k}^{(i)}+(r+K)\cdot\mathds{1}_{d}$.  We only require an extra feed-forward layer $\mathrm{FF^{(extra)}}$ to subtract the redundant $(r+K)\cdot\mathds{1}_{d}$ vector in each column. Therefore, we construct the value mapping composed of $2LM^{d_{0}}+1$ feed-forward layers  
		\[f_{\mathcal{T}3}=\mathrm{FF}^{(\rm extra)}\circ\mathrm{FF}^{(M^{d_{0}},L,2)}\circ\mathrm{FF}^{(M^{d_{0}},L,1)}\circ\cdots\circ\mathrm{FF}^{(1,1,2)}\circ\mathrm{FF}^{(1,1,1)},\]
		such that $f_{\mathcal{T}3}\circ f_{\mathcal{T}2}(G)=Y_{G},\forall G\in\mathcal{G}_{\delta,\delta^{*}}^{p}.$ Till now, we have already proven Lemma \ref{lemma:value-mapping}. 
	\end{proof}
	\begin{remark}[Extensions]
		Actually, we can discuss a more general conclusion for the value mapping $f_{\mathcal{T}3}$ applied on \[\bar{X}\in\{ f_{\mathcal{T}2}\circ f_{\mathcal{T}1}(X)\mid X\in \bar{\mathcal{X}}^{p}=[0,1]^{d\times L}+E, E\mathrm{~is~defined~in~}\eqref{eq:Positional-encoding}\}.\] 
		Lemma \ref{lemma:value-mapping} corresponds to the case when $X\in \bigcup_{G\in\mathcal{G}_{\delta,\delta^{*}}^{p}}\mathcal{S}_{G}$. For completeness, now we discuss the case when ${X}\in \bar{\mathcal{X}}^{p}\setminus\bigcup_{G\in\mathcal{G}_{\delta,\delta^{*}}^{p}}\mathcal{S}_{G}$. If the column $\bar{X}_{:,p}$ is close to $f_{\mathcal{T},c2}(G^{(i)})_{:,k}$, we will still obtain the calculation results as the following  \[\mathrm{FF}^{(i,k,2)}\circ\mathrm{FF}^{(i,k,1)}(\bar{X}_{:,p})=Y_{G:,k}^{(i)}+(r+K)\cdot\mathds{1}_{d}+ \bar{X}_{:,p}-f_{\mathcal{T}2}(G^{(i)})_{:,k}\approx Y_{G:,k}^{(i)}+(r+K)\cdot\mathds{1}_{d},\]
		and $f_{\mathcal{T}3}(\bar{X}_{:,p})\approx Y_{G:,k}^{(i)}$. On the other hand, the set of non-target $\bar{X}$ whose columns are not close to any columns of the anchor values will remain unchanged. This corresponding part of the value mapping $f_{\mathcal{T}3}$ can be controlled since the measure of complementary set is bounded, which will be important and useful in the subsequent verification part.
	\end{remark}
	\subsection{Verification for the Approximation Error Upper Bound}
	\label{Proof-of-proposition-theorem-upper-bound}
	\noindent \textbf{Theorem \ref{theorem:approximation-upper-bounds}} (Upper Bound of Approximation Error; Restatement)\textbf{.} \textit{Let $\alpha\in(0,1], d_{0}\in\mathbb{N}_{+},d_{0}>2\alpha,$ and $K\in\mathbb{N}_{+}$. For any target  function $f\in\mathcal{H}_{d_{0}}^{\alpha}([0,1]^{d_{0}},K)$ and any error criterion $\varepsilon>0$, there exists a Transformer function $g\in \mathcal{T}_{R}^{D}$ composed of at most $\mathcal{O}(\varepsilon^{-\frac{d_{0}}{\alpha}})$ blocks such that $\norm{f-g}_{2}<\varepsilon.$}
	\begin{proof}[\textbf{Proof of Theorem \ref{theorem:approximation-upper-bounds}}]
		The Transformer network $f_{\mathcal{T}}(X)=f_{\mathcal{T}3}\circ f_{\mathcal{T}2}\circ f_{\mathcal{T}1}(X+E)$ is composed of $d_{0}M+2LM^{d_{0}}+1$ feed-forward layers and one self-attention layer, i.e., $d_{0}M+2LM^{d_{0}}+2=\mathcal{O}(\varepsilon^{-\frac{d_{0}}{\alpha}})$ blocks, since
		\begin{equation}
			M=\lfloor\frac{1}{\delta+\delta^{*}}\rfloor\le \frac{1}{\delta+\delta^{*}}<\frac{1}{\delta}\lesssim \varepsilon^{-\frac{1}{\alpha}}, ~~\mathrm{where}~~ \delta=\mathcal{O}(\varepsilon^{\frac{1}{\alpha}}).
		\end{equation} 
		Now we first verify the upper bound of approximation error between the piece-wise constant function class $\mathcal{H}(\delta)$ and the Transformer network class $\mathcal{T}^{D}$. 
		
		Based on Lemmas from \ref{Lemma:quantization module} to \ref{lemma:value-mapping}, for any $f_{\delta}\in\mathcal{H}(\delta)$, there exists an $f_{\mathcal{T}}\in\mathcal{T}^{D}$ such that 
		\[f_{\mathcal{T}}(X)=f_{\delta}(X),~~\mathrm{for}~~X\in\bigcup\nolimits_{G\in\mathcal{G}_{\delta,\delta^{*}}}\mathcal{S}_{G}.\]
		For $X\in [0,1]^{d\times L}\setminus\bigcup_{G\in\mathcal{G}_{\delta,\delta^{*}}}\mathcal{S}_{G}$, every element of $f_{\mathcal{T}1}(X)$ is in $[1,L+1]$ based on \eqref{eq:Positional-encoding} and \eqref{eq:quantization-layer}. Furthermore, based on \eqref{eq:head-bound} we have 
		\begin{equation}
			\nonumber
			\begin{aligned}
				\norm{\bar{X}}_{2}&\coloneq \norm{f_{\mathcal{T}2}(f_{\mathcal{T}1}(X))}_{2}\\
				&\le\norm{f_{\mathcal{T}1}(X)}_{2}+\norm{W_{O}W_{V}f_{\mathcal{T}1}(X)\sigma_{S}\left[ (W_{K}f_{\mathcal{T}1}(X))^{\top}W_{Q}f_{\mathcal{T}1}(X)\right]}_{2}\\
				&\le \norm{f_{\mathcal{T}1}(X)}_{2}+\sqrt{L}\max_{k'\in[L]}\norm{f_{\mathcal{T}1}(X)_{:,k'}}_{2}\cdot \frac{\sqrt{d}(\delta+\delta^{*})}{4\sqrt{d}(L+1)}\\
				&\le \sqrt{d_{0}}(L+1)+\frac{\sqrt{d_{0}}(\delta+\delta^{*})}{4}.
			\end{aligned}
		\end{equation}
		Then for the last value mapping $f_{\mathcal{T}3}$,
		\begin{equation}
			\nonumber
			\begin{aligned}
				\norm{f_{\mathcal{T}}({X})}_{2}&=\norm{f_{\mathcal{T}3}(\bar{X})}_{2}\\
				&\le\norm{\bar{X}}_{2}+\max_{G}\Big\Vert Y_{G}+(r+K)\cdot\mathds{1}_{d\times L}-f_{\mathcal{T}2}(G)\Big\Vert_{2}\\
				&\le \norm{\bar{X}}_{2}+\big\Vert (r+K)\cdot\mathds{1}_{d\times L}\big\Vert_{2}+\max_{G}\{\big\Vert Y_{G}\big\Vert_{2}+\big\Vert f_{\mathcal{T}2}(G)\big\Vert_{2}\}\\
				&\le \sqrt{d_{0}}(L+1)+\frac{\sqrt{d_{0}}(\delta+\delta^{*})}{4}+\sqrt{d_{0}}(r+K)+\sqrt{d_{0}}K+\sqrt{L}r\\
				&=(\sqrt{d_{0}}(L+1)+\frac{\sqrt{d_{0}}(\delta+\delta^{*})}{4})\cdot(\sqrt{d}+2)+2\sqrt{d_{0}}K\\
				&\le 2\sqrt{d_{0}}(K+\frac{1}{2}(\sqrt{d}L+2L+\frac{5}{4}\sqrt{d}+\frac{5}{2}))\le 2\sqrt{d_{0}}(4d_{0}+K),
			\end{aligned}
		\end{equation}
		where the first inequality comes from that the label $\sqrt{d}r$ will be eliminated in every subunit once being gotten. And 
		the final result is with $r=\sqrt{d}(L+1)+\frac{\sqrt{d}(\delta+\delta^{*})}{4}$ and $\delta+\delta^{*}<1$. 
		
		In a nut, the approximation error for $f_{\delta}\in\mathcal{H}(\delta)$ by Transformer $f_{\mathcal{T}}\in\mathcal{T}^{D}$ is
		\begin{equation}
			\label{eq:approximation-error-piecewiseclass-Transformer}
			\begin{aligned}
				&\norm{f_{\delta}-f_{\mathcal{T}}}_{2}\\&=\Big(\int_{\bigcup_{G}\mathcal{S}_{G}} \norm{f_{\delta}(X)-f_{\mathcal{T}}(X)}_{F}^{2}\mathrm{d}X+\int_{[0,1]^{d\times L}\setminus\bigcup_{G}\mathcal{S}_{G}} \norm{f_{\delta}(X)-f_{\mathcal{T}}(X)}_{F}^{2}\mathrm{d}X\Big)^{1/2}\\
				&=\Big(\int_{\bigcup_{G}\mathcal{S}_{G}} \norm{Y_{G}-f_{\mathcal{T}3}\circ f_{\mathcal{T}2}(G)}_{F}^{2}\mathrm{d}X+\int_{[0,1]^{d\times L}\setminus\bigcup_{G}\mathcal{S}_{G}} \norm{f_{\delta}(X)-f_{\mathcal{T}}(X)}_{F}^{2}\mathrm{d}X\Big)^{1/2}\\
				&=\Big(\int_{[0,1]^{d\times L}\setminus\bigcup_{G}\mathcal{S}_{G}} \norm{f_{\mathcal{T}}(X)}_{F}^{2}\mathrm{d}X\Big)^{1/2}
				\le(1-M\delta)^{d_{0}/2}\cdot 2\sqrt{d_{0}}(4d_{0}+K).
			\end{aligned}
		\end{equation}
		Furthermore,  denote $c_{3}=(\frac{\varepsilon}{4\sqrt{d_{0}}(4d_{0}+K)})^{2/d_{0}}$ and let $\delta^{*} <\frac{(c_{3}-\delta)\delta}{2}=\mathcal{O}(\varepsilon^{\frac{2}{d_{0}}+\frac{1}{\alpha}})$ and $ \delta=\mathcal{O}(\varepsilon^{\frac{1}{\alpha}})$. Similar to \eqref{eq:1-Mdelta-bound}, we have \[\norm{f_{\delta}-f_{\mathcal{T}}}_{2}<\frac{\varepsilon}{2}.\]
		Directly with Lemma \ref{lemma:errors-holderclass-piecewiseclass} and above results, we get the approximation upper bound for any $\bar{f}\in\bar{\mathcal{H}}_{d,L}^{\alpha}$ by Transformer $f_{\mathcal{T}}$ under accuracy $\varepsilon>0$, 
		\begin{equation}
			\begin{aligned}
				\norm{\bar{f}-f_{\mathcal{T}}}_{2}&\le \norm{\bar{f}-f_{\delta}}_{2}+\norm{f_{\delta}-f_{\mathcal{T}}}_{2}<\varepsilon.
			\end{aligned}
		\end{equation}
		{Now we have proved the Proposition \ref{proposition:verification}.}
		
		Further with the reshape layer $R$ and its inverse $R^{-1}$, we can get the  approximation error upper bound for any $f\in\mathcal{H}_{d_{0}}^{\alpha}$ by a Transformer function $g\in\mathcal{T}_{R}^{D}$,
		\begin{equation}
			\begin{aligned}
				\label{eq:approximation-upper-bounds}
				\norm{f-g}_{2}&=\norm{R^{-1}\circ\bar{f}\circ R - R^{-1}\circ f_{\mathcal{T}}\circ R}_{2}\\
				&=\norm{\bar{f}-f_{\mathcal{T}}}_{2}\le \norm{\bar{f}-f_{\delta}}_{2}+\norm{f_{\delta}-f_{\mathcal{T}}}_{2}<\varepsilon.
			\end{aligned}
		\end{equation}
		And \eqref{eq:approximation-upper-bounds} means $\mathcal{E}_{\mathrm{app}}\coloneq\underset{f\in\mathcal{H}_{d_{0}}^{\alpha}}{\sup}\underset{g\in\mathcal{T}_{R}^{D}}{\inf}\norm{f-g}_{2}<\varepsilon$ with $D\lesssim\varepsilon^{-\frac{d_{0}}{\alpha}}$ for any $\varepsilon\in(0,1)$. {Now we prove Theorem \ref{theorem:approximation-upper-bounds}.}
	\end{proof}
	
\noindent \textbf{Corollary \ref{corollary:depth-width-trade-off}} (Depth-width Trade-off; Restatement)\textbf{.} \textit{For any $\varepsilon>0$ and $d_0=dL$, a Transformer architecture $\mathcal{T}_R^D$ with $D=\mathcal{O}(\varepsilon^{-d_0/\alpha})$ blocks and fixed width $W=4d$, which is capable of approximating any function $f\in\mathcal{H}_{d_0}^{\alpha}([0,1]^{d_0},K)$ with smooth index $\alpha\in(0,1]$ under error $\varepsilon$, can be realized by a Transformer $\mathcal{T}_R^{D',W'}$ comprising variable blocks $D'$, width $W'$, and two additional linear embedding layers, satisfying the condition $D'\cdot W'=\mathcal{O}(\varepsilon^{-d_0/\alpha}),$ where $D'\ge6$ and $W'\ge 4d$. }

\begin{proof}[\textbf{Proof of Corollary \ref{corollary:depth-width-trade-off}}]
	Our construction in Theorem \ref{theorem:approximation-upper-bounds} utilizes three key components: the quantization module, contextual mapping, and value mapping.  From the proofs of Lemma \ref{Lemma:quantization module} (quantization module) and Lemma \ref{lemma:value-mapping} (value mapping), the sequential construction requires $\mathcal{O}(d_0M)$ and $\mathcal{O}(LM^{d_0})$ total neurons, respectively. Crucially, subunits composed of these neurons operate on disjoint indices and do not interfere with one another (See discussions below \eqref{eq:quantization-layer} \& \eqref{value-mapping-layer2}).
	
	Leveraging the additive nature of neurons in the quantization module and value mapping, we can transform sequential depth into parallel width. Specifically, the weight matrices of the original modules are embedded into a block-diagonal structure within wider layers. For instance, to compress $n$ sequential FNN layers, parameterized by weight matrices $W_1^{(i)},W_2^{(i)}$ and bias terms $b_1^{(i)},b_2^{(i)}, i\in[n]$, into a single wider layer, we can construct as following,
	\begin{equation}
		\nonumber
		\tilde{W}_{j} = \begin{bmatrix} W_j^{(1)} & 0 & \cdots & 0 \\ 0 & W_j^{(2)} & \cdots & 0 \\ \vdots & \vdots & \ddots & \vdots \\ 0 & 0 & \cdots & W_j^{(n)} \end{bmatrix} ~\text{and}~\tilde{b}_j=\begin{bmatrix}
			b_j^{(1)}\\
			b_j^{(2)}\\
			\vdots\\
			b_j^{(n)}
		\end{bmatrix}\text{for}~ j\in\{1,2\}.
	\end{equation}
	
	To satisfy the Transformer’s requirement for equal input and output dimensions, we employ an input replication scheme: expanding the input $X \in \mathbb{R}^{d \times L}$ to $\tilde{X} = [X; X; \dots; X] \in \mathbb{R}^{nd \times L}$. This allows the wide layer to process $n$ independent functional channels simultaneously, i.e., 
	\begin{equation}
		\nonumber
		\tilde{\mathrm{FF}}(\tilde{X})=\tilde{X}+\tilde{W}_{2}\sigma_{R}\left[\tilde{W}_{1}\tilde{X}+\tilde{b}_{1}\mathds{1}_L^\top\right]+\tilde{b}_{2}\mathds{1}_L^\top.
	\end{equation}
	 By redistributing the total neuron budget, we obtain a parallel quantization module and value mapping satisfying $D'_1\cdot W_1'=\mathcal{O}(d_0M)$ and $D_2'\cdot W_2'=\mathcal{O}(LM^{d_0})$, respectively. To aggregate the independent channels after the parallel quantization, we introduce an additional linear embedding layer of $(I_d,I_d,\ldots,I_d)\in\mathbb{R}^{d\times nd}$ ($I_d$ denotes the $d \times d$ identity matrix), followed by the subtraction of the residual term $(n-1)X$. The intermediate result is then processed by the contextual mapping. Similarly, for the parallel value mapping, we expand the input and aggregate the outputs. Through these structural alignments, the constructed parallel Transformer exactly recovers the output of the initial sequential model.
	 
	 Accounting the two additional embedding layers and setting $M\lesssim \varepsilon^{-1/\alpha}$ yields the final trade-off relation $D' \cdot W' =\mathcal{O} (\varepsilon^{-d_0/\alpha})$, where the effective depth is $D' = D_1' + D_2' + 3$ and width $W' = \max(W_1', W_2', d)$. Since $D_1'\ge 1, D_2'\ge 2$ and $W=4d$, the constraints $D'\ge 6$ and $W'\ge W$ ensure the minimal architectural budget.
\end{proof}

\subsection{Further Discussions: Assumptions, Implications, and Practical Architectures}
\label{appendix:further-discussions}

To rigorously establish the approximation bounds, we introduce structural simplifications that isolate the network's fundamental mechanisms. Below, we bridge the gap between theory and practice by clarifying the connections between these mathematical assumptions and the empirical design of modern Transformers:
\begin{itemize}
	\item \textbf{Activation Functions.} We adopt ReLU activation to establish a rigorous theoretical baseline, supported by the fact that any continuous piecewise linear activation function can be exactly realized by ReLU combinations \cite{yarotsky2017error}. Besides, real-world models frequently employ smoother variants, such as GELU or SwiGLU, to facilitate better gradient flow and enhance local expressivity. Our ReLU-based analysis defines the fundamental limits of approximation power, theoretically justifying the use of these advanced and smoother activations.
	\item \textbf{Normalization.} Normalization techniques, such as LayerNorm, are omitted to cleanly isolate and analyze the core approximation power of the self-attention and feed-forward parts. While practically essential for stabilizing deep network optimization, normalization does not expand the theoretical approximation capacity. Mathematically, its magnitude adjustments can be effectively absorbed by appropriately scaling the adjacent weight parameters.
	\item \textbf{Attention Depth.} Our analysis focuses on a single attention layer, demonstrating it as the minimal structural requirement to achieve contextual mapping. Since stacking attention layers strictly preserves or improves the overall approximation power, our single-layer framework provides a baseline guarantee for the expressive capacity of more complex architectures. Deriving bounds for deep Transformers introduces significant theoretical complexity due to their highly compositional nature, making multiple attention layer extensions a natural direction for future work.
\end{itemize}

	\section{Approximation Error Lower Bound}
	\label{appendix:lower-bound}
	\paragraph{Roadmap.} This section establishes the error lower bound by analyzing the VC-dimension upper bound of Transformers.
	\begin{itemize}
		\item \textbf{Bounding the VC-dimension} (Section \ref{section:Calculation-of-Number-of-Operations-and-Parameters}): quantifies the total number of operations and parameters of proposed Transformers to derive the VC-dimension upper bound.
		\item \textbf{Main Result} (Section \ref{appendix:proof-lowerbound}): Derives the final approximation error lower bound based on above results.
	\end{itemize}
	\subsection{Total Numbers of Operations and Parameters}
	\label{section:Calculation-of-Number-of-Operations-and-Parameters}
	A detail-oriented structure of Transformer is used for obtaining the approximation upper bound in this paper. Therefore, we can calculate the number of operations and parameters part by part. Derived results will be utilized in the subsequent applications of general regression tasks in the proof of Theorem \ref{Theorem:lower-bounds-approximation-error} (Section \ref{appendix:proof-lowerbound}) and Lemma \ref{lemma:upper-bound-of-covering-number} (Section \ref{appendix:section-statistical-error-bound}). Set $\delta>0,\delta^{*}>0, M=\lfloor\frac{1}{\delta+\delta^{*}}\rfloor$, and $d,L,d_{0}=dL\in\mathbb{N}_{+}$. See the final results in Table \ref{tab:number-of-operations-parameters}.
	\begin{table}[t]
		\centering
		\caption{Compulsory numbers of operations and parameters of designed Transformer.}
		\renewcommand{\arraystretch}{1.3}
		{\small\begin{tabular}{c|c|c|c}
				\toprule
				Transformer part &\makebox[5cm][c]{Component} &Number of operations & Number of para.\\
				\hline
				\multirow{2}{*}{Quantization module $f_{\mathcal{T}1}$} & Positional encoding $E$ & $dL$ & \multirow{2}{*}{$M\!+\!4$} \\
				& $dLM$ layers of $\mathrm{FF}_{1}$ in \eqref{eq:FF-1}   & $13(dL)^{2}M\!+\!5LM\!+\!3$ &\\
				\hline
				{Contextual mapping $f_{\mathcal{T}2}$} & One self-attention layer& $dL(8d\!+\!4L\!-\!4)\!+\!2L^{2}\!-\! L$ & $4d^{2}$ \\
				\hline
				\multirow{3}{*}{Value mapping $f_{\mathcal{T}3}$} &
				$LM^{dL}$ layers of $\mathrm{FF}_{2}$ in \eqref{eq:FF-2} & $(13dL\!+\!4d)LM^{dL}\!+\!7$ & \multirow{3}{*}{$2dLM^{dL}\!+\!6$} \\
				& $LM^{dL}$ layers of $\mathrm{FF}_{3}$ in \eqref{eq:FF-3}& $(12dL\!+\!5L\!+\!2d)LM^{dL}\!+\!7$ &\\
				&One layer of $\mathrm{FF}_{4}$&$dL$&\\
				\bottomrule
		\end{tabular}}
		\label{tab:number-of-operations-parameters}
	\end{table}
	
	\textbf{As a result}, we have the following total $t$ operations and $\omega$ parameters of $f_{\mathcal{T}}$
	\begin{equation}
		\nonumber
		\begin{aligned}
			t&=(25dL+6d+5L)LM^{dL}+13d^{2}L^{2}M+8d^{2}L+4dL^{2}+2L^{2}+5LM-2dL-L+17,\\
			\omega&=2dLM^{dL}+4d^{2}+M+10.
		\end{aligned}
	\end{equation}
	Further from Section \ref{Proof-of-proposition-theorem-upper-bound}, the Transformer function class $\mathcal{T}_{R}^{D}$ which approximates H\"{o}lder class $\mathcal{H}_{d_{0}}^{\alpha}(\mathcal{X},K)$ under the accuracy $\varepsilon>0$ has $dLM+2LM^{dL}+1$ blocks. Therefore, it is rational to assume $t,\omega$ and $D$ share the same order, that is, $t\sim \omega\sim D$. Immediately, we obtain
	\begin{equation}
		\label{eq:VC-dim-Transformer-Function-Class}
		\mathrm{VCdim}(\mathcal{T}_{R}^{D})\leq t^2\omega(\omega + 19\log_2(9\omega))\lesssim D^{4}.
	\end{equation}
	\subsubsection{Details of Computation}
	The following are details of computations of total number of parameters and operations, which serves as reference if needed. 
	
	\textbf{Quantization Module.} $f_{\mathcal{T}1}$ is
	composed of $d_{0}M$ feed-forward layers and the positional encoding $E$. The positional encoding $E$ defined in \eqref{eq:Positional-encoding} involves $dL$ addition operations and the subsequent feed-forward layer is briefly like 
	\begin{equation}
		\label{eq:FF-1}
		\mathrm{FF}^{(i,j,k+1)}_{1}(X)=X + e_{i}\Big (-\sigma_{R}[t]+\frac{\delta+\delta^{*}}{\delta^{*}}\sigma_{R}[t-\delta\cdot\mathds{1}_{d\times L}]-\frac{\delta}{\delta^{*}}\sigma_{R}\left [t-(\delta+\delta^{*})\cdot\mathds{1}_{d\times L}\right ]\Big),
	\end{equation}
	where $ i\in[d], j\in[L], k+1\in[M],$ and   $t=X-(j+k(\delta+\delta^{*}))\cdot\mathds{1}_{d\times L}$. This module has $M+4$ total parameters ``$-1,0,\ldots,M,\delta,\delta^{*}$''. 
	
	In the case of number of operations, there are $3$ ground operations for preparing ``$\delta+\delta^{*},\frac{\delta}{\delta^{*}},\frac{\delta+\delta^{*}}{\delta^{*}}$'' and $5LM$ operations for getting ``$j+k(\delta+\delta^{*})-\delta$'' and ``$j+(k-1)(\delta+\delta^{*})$''. Besides, one such feed-forward layer needs at most $10dL$ element-wise arithmetic operations and $3dL$ jump comparisons from $\sigma_{R}$. Consequently, there are $dLM\cdot 13dL+5LM+3=13(dL)^{2}M+5LM+3$ operations in total.
	
	\textbf{Contextual Mapping.} $f_{\mathcal{T}2}$ takes the form
	\begin{equation}
		\nonumber
		\mathrm{Attn}(X)=X+W_{O}W_{V}X\sigma_{S}\left[ (W_{K}X)^{\top}W_{Q}X\right], 
	\end{equation} 
	where $W_{O},W_{K},W_{Q},W_{V}\in\mathbb{R}^{m\times d}$. Here, we set $m=d$ for simplicity since the head size $m$ can be arbitrary. There are $4d^{2}$ parameters. 
	
	Now we consider the number of operations in this single-layer contextual mapping. Suppose we have a single vector $x\in\mathbb{R}^{L}$, the Softmax operator $\sigma_{S}$ needs $L$ exponential $\alpha\mapsto e^{\alpha}$ operations and $2L-1$ arithmetic operations on $x$. Further for $L$ columns input, $\sigma_{S}$ needs $L(3L-1)$ operations. As a result, $f_{\mathcal{T}2}$ has total $dL(8d+4L-4)+2L^{2}-L$ operations, including $L^{2}$ exponential, $4d^{2}L+2dL^{2}+L^{2}$ of ``$\times$'' and ``$/$'', and $4d^{2}L+2dL^{2}-4dL-L$ of ``$+$'' and ``$-$''.
	
	\textbf{Value Mapping.} $f_{\mathcal{T}3}$ consists of $LM^{dL}$ subunits and an extra feed-forward layer. For any $i\in[M^{dL}]$ and $k\in[L]$, every subunit has $\mathrm{FF}^{(i,k,1)}_{2}$ and $\mathrm{FF}^{(i,k,2)}_{3}$ defined by
	\begin{equation}
		\label{eq:FF-2}
		\mathrm{FF}^{(i,k,1)}_{2}(\bar{X})=\bar{X} + \sigma_{\zeta1} [ \bar{X} - f_{\mathcal{T}2}(G^{(i)})_{:,k}\cdot \mathds{1}_L^\top]\coloneq Z,
	\end{equation}
	where 
	\begin{equation}
		\nonumber
		\sigma_{\zeta1}[t]=\frac{2dr}{\gamma_{2}}\Big (\sigma_{R}\Big[t+\frac{\gamma_{1}+\gamma_{2}}{2\sqrt{d}}\Big]-\sigma_{R}\Big[t+\frac{\gamma_{1}}{2\sqrt{d}}\Big]-\sigma_{R}\Big[t-\frac{\gamma_{1}}{2\sqrt{d}}\Big]+\sigma_{R}\Big[t-\frac{\gamma_{1}+\gamma_{2}}{2\sqrt{d}}\Big]\Big),
	\end{equation}
	and
	\begin{equation}
		\label{eq:FF-3}
		\mathrm{FF}^{(i,k,2)}_{3}(Z)=Z+ (Y_{G:,k}^{(i)}+(r+K)\cdot\mathds{1}_{d}-f_{\mathcal{T}2}(G^{(i)})_{:,k})\sigma_{\zeta2}\left[\mathds{1}_{d}^{\top}\cdot Z-d\sqrt{d}r\cdot\mathds{1}_{L}^{\top}\right ]-\sigma_{\zeta3}\left[Z-\sqrt{d}r\mathds{1}_{d}\cdot\mathds{1}_{L}^{\top}\right],
	\end{equation}
	where
	\begin{align*}
		\sigma_{\zeta2}[t]=\frac{1}{\gamma_{2}}\sigma_{R}\left[t\right]-\frac{1}{\gamma_{2}}\sigma_{R}\left[t-\gamma_{2}\right], ~\mathrm{and}~ \sigma_{\zeta3}[t]=\frac{\sqrt{d}r}{\gamma_{2}}\sigma_{R}\left[t+\gamma_{2}\right]-\frac{\sqrt{d}r}{\gamma_{2}}\sigma_{R}\left[t\right].
	\end{align*}
	In a nut, there are $6$ additional global  parameters ``$\gamma_{1},\gamma_{2},r,K,d,\sqrt{d}$'' and $2d$ temporary parameters ``$Y_{G:,k}^{(i)},f_{\mathcal{T}2}(G^{(i)})_{:,k} $'' in every two layers, resulting in $2dLM^{dL}+6$ total parameters. 
	
	Moreover, these two layers need $14$ operations to prepare ``$\frac{2dr}{\gamma_{2}},2\sqrt{d},\frac{\gamma_{1}}{2\sqrt{d}},\frac{\gamma_{1}+\gamma_{2}}{2\sqrt{d}},\sqrt{d}r,d\sqrt{d}r,r+K,\frac{1}{\gamma_{2}},\frac{\sqrt{d}r}{\gamma_{2}},d\sqrt{d}r+\gamma_{2},\gamma_{2}-\sqrt{d}r$''. Layer $\mathrm{FF}_{2}^{(i,k,1)}(\bar{X})$ needs $9dL+4d$ arithmetic operations (where $4d$ steps are for linear operations between $f_{\mathcal{T}2}(G^{(i)})_{:,k}$ and $4$ constants in $\sigma_{R}[\cdot]$) and $4dL$ jumps of $\sigma_{R}$. On the other hand, $\mathrm{FF}_{3}^{(i,k,2)}(Z)$ needs $10dL+3L+2d$ arithmetic operations and $2dL+2L$ jumps. Due to $LM^{dL}$ sets, we get $(13dL+4d)LM^{dL} + (12dL+5L+2d)LM^{dL}$ total operations. The last layer $\mathrm{FF}_{4}$ of subtracting the constant $r+K$ element-wisely, needs $dL$ operations.
	\subsection{Lower Bound of the Approximation Error}
	\label{appendix:proof-lowerbound}
	
	%	\medskip
	\noindent \textbf{Theorem \ref{Theorem:lower-bounds-approximation-error} }(Lower Bound of Approximation Error; Restatement)\textbf{.} \textit{
		For any $\varepsilon>0$, a Transformer architecture $\mathcal{T}_{R}^{D}$ with $D$ blocks, which is capable of approximating any function $f\in\mathcal{H}_{d_{0}}^{\alpha}(\mathcal{X},K)$ with smooth index $\alpha\in(0,1]$ under error $\varepsilon$, must possess $t$ operations, $\omega$ parameters and $D$ blocks satisfying
		\begin{equation}
			\nonumber
			D^4\gtrsim t^2\omega(\omega + 19\log_2(9\omega)) \ge \mathrm{VCdim}(\mathcal{T}_{R}^{D})\gtrsim \varepsilon^{-\frac{d_{0}}{\alpha}}.
		\end{equation}
		Equivalently, the block number $D\gtrsim \varepsilon^{-\frac{d_{0}}{4\alpha}}$.
	}
	\begin{proof}[\textbf{Proof of Theorem \ref{Theorem:lower-bounds-approximation-error}}] 
		Our proof strategy adapts the technique developed by \cite{shen2022optimal}, extending it to handle the analysis of Transformers. As the VC-dimension roots in the scalar function class, we degenerate the output dimension of the H\"{o}lder class into one, that is,  $f\in\mathcal{H}_{d_{0}}^{\alpha}:\mathbb{R}^{d_{0}}\to\mathbb{R}$ with $d_{x}=d_{0}$ and $d_{y}=1$. This degeneration won't influence the main structure of Transformer network $\mathcal{T}^{D}$ since we apply the proper $\mathcal{E}_{\rm in}=\mathds{1}_{d_{0}}:\mathbb{R}^{d_{0}}\to\mathbb{R}^{d_{0}}$ and  $\mathcal{E}_{\rm out}=e_{1}:\mathbb{R}^{d_{0}}\to\mathbb{R}$.
		Let
		\[\mathcal{T}_R^{D} = \left \{g=\mathcal{E}_{\mathrm{out}}\circ R^{-1} \circ f_{\mathcal{T}} \circ R \circ \mathcal{E}_{\mathrm{in}}: \mathbb{R}^{d_0} \to \mathbb{R} ~\mid~ f_{\mathcal{T}} \in \mathcal{T}^{D}, d_{0}=dL\right \}.\] 
		Then Theorem \ref{Theorem:vc-dim-of-number-operations} demonstrates that
		\begin{equation}
			\label{eq:upper-M-VC}
			\mathrm{VCdim}({\mathcal{T}_{R}^{D}})\leq t^2\omega(\omega + 19\log_2(9\omega)),
		\end{equation} 
		where $t$ and $\omega$ are the total numbers of operations and parameters of $\mathcal{M}$ respectively. 
		
		Given a positive integer $M=\lfloor\frac{1}{\delta+\delta^{*}}\rfloor$ to be defined later, we  divide $[0,1]^{d_{0}}$ into $M^{d_{0}}$ non-overlapping sub-cubes $\{S_{\theta}\}_{\theta}$, and set \[S_{\theta}=\left\{x=[x_{1},x_{2},\ldots,x_{d_{0}}]^{\top}\mid x_{j}\in[\theta_{j}(\delta+\delta^{*}),\theta_{j}(\delta+\delta^{*})+\delta], j\in[d_{0}]\right\},\]
		for any index vector $\theta=[\theta_{1},\theta_{2},\ldots,\theta_{d_{0}}]^{\top}=\{0,1,\ldots,M-1\}^{d_{0}}$. 
		
		Let $S\coloneq S(\hat{x},\mu)$ denote the closed cube with center $\hat{x}\in\mathbb{R}^{d_{0}}$ and side-length $l>0$. Define $\zeta_{S}$ on $[0,1]^{d}$ corresponding to $S$, satisfying {\romannumeral1}) $\zeta_{S}(\hat{x})=(l/2)^{\alpha}/2$; {\romannumeral2}) $\zeta_{S}(x)=0$ for any $x\notin S\backslash\partial S,$ where $\partial S$ is the boundary of $S$; {\romannumeral3}) $\zeta_{S}$ is linear on the line that connects $\hat{x}$ and $x$ for any $x\in\partial S$.
		
		Define
		\[{\Phi}\coloneq\left\{\phi:\phi\mathrm{~is~a~map~from~}\{0,(\delta+\delta^{*}),2(\delta+\delta^{*}),\ldots,(M-1)(\delta+\delta^{*})\}^{d_{0}}~\mathrm{to}~\{-1,1\}\right\}.\]
		For each $\phi\in\Phi$, we define
		\[f_{\phi}(x)\coloneq\sum_{\theta\in\{0,1,\ldots,M-1\}^{d_{0}}}\phi(\theta)\zeta_{S_{\theta}}(x). \]
		We can directly obtain that  $\{f_{\phi}:\phi\in\Phi\}$ shatters $M^{d_{0}}=\mathcal{O}(\varepsilon^{-d_{0}/\alpha})$ points based on the definition. 
		
		Furthermore, we demonstrate that $\{f_{\phi}:\phi\in\Phi\}\subseteq \mathcal{H}^{\alpha}_{d_{0}}(\mathcal{X},1)$ with $\alpha\in(0,1]$. For any defined $f_{\phi}$ and $x,y\in[0,1]^{d_{0}}$, if $x,y$ are in a same cube of center $\hat{x}$ with $\norm{x-y}\le l$,
		\begin{align*}
			\abs{f_{\phi}(x)-f_{\phi}(y)}&=\abs{\zeta_{S_{\theta}}(x)-\zeta_{S_{\theta}}(y)}=\abs{\frac{l^{\alpha-1}}{2^{\alpha}}\norm{x-\hat{x}}-\frac{l^{\alpha-1}}{2^{\alpha}}\norm{y-\hat{x}}}\\
			&\le \frac{l^{\alpha-1}}{2^{\alpha}}\norm{x-y}^{1-\alpha+\alpha}\le \frac{1}{2^{\alpha}}\norm{x-y}^{\alpha}\le\norm{x-y}^{\alpha}.
		\end{align*}
		If $x,y$ are in different cubes with $\norm{x-y}\ge \frac{l}{2}$,
		\begin{align*}
			\abs{f_{\phi}(x)-f_{\phi}(y)}&\le\abs{\zeta_{S_{\theta}}(x)}+\abs{\zeta_{S_{\theta^{\prime}}}(y)}\le \frac{1}{2}\cdot\left (\frac{l}{2}\right )^{\alpha}+\frac{1}{2}\cdot\left (\frac{l}{2}\right )^{\alpha}\le \norm{x-y}^{\alpha}.
		\end{align*}
		For any accuracy $\varepsilon>0$ and each $\phi\in\Phi$, based on Theorem \ref{theorem:approximation-upper-bounds}, there is a Transformer function $g_{\phi,t_{\mathcal{T}},\omega_{\mathcal{T}}}\in\mathcal{T}^{D}_{R}$ with $t_{\mathcal{T}}$ operations and $\omega_{\mathcal{T}}$ properly adjusted parameters, which can approximate the $f_{\phi}\in\mathcal{H}_{d_{0}}^{\alpha}$  such that
		\[\norm{g_{\phi, t_{\mathcal{T}},\omega_{\mathcal{T}}}-f_{\phi}}_{2}\le \varepsilon.\]
		As $g_{\phi, t_{\mathcal{T}},\omega_{\mathcal{T}}}$ and $f_{\phi}$ are both bounded and defined on the compact domain $[0,1]^{d_{0}}$, the metrics $L^{2}$ and $L^{\infty}$ are equal. Consequently, we can also have 
		\[\norm{g_{\phi, t_{\mathcal{T}},\omega_{\mathcal{T}}}-f_{\phi}}_{\infty}\le \varepsilon+\varepsilon/81.\]
		Therefore, we obtain for each $\phi\in\Phi$,
		\[\abs{g_{\phi, t_{\mathcal{T}},\omega_{\mathcal{T}}}(x)-f_{\phi}(x)}\le\frac{82}{81}\varepsilon,\mathrm{~~~for~any~}x\in[0,1]^{d_{0}}\backslash \mathcal{X}_{\phi,\mathrm{zero}},\]where $\mathcal{X}_{\phi,\mathrm{zero}}$ is a null set under the Lebesgue measure. And $\mathcal{X}_{\mathrm{zero}}\coloneq\bigcup_{\phi}\mathcal{X}_{\phi,\mathrm{zero}}$ is also a null set. So	
		\begin{equation}
			\label{eq:lower-1}
			\abs{g_{\phi, t_{\mathcal{T}},\omega_{\mathcal{T}}}(x)-f_{\phi}(x)}\le\frac{82}{81}\varepsilon,\mathrm{~~~for~any~}\phi\in\Phi \mathrm{~and~}x\in[0,1]^{d_{0}}\backslash \mathcal{X}_{\mathrm{zero}}.
		\end{equation}
		Now set $\frac{1}{\delta+\delta^{*}}=(\frac{9\varepsilon}{2})^{-\frac{1}{\alpha}}$, and the side-length of $S_{\theta}$ is $\frac{1}{M}=\frac{1}{\lfloor(9\varepsilon/2)^{-1/\alpha}\rfloor}$. We have for each $\theta\in\{0,1,\ldots,M-1\}^{d_{0}}$ and any $x\in\frac{1}{10}S_{\theta}$ (the closed cube shares $\frac{1}{10}$ of side-length and same center $x_{S_{\theta}}$ of $S_{\theta}$),
		\begin{equation}
			\label{eq:lower-2}
			\abs{f_{\phi}(x)}=\abs{\zeta_{S_{\theta}}(x)}\ge \frac{9}{10}\abs{\zeta_{S_{\theta}}(x_{S_{\theta}})}=\frac{9}{10}\cdot\frac{1}{2}\frac{1}{(2M)^{\alpha}}\ge \frac{81}{80}\varepsilon.
		\end{equation} 
		Since $(\frac{1}{10}S_{\theta})\backslash\mathcal{X}_{\mathrm{zero}}$ is not empty, for each $\theta\in\{0,1,\ldots,M-1\}^{d_{0}}$ and each $\phi\in\Phi$, with \eqref{eq:lower-1} and \eqref{eq:lower-2} there exists a point $x_{\theta}\in(\frac{1}{10}S_{\theta})\backslash\mathcal{X}_{\mathrm{zero}}$ such that
		\begin{equation}
			\label{eq:same-label}
			\abs{f_{\phi}(x_{\theta})}\ge \frac{81}{80}\varepsilon>\frac{82}{81}\varepsilon\ge \abs{g_{\phi, t_{\mathcal{T}},\omega_{\mathcal{T}}}(x_{\theta})-f_{\phi}(x_{\theta})}.
		\end{equation}
		\eqref{eq:same-label} means $g_{\phi, t_{\mathcal{T}},\omega_{\mathcal{T}}}(x_{\theta})$ and $f_{\phi}(x_{\theta})$ have the same sign for each $\theta\in\{0,1,\ldots,M-1\}^{d_{0}}$ and $\phi\in\Phi$. This demonstrates $\{g_{\phi, t_{\mathcal{T}},\omega_{\mathcal{T}}}(x_{\theta})\mid \phi\in\Phi\}$ shatters $\{x_{\theta}\mid\theta\in\{0,1,\ldots,M-1\}^{d_{0}}\}$. Consequently,
		\[\mathrm{VCdim}(\mathcal{T}_{R}^{D})\ge\mathrm{VCdim}(\{g_{\phi, t_{\mathcal{T}},\omega_{\mathcal{T}}}(x_{\theta})\mid \phi\in\Phi\})\ge M^{d_{0}}=\lfloor(9\varepsilon/2)^{-1/\alpha}\rfloor^{d_{0}}\gtrsim\varepsilon^{-d_{0}/\alpha}.\]
		With \eqref{eq:upper-M-VC}, there exists
		\begin{equation}
			t^2\omega(\omega + 19\log_2(9\omega))\ge \mathrm{VCdim}(\mathcal{T}_{R}^{D}) \gtrsim\varepsilon^{-d_{0}/\alpha}.
		\end{equation}
		Further with \eqref{eq:VC-dim-Transformer-Function-Class}, the main result of Theorem \ref{Theorem:lower-bounds-approximation-error} is derived.
	\end{proof}	
	\section{Details of Applications in the Regression Task}
	\label{appendix:regression-task}
	\paragraph{Roadmap.} This section details the application of our theory to the general regression task.
	\begin{itemize}
		\item \textbf{Background} (Section \ref{Appendix:error-decom-information}): Provides the basic knowledge of the error decomposition of excess risk.
		\item \textbf{Statistical Analysis} (Section \ref{appendix:section-statistical-error-bound}): Establishes the upper bound of the statistical error.
		\item \textbf{Main Result} (Section \ref{app-subsec-convergence-rate}): Combines above results and derives the final convergence rate of the excess risk.
	\end{itemize}
	\subsection{Error Decomposition of Excess Risk}
	\label{Appendix:error-decom-information}
	\begin{figure*}[h]
		\centering
		\includegraphics[width=0.55\linewidth]{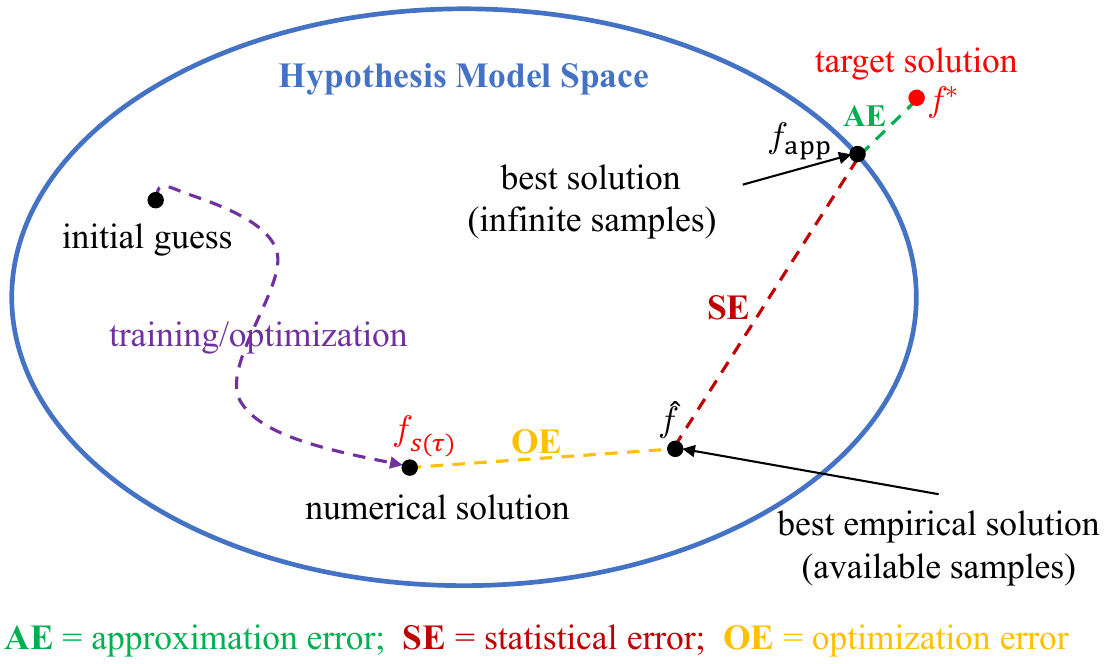}
		\caption{Diagram: Error decomposition of the excess risk.}
		\label{fig:errordecomposition}
	\end{figure*}
	Via the statistical learning, the total error between any obtainable estimator $f_{s(\tau)}$ and the target function $f^{*}$ can be measured by the expectation of the excess risk,. The error can be decompose into three parts. 
	\begin{lemma}[Error Decomposition of Excess Risk]
		\label{lemma:error-decomposition-of-excess-risk}
		Given an i.i.d sample set $\mathbb{D}=\{z^{(i)}=(x^{(i)},y^{(i)})\}_{i=1}^{N}$, the target function space $\mathcal{H}_{d_{0},1}^{\alpha}$ and the model space $\mathcal{T}_{R}^{D}$, for the actual estimator $f_{s(\tau)}\in\mathcal{F}$ obtained via minimizing the empirical loss $\widehat{\mathcal{L}}(f)=\frac{1}{N}\sum_{i=1}^{N}\left(f(x^{(i)}-y^{(i)})\right)^{2}$ by a solver $s(\tau)$, the excess risk has the following decomposition \begin{equation}
			\mathbb{E_{D}}\left[\mathcal{E}(f_{s(\tau)})\right]=\mathbb{E_{D}}\left[\mathcal{L}(f_{s(\tau)})-\mathcal{L}(f^{*})\right]\le \mathcal{E}_{\mathrm{sta}}+2\mathcal{E}_{\mathrm{app}}^2+2\tau,
		\end{equation}
		where $\mathcal{L}$ is the population risk, $\mathcal{E}_{\mathrm{app}}=\sup_{p\in\mathcal{H}_{d_{0},1}^{\alpha}}\!\!\inf_{f\in\mathcal{T}_{R}^{D}}\norm{f-p}_{2}$ is the approximation error, $\mathcal{E}_{\mathrm{sta}}\coloneq\mathbb{E_{D}}[\mathcal{L}(f_{s(\tau)})-2\widehat{\mathcal{L}}(f_{s(\tau)})+\mathcal{L}(f^{*})]$ is the statistical error, and the optimization error $\tau$.
	\end{lemma}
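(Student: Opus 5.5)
We will prove Lemma \ref{lemma:error-decomposition-of-excess-risk} by a purely algebraic telescoping of the population risk, followed by an expectation over the sample $\mathbb{D}$. The starting point is the identity
\begin{equation*}
\mathcal{L}(f_{s(\tau)})-\mathcal{L}(f^{*})
=\bigl[\mathcal{L}(f_{s(\tau)})-2\widehat{\mathcal{L}}(f_{s(\tau)})+\mathcal{L}(f^{*})\bigr]
+2\bigl[\widehat{\mathcal{L}}(f_{s(\tau)})-\mathcal{L}(f^{*})\bigr].
\end{equation*}
Taking $\mathbb{E}_{\mathbb{D}}$, the first bracket is exactly $\mathcal{E}_{\mathrm{sta}}$ as defined in \eqref{eq:statistical-error}. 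It therefore remains to show
\[
\mathbb{E}_{\mathbb{D}}\bigl[\widehat{\mathcal{L}}(f_{s(\tau)})-\mathcal{L}(f^{*})\bigr]\le \mathcal{E}_{\mathrm{app}}^{2}+\tau .
\]

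For the second bracket we introduce a deterministic comparison function. Fix any $\bar f\in\mathcal{T}_{R}^{D}$, to be chosen as a (near-)best approximator of $f^{*}$, i.e.\ the $f_{\rm app}$ referred to in the notation table. By \eqref{eq:best-numerical-solution} and the optimality of the empirical risk minimizer $\hat f$ in \eqref{eq:best-empirical-solution}, we have
\[
\widehat{\mathcal{L}}(f_{s(\tau)})\le \widehat{\mathcal{L}}(\hat f)+\tau\le \widehat{\mathcal{L}}(\bar f)+\tau .
\]
Since $\bar f$ does not depend on $\mathbb{D}$ and the samples are i.i.d., $\mathbb{E}_{\mathbb{D}}[\widehat{\mathcal{L}}(\bar f)]=\mathcal{L}(\bar f)$. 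Hence
\[
\mathbb{E}_{\mathbb{D}}\bigl[\widehat{\mathcal{L}}(f_{s(\tau)})-\mathcal{L}(f^{*})\bigr]\le \mathcal{L}(\bar f)-\mathcal{L}(f^{*})+\tau .
\]

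The remaining step is to identify $\mathcal{L}(\bar f)-\mathcal{L}(f^{*})$ with a squared $L_2$ distance. Since $f^{*}$ minimizes the square loss and is taken to be the regression function $\mathbb{E}[y\mid x]$ (as assumed in Theorem \ref{theorem:covergence-rate-of-excess-risk}), the cross term vanishes and
\[
\mathcal{L}(\bar f)-\mathcal{L}(f^{*})=\mathbb{E}_{x}\bigl[(\bar f(x)-f^{*}(x))^{2}\bigr].
\]
Two points need care here, and this is the main obstacle.

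First, the approximation error is measured in Lebesgue $L_2$ on $[0,1]^{d_0}$, while the expectation above is taken under the marginal $P_x$. We will assume, as is implicit in the paper, that $P_x$ has a density bounded by a constant, which we may take to be $1$ (for instance, uniform). This gives $\mathbb{E}_x[(\bar f-f^{*})^2]\le\norm{\bar f-f^{*}}_2^2$. Choosing $\bar f$ with $\norm{\bar f-f^{*}}_2\le \mathcal{E}_{\mathrm{app}}+\eta$ and letting $\eta\to0$ then yields the bound $\mathcal{E}_{\mathrm{app}}^{2}+\tau$.

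Second, the infimum defining $\mathcal{E}_{\mathrm{app}}$ need not be attained. This is handled by the same $\eta$-argument, so no best approximator is actually required.

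Combining the two brackets gives
\[
\mathbb{E}_{\mathbb{D}}\bigl[\mathcal{E}(f_{s(\tau)})\bigr]\le \mathcal{E}_{\mathrm{sta}}+2\mathcal{E}_{\mathrm{app}}^{2}+2\tau,
\]
which is the claimed decomposition.
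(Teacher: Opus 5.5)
Your proof is correct and follows the paper's argument. Both use the same split $\mathcal{L}(f_{s(\tau)})-\mathcal{L}(f^{*})=[\mathcal{L}(f_{s(\tau)})-2\widehat{\mathcal{L}}(f_{s(\tau)})+\mathcal{L}(f^{*})]+2[\widehat{\mathcal{L}}(f_{s(\tau)})-\mathcal{L}(f^{*})]$, then compare with a data-independent function in $\mathcal{T}_R^D$ via ERM optimality, the solver guarantee, and $\mathbb{E}_{\mathbb{D}}[\widehat{\mathcal{L}}(\bar f)]=\mathcal{L}(\bar f)$. The only difference is that you make explicit what the paper leaves implicit when it posits an exact minimizer $f_{\rm app}$ of $\mathcal{L}$ and writes $\inf_{f}\norm{f-f^{*}}_2^2=\mathcal{L}(f_{\rm app})-\mathcal{L}(f^{*})$: namely $f^{*}=\mathbb{E}[y\mid x]$, $P_x$ having density at most $1$ on $[0,1]^{d_0}$, and possible non-attainment of the infimum, which your $\eta$-argument handles.
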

	\begin{remark}[{Roles of Each Error}]
		\label{remark:roles-of-each-error} 
		Before the proof of Lemma \ref{lemma:error-decomposition-of-excess-risk}, we discuss the intuitions behind this decomposition. For an intuitive grasp of the error decomposition, consider the illustration in Figure \ref{fig:errordecomposition}.
		\begin{enumerate}
			\item \makebox[0.75cm][l]{$\mathcal{E}_{\mathrm{app}}$:} The approximation error serves as a measure of the bias induced by the choice of the model architecture. Even with infinite training samples (i.e., the oracle setting), the model cannot further reduce the total error below the approximation error, thus  characterizing the fundamental limit of the model's expressivity.
			\item \makebox[0.75cm][l]{$\mathcal{E}_{\mathrm{sta}}$:} The statistical error serves as a measure of the variance induced by the finiteness of training data in the real-word settings. This error can be mitigated by increasing sample size, thus characterizing the generalization capability of the model.
			\item \makebox[0.75cm][l]{$\tau$:} The optimization error serves as a measure of the discrepancy between the numerical solution returned by the training algorithm (or solver) and the best empirical solution. It characterizes the computational difficulty of finding the best model parameter sets within the hypothesis space.
		\end{enumerate}
	\end{remark}
	\begin{proof}[\textbf{Proof of Lemma \ref{lemma:error-decomposition-of-excess-risk}}]
		Consider the expectation of the excess risk
		\begin{equation}
			\label{eq:excess-risk-proof-part1}
			\begin{aligned}
				\mathbb{E_{D}}\left[\mathcal{E}(f_{s(\tau)})\right]&=\mathbb{E_{D}}\left[\mathcal{L}(f_{s(\tau)})-\mathcal{L}(f^{*})\right]\\
				&=\mathbb{E_{D}}\left[\mathcal{L}(f_{s(\tau)})-2\widehat{\mathcal{L}}(f_{s(\tau)})+\mathcal{L}(f^{*})\right]+2\mathbb{E_{D}}\left[\widehat{\mathcal{L}}(f_{s(\tau)})-{\mathcal{L}}(f^{*})\right]\\
				&=\mathcal{E}_{\rm sta}+ 2\mathbb{E_{D}}\left[\widehat{\mathcal{L}}(f_{s(\tau)})-{\mathcal{L}}(f^{*})\right].
			\end{aligned}
		\end{equation}
		For the second part, define the best approximator
		\begin{equation}
			\label{eq:best-approximator}
			f_{\rm app}\in\arg\min\nolimits_{f\in\mathcal{T}_{R}^{D}}\mathcal{L}(f),
		\end{equation} 
		and we have 
		\begin{align*}
			\mathcal{E}_{\mathrm{app}}^2&=\underset{p\in\mathcal{H}_{d_{0},1}^{\alpha}}{\sup}\underset{f\in\mathcal{T}_{R}^{D}}{\inf}\norm{f-p}_{2}^2\ge\underset{f\in\mathcal{T}_{R}^{D}}{\inf}\norm{f-f^{*}}_{2}^2={\mathcal{L}}({f}_{\rm app})-{\mathcal{L}}(f^{*})\\
			&= \mathbb{E_{D}}\left[\widehat{\mathcal{L}}({f}_{\rm app})-{\mathcal{L}}(f^{*})\right]\ge\mathbb{E_{D}}\left[\widehat{\mathcal{L}}(\hat{f})-{\mathcal{L}}(f^{*})\right]\ge\mathbb{E_{D}}\left[\widehat{\mathcal{L}}(f_{s(\tau)})-{\mathcal{L}}(f^{*})\right]-\tau,
		\end{align*}
		where the third equation comes from the independence of ${f}_{\rm app}$ on the samples. Equivalently, 
		\begin{equation}
			\label{eq:excess-risk-proof-part2}
			\mathbb{E_{D}}\left[\widehat{\mathcal{L}}(f_{s(\tau)})-{\mathcal{L}}(f^{*})\right]\le \mathcal{E}_{\rm app}^{2}+\tau.
		\end{equation}
		Combine \eqref{eq:excess-risk-proof-part1} and \eqref{eq:excess-risk-proof-part2}, we finish the proof.
	\end{proof}
	\subsection{Upper Bound of the Statistical Error}
	\label{appendix:section-statistical-error-bound}
	To bound a possible infinite function class $\mathcal{F}$, we first introduce the covering number.
	\begin{definition}[Covering Number]
		Given a metric space $(\mathcal{X},\rho)$ and a subset $S\subseteq\mathcal{X}$. Let the radius $\mu>0$, a set $T\subseteq\mathcal{X}$ is called a $\mu$-covering of $S$ if $S\subseteq\bigcup_{y\in T} B_{\rho}(y,\mu)$, where $B_{\rho}(y,\mu)$ is the ball centered at $y$. The $\mu$-covering number of $S$ is denoted by
		\begin{equation}
			\nonumber
			\mathcal{N}(\mu,S,\rho)=\min\left\{\abs{T}~\mid~  T~\mathrm{is~a~\mu}\text{-}\mathrm{covering~of~} S\right\}.
		\end{equation}
		For a function class $\mathcal{F}$ from $\mathcal{X}$ to $\mathbb{R}$. Given a sample sequence $x_{1:n}=(x^{(1)},x^{(2)},\ldots,x^{(n)})$ for $n\in\mathbb{N}$, let $\mathcal{F}(x_{1:n})=\left\{(f(x^{(1)}),f(x^{(2)}),\ldots,f(x^{(n)})~\mid~ f\in\mathcal{F}\right\}$. The uniform covering number of $\mathcal{F}$ is denoted by
		\begin{equation}
			\nonumber
			\mathcal{N}_{\rho}(\mu,\mathcal{F},n)=\max\left\{\mathcal{N}(\mu,\mathcal{F}(x_{1:n}),\rho)~\mid~ x_{1:n}\in\mathcal{X}^{n}\right\}.
		\end{equation}
	\end{definition}
	We prove Theorem \ref{Theorem:Upper-Bound-of-Statistical-Error} based on the following two lemmas.
	\begin{lemma}[Bounding Statistical Error via Covering Numbers]
		\label{lemma:statistical-erorr-bounded-by-covering-number}
		Let $N$ be the number of samples. Suppose that $\norm{f}\le B$ for any $f\in\mathcal{F}$, the loss function $l(f,z)$ is a $\lambda$-Lipschitz function on $f$, and $\mathcal{L}(f)$ is locally strong convex with parameter 
		$c$ around the target function $f^{*}$, i.e., $\mathcal{L}(f)-\mathcal{L}(f^{*})\ge c\norm{f-f^{*}}^{2}_{2}$. We have
		\begin{equation}
			\mathcal{E}_{\mathrm{sta}}\le\frac{(8\lambda^{2}/c+16\lambda B)(\log\mathcal{N}_{\infty}(\mu,\mathcal{T}_{R}^{D},N)+1)}{N}+3\lambda\mu,
		\end{equation}
		where $\mathcal{N}_{\infty}(\mu,\mathcal{T}_{R}^{D},N)\coloneq \mathcal{N}$ is the covering number of $\mathcal{T}_{R}^{D}$ with $N$ samples and the covering $\mathcal{C}=\left\{f_{1},\ldots,f_{\mathcal{N}}\mid f\in\mathcal{T}_{R}^{D}\right \}$ under radius $\mu$ and metric $\norm{\cdot}_{\infty}$. 
	\end{lemma}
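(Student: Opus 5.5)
This is the classical localized ERM argument (Györfi et al.; Jiao et al.), reduced to a sup over a finite $\mu$-cover.

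\textbf{Step 1: rewrite $\mathcal{E}_{\mathrm{sta}}$.} Introduce an independent ghost sample $\mathbb{D}'=\{z'^{(i)}\}$. Then $\mathcal{L}(f)-\mathcal{L}(f^{*})=\mathbb{E}_{\mathbb{D}'}\big[\frac1N\sum_i (l(f,z'^{(i)})-l(f^{*},z'^{(i)}))\big]$, and subtracting $\mathcal{L}(f^*)$ inside $\widehat{\mathcal{L}}$ gives
\[
\mathcal{E}_{\mathrm{sta}}=\mathbb{E}_{\mathbb{D}}\big[\mathcal{L}(f_{s(\tau)})-\mathcal{L}(f^*)-2(\widehat{\mathcal{L}}(f_{s(\tau)})-\widehat{\mathcal{L}}(f^*))\big].
\]
This uses $\mathbb{E}_{\mathbb D}\widehat{\mathcal{L}}(f^*)=\mathcal{L}(f^*)$. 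Set $h_f(z)=l(f,z)-l(f^*,z)$. Then $\mathcal{E}_{\mathrm{sta}}\le\mathbb{E}\sup_{f}\big[\mathbb{E}h_f-\frac{2}{N}\sum_i h_f(z^{(i)})\big]$, where the sup runs over the random estimator's possible values in $\mathcal{T}_R^D$.

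\textbf{Step 2: reduce to a finite cover.} Condition on the $2N$ sample points. Take a $\mu$-cover $\mathcal{C}$ of $\mathcal{T}_R^D$ in $\norm{\cdot}_\infty$ on those points, with $|\mathcal{C}|\le\mathcal{N}$. Replacing $f_{s(\tau)}$ by its nearest cover element $f_j$ changes each empirical average by at most $\lambda\mu$, because $l$ is $\lambda$-Lipschitz in $f$. The three occurrences (the population term via the ghost sample, with weight $1$, and the empirical term, with weight $2$) cost at most $3\lambda\mu$. This yields the additive $3\lambda\mu$ term.

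\textbf{Step 3: Bernstein-type bound and localization.} Each $h_{f_j}$ satisfies $|h_{f_j}|\le 2\lambda B$ and $\mathrm{Var}(h_{f_j})\le\mathbb{E}h_{f_j}^2\le\lambda^2\norm{f_j-f^*}_2^2\le(\lambda^2/c)\,\mathbb{E}h_{f_j}$, where the last step is the local strong convexity. This self-bounding variance is the crux. Apply Bernstein's inequality (or a symmetrized version with Rademacher signs on the ghost pair) to $\mathbb{E}h_{f_j}-\frac2N\sum_i h_{f_j}(z^{(i)})$. Split $\frac2N\sum_i h=\frac1N\sum_i h+\frac1N\sum_i h$, and use the second copy to absorb the standard-deviation term $\sqrt{\mathrm{Var}\cdot t/N}$ via $ab\le a^2/2+b^2/2$. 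The result is a tail $\mathbb{P}\big(\mathbb{E}h-\frac2N\sum_i h>t\big)\le\exp\big(-Nt/(8\lambda^2/c+16\lambda B)\big)$ for each fixed $f_j$.

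\textbf{Step 4: union bound and integration.} Take a union bound over the $\mathcal{N}$ cover elements. Integrate the tail, $\mathbb{E}X\le u+\int_u^\infty \mathcal{N}e^{-Nt/C}\,dt$, and choose $u=C\log\mathcal{N}/N$. This gives $C(\log\mathcal{N}+1)/N$ with $C=8\lambda^2/c+16\lambda B$.

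\textbf{Main obstacle.} The hard step is Step 3: obtaining the fast $1/N$ rate with exactly these constants. This requires the variance–mean relation from strong convexity and careful bookkeeping of the factor $2$ in front of $\widehat{\mathcal{L}}$. The cover also depends on the data, so Step 2 must be done on the symmetrized $2N$-point sample. That is why the uniform covering number $\mathcal{N}_\infty(\mu,\mathcal{T}_R^D,N)$ (or with $2N$, absorbed into constants) appears.
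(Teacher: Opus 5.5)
Your skeleton coincides with the paper's: a ghost sample to rewrite $\mathcal{E}_{\mathrm{sta}}$, a $\mu$-cover costing $3\lambda\mu$, Bernstein for each cover element with the self-bounding variance $\mathrm{Var}(h_{f_j})\le(\lambda^2/c)\,\mathbb{E}h_{f_j}$, a union bound, and tail integration from $u=C\log\mathcal{N}/N$. Your AM--GM absorption is equivalent to the paper's choice of Bernstein level $t/2+\tfrac12\mathbb{E}h$ with $\sigma^2\le(2\lambda^2/c)(t/2+\tfrac12\mathbb{E}h)$, and it gives the stated exponent with room to spare.

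The genuine gap is in how Steps 2 and 3 fit together. In Step 2 you build the cover conditionally on the $2N$ points. Each $f_j$ therefore depends on $z^{(1)},\dots,z^{(N)}$ and on the ghost points, and such a cover controls only the ghost average $\frac1N\sum_i h_f(z'^{(i)})$, not $\mathbb{E}h_f$. Step 3 then applies Bernstein to $\mathbb{E}h_{f_j}-\frac2N\sum_i h_{f_j}(z^{(i)})$, which requires $f_j$ to be independent of the $z^{(i)}$. Step 4's union bound requires a fixed finite family. Neither holds for a data-dependent cover.

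The parenthetical Rademacher-swap remedy does not directly repair this. Swapping $z^{(i)}\leftrightarrow z'^{(i)}$ does not flip the sign of
$h(z'^{(i)})-2h(z^{(i)})=-\tfrac12\bigl(h(z^{(i)})+h(z'^{(i)})\bigr)+\tfrac32\bigl(h(z'^{(i)})-h(z^{(i)})\bigr)$;
it flips only the second part. Since $h_f=l(f,\cdot)-l(f^*,\cdot)$ can be negative pointwise, the conditional variance $\sum_i\bigl(h(z'^{(i)})-h(z^{(i)})\bigr)^2$ is not dominated by the invariant part. The localization behind the $1/N$ rate is then lost. A genuinely sample-based version needs a substantially longer argument (e.g.\ Gy\"orfi et al., Thm.~11.4), with quite different constants.

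The repair, which is what the paper's proof implicitly uses, is a deterministic cover. Fix in advance a $\mu$-net $\mathcal{C}\subset\mathcal{T}_R^D$ in $\sup_{x\in\mathcal{X}}\abs{\cdot}$. Then $\abs{h_f-h_{f_j}}\le\lambda\mu$ everywhere, so $\mathbb{E}h_f\le\mathbb{E}h_{f_j}+\lambda\mu$ and the weighted empirical term moves by at most $2\lambda\mu$; no ghost sample is needed for this step. Your Steps 3--4 then apply verbatim to each fixed $f_j$. This is exactly the paper's $G(f,z)\le G(\tilde f,z)+3\lambda\mu$ followed by Bernstein for each fixed $\tilde f$.

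Be aware that the $\mathcal{N}$ produced this way is the global sup-norm covering number. This simple argument does not by itself justify replacing it with the $N$-point uniform number $\mathcal{N}_\infty(\mu,\mathcal{T}_R^D,N)$; that holds for your version and for the paper's, whose write-up passes over the distinction. You should therefore drop the remark that $2N$ versus $N$ is ``absorbed into constants''.
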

	\begin{lemma}
		\label{lemma:upper-bound-of-covering-number}
		Let $\mu>0,$ and $N\in\mathbb{N}_{+}$. For the designed Transformer function class $\mathcal{T}_{R}^{D}$, the covering number can be bounded, i.e., $\log \mathcal{N}_{\infty}(\mu,\mathcal{T}_{R}^{D},N)\lesssim D^{4}\log({eKN D^{-4}}/{\mu}),$
	\end{lemma}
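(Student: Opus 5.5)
The covering-number bound will come from the VC-type upper bound in \eqref{eq:VC-dim-Transformer-Function-Class}, via the standard route through the pseudo-dimension and a Haussler/Anthony--Bartlett type covering bound. Three ingredients are needed.

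\textbf{Step 1: bounded range.} The Transformer functions of interest are uniformly bounded. The verification in Section \ref{Proof-of-proposition-theorem-upper-bound} gives $\norm{f_{\mathcal{T}}(X)}_{2}\le 2\sqrt{d_{0}}(4d_{0}+K)$. If needed, I would also compose with a clipping to $[-B,B]$, where $B\lesssim K$ (absorbing the dimension-dependent constants). Clipping costs only a few extra comparisons and jumps, so it does not change the order of $t$ in Theorem \ref{Theorem:vc-dim-of-number-operations}. Hence every $g\in\mathcal{T}_{R}^{D}$ maps into $[-B,B]$.

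\textbf{Step 2: pseudo-dimension.} Next I bound $\mathrm{Pdim}(\mathcal{T}_{R}^{D})$. The pseudo-dimension equals the VC-dimension of the class $\{(x,s)\mapsto \mathrm{sgn}(g(x)-s)\}$. This class is computed by the same algorithm as $g$, plus one subtraction and one comparison, with $s$ treated as an extra input rather than a parameter. Theorem \ref{Theorem:vc-dim-of-number-operations} therefore applies with $t+2$ operations and $\omega$ parameters. Using $t\sim\omega\sim D$ from Section \ref{section:Calculation-of-Number-of-Operations-and-Parameters}, this gives $\mathrm{Pdim}(\mathcal{T}_{R}^{D})\lesssim D^{4}$.

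\textbf{Step 3: covering bound.} I would then invoke the classical bound \citep[Theorem 12.2]{anthony2009neural}: for a class with range in $[-B,B]$ and pseudo-dimension $P$, and $N\ge P$,
\[
\mathcal{N}_{\infty}(\mu,\mathcal{F},N)\le \Big(\frac{2eBN}{\mu P}\Big)^{P}.
\]
Taking logarithms gives $P\log(2eBN/(\mu P))$. Since $x\mapsto x\log(c/x)$ is increasing for $x<c/e$, substituting the upper bound $P\lesssim D^{4}$ yields
\[
\log\mathcal{N}_{\infty}(\mu,\mathcal{T}_{R}^{D},N)\lesssim D^{4}\log\big(eKND^{-4}/\mu\big),
\]
after absorbing $B\lesssim K$ and the absolute constants. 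In the case $N<P$, the trivial bound $(2B/\mu)^{N}$ is dominated by the same expression, up to constants.

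\textbf{Main obstacle.} The delicate part is Step 2: justifying that the operation and parameter counts, which were computed for the specific constructed network, legitimately scale as $D$ for the whole class $\mathcal{T}_{R}^{D}$. I would handle this by fixing the architecture (fixed width and fixed head size, as in the Remark on Bounded Width), so that each block costs $\mathcal{O}(1)$ operations and parameters in $D$. This gives $t,\omega=\mathcal{O}(D)$ with constants depending only on $d,L$.

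A secondary point is the monotonicity argument in Step 3. The substitution $P\mapsto D^{4}$ inside the logarithm needs $D^{4}\lesssim eBN/\mu$, which I would assume as the regime of interest.
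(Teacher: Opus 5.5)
Your proposal is essentially the paper's proof: you reduce the pseudo-dimension to the VC-dimension of the subgraph class $\{(x,s)\mapsto\mathrm{sgn}(g(x)-s)\}$ at the cost of two extra operations, obtain $\lesssim D^{4}$ from Theorem~\ref{Theorem:vc-dim-of-number-operations} with $t\sim\omega\sim D$, and plug this into Theorem~\ref{theorem:covering-number-upper-bound-pseudo-dimension}; your clipping step and your fixed-width justification of $t,\omega=\mathcal{O}(D)$ only make explicit what the paper uses tacitly, since it applies that theorem as if the range were $[0,K]$ and simply asserts $t\sim\omega\sim D$. The one thing to drop is your remark on the case $N<P$: there the trivial bound only gives about $N\log(\mathcal{O}(B/\mu))$, which is not dominated by $D^{4}\log(eKND^{-4}/\mu)$ when $N\ll D^{4}$ (that expression even becomes negative), so, as the paper does, you should state and prove the lemma only in the regime $N\gtrsim D^{4}$, which is the regime used in Theorem~\ref{theorem:covergence-rate-of-excess-risk}.
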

	\noindent \textbf{Theorem \ref{Theorem:Upper-Bound-of-Statistical-Error}} (Upper Bound of Statistical Error; Restatement)\textbf{.} \textit{
		Let $\mu>0$ and dataset size $N\in\mathbb{N}_{+}$. For the Transformer function class $\mathcal{T}_{R}^{D}$ and target $f^{*}\in\mathcal{H}^{\alpha}_{d_{0},1}([0,1]^{d_{0}},K)$, the statistical error $\mathcal{E}_{\mathrm{sta}}=\mathbb{E_{D}}[\mathcal{L}(f_{s(\tau)})-2\widehat{\mathcal{L}}(f_{s(\tau)})+\mathcal{L}(f^{*})]$ is upper bounded by
		\begin{equation}
			\nonumber
			\mathcal{E}_{\mathrm{sta}}\lesssim\frac{(16K+8)(D^4\log(eKND^{-4}/\mu)+1)}{N}+3\mu,
		\end{equation}
		where $\mu$ is the adjustable radius parameter.
	}
	\begin{proof}[\textbf{Proof of Theorem \ref{Theorem:Upper-Bound-of-Statistical-Error}}]
		We can derive that all $f\in\mathcal{H}_{d_{0},1}^{\alpha}(\mathcal{X},K)$ holds $\norm{f}\le K$ and $l(f,z)=(f(x)-y)^{2}$ is a $1$-Lipschitz function, along with local strong convexity $c=1$ of $\mathcal{L}$ at $f^{*}$ as
		\begin{equation}
			\nonumber
			\begin{aligned}
				\mathcal{L}(f)-\mathcal{L}(f^{*})&=\mathbb{E}_{z}\left[(f(x)-f^{*}(x))^{2}\right]+2\mathbb{E}_{z}\left[(f(y)-f^{*}(x))(f^{*}(x)-y)\right]\\
				&=\mathbb{E}_{z}\left[(f(x)-f^{*}(x))^{2}\right]+2\mathbb{E}_{y}\mathbb{E}_{x}\left[(f(x)-f^{*}(x))(f^{*}(x)-y)\mid x \right]\\
				&=\mathbb{E}_{z}\left[(f(x)-f^{*}(x))^{2}\right],
			\end{aligned}
		\end{equation}
		where the last equation comes from $f^{*}=\mathbb{E}\left[y\mid x\right]$ under the loss function $l(f,z)=(f(x)-y)^{2}$. Consequently, we can derive the result based on  Lemma \ref{lemma:statistical-erorr-bounded-by-covering-number},
		\begin{equation}
			\begin{aligned}
				\mathcal{E}_{\mathrm{sta}}&=\mathbb{E_{D}}\left[\mathcal{L}(f_{s(\tau)})-2\widehat{\mathcal{L}}(f_{s(\tau)})+\mathcal{L}(f^{*})\right]\le \frac{(16K+8)(\log\mathcal{N}_{\infty}(\mu,\mathcal{T}_{R}^{D},N)+1)}{N}+3\mu.
			\end{aligned}
		\end{equation}
		Further with Lemma \ref{lemma:upper-bound-of-covering-number}, we can obtain
		$\mathcal{E}_{\mathrm{sta}}\lesssim{(16K+8)(D^4\log(eKND^{-4}/\mu)+1)}/{N}+3\mu$.
	\end{proof}
	\begin{proof}[\textbf{Proof of Lemma \ref{lemma:statistical-erorr-bounded-by-covering-number}}]
		Let $\mathbb{D}'=\{z_{g}^{(1)},\ldots,z^{(N)}_{g}\}$ be an i.i.d. ghost sample set independent of $\mathbb{D}=\{z^{(1)},\ldots,z^{(N)}\}$. For the sake of simplicity, denote $g(f,z)=l(f,z)-l(f^{*},z)$ and $G(f,z)=\mathbb{E_{D'}}[g(f,z_{g})-2g(f,z)]$ in this proof, and we have for the statistical error
		\begin{equation}
			\nonumber
			\begin{aligned}
				\mathcal{E}_{\mathrm{sta}}&=\mathbb{E_{D}}\left[\mathcal{L}(f_{s(\tau)})-2\widehat{\mathcal{L}}(f_{s(\tau)})+\mathcal{L}(f^{*})\right]=\mathbb{E_{D}}\left[\mathcal{L}(f_{s(\tau)})-\mathcal{L}(f^{*})-2(\widehat{\mathcal{L}}(f_{s(\tau)})+\mathcal{L}(f^{*}))\right]\\
				&=\mathbb{E_{D}}\Big[\mathbb{E_{D'}}\Big[\frac{1}{N}\sum_{i = 1}^{N}g(f_{s(\tau)},z_{g}^{(i)})\Big]-\frac{2}{N}\sum_{i = 1}^{N}g(f_{s(\tau)},z^{(i)})\Big]\\
				&=\mathbb{E_{D}}\Big[\frac{1}{N}\sum_{i = 1}^{N}\mathbb{E_{D'}}\Big[g(f_{s(\tau)},z_{g}^{(i)})-2 g(f_{s(\tau)},z^{(i)})\Big]\Big]=\mathbb{E_{D}}\Big[\frac{1}{N}\sum_{i = 1}^{N}G(f_{s(\tau)},z^{(i)})\Big],
			\end{aligned}
		\end{equation}
		where we use the fact that  $\mathcal{L}(f)=\mathbb{E_{D}}[\widehat{\mathcal{L}}(f)]=\mathbb{E_{D}}[\frac{1}{N}\sum_{i = 1}^{N}l(f,z^{(i)})]=\mathbb{E_{D'}}[\frac{1}{N}\sum_{i = 1}^{N}l(f,z_{g}^{(i)})]$. 
		
		Suppose the covering $\mathcal{C}$ of $\mathcal{F}$ is $\{f_{1},\ldots,f_{\mathcal{N}}\}$ with the covering number $\mathcal{N}_{\infty}(\mu,\mathcal{F},N)\coloneq\mathcal{N}$. For any $f \in \mathcal{F}$, there exists a $\tilde{f} \in \mathcal{C}$ such that
		\begin{equation}
			\nonumber
			\abs{g(f,z)-g(\tilde{f},z)} = \abs{l({f},z)-l(\tilde{f},z)}\leq\lambda\abs{{f}(z)-\tilde{f}(z)}\leq\lambda\mu, \text{ and } G(f,z)\le G(\tilde{f},z)+3\lambda\mu,
		\end{equation}
		resulting in that for any $t>0$
		\begin{equation}
			\nonumber
			\begin{aligned}
				\mathbb{P}\Big [\frac{1}{N}\sum_{i}G(f_{s(\tau)},z^{(i)})>t\Big ]\leq\mathbb{P}\Big [\max_{{f}\in\mathcal{F}}\frac{1}{N}\sum_{i}G({f},z^{(i)})>t\Big ]\leq\mathbb{P}\Big [\max_{\tilde{f}\in\mathcal{C}}\frac{1}{N}\sum_{i}G(\tilde{f},z^{(i)})>t - 3\lambda\mu\Big].
			\end{aligned}
		\end{equation}
		Further, for any $\tilde{f} \in \mathcal{C}$, we clarify that $g(\tilde{f},z^{(i)})$ is a bounded random variable since
		\begin{align*}
			\abs{g(\tilde{f},z^{(i)})}=\abs{l(\tilde{f},z^{(i)})-l(f^{*},z^{(i)})}\le 2\lambda B,~\mathrm{and}~\abs{g(\tilde{f},z^{(i)})-\mathbb{E_{D}}\left[g(\tilde{f},z^{(i)})\right]}\le 4\lambda B,
		\end{align*}
		with the expectation satisfying
		\begin{align*}
			\mathbb{E_{D}}\left[g(\tilde{f},z^{(i)})\right]&= \mathbb{E_{D}}\left[l(\tilde{f},z^{(i)})-l(f^{*},z^{(i)})\right]=\left[\mathcal{L}(\tilde{f})-\mathcal{L}(f^{*})\right]\\
			&\ge c\cdot\mathbb{E_{D}}\left[|{\tilde{f}-f^{*}}|^{2}\right]\ge \frac{c}{\lambda^{2}}\mathbb{E_{D}}\left[g(\tilde{f},z^{(i)})^{2}\right]\ge \frac{c}{\lambda^{2}}\mathrm{Var}\left[g(\tilde{f},z^{(i)})^{2}\right]\coloneq\frac{c\sigma^{2}}{\lambda^{2}},
		\end{align*}
		where we use the $\lambda$-Lipschitz continuity of $l(f,z)$ on $f$ and the locally strong convexity of $\mathcal{L}(f)$ with parameter $c$. Therefore, for any $\tilde{f}\in\mathcal{C},$
		\begin{align*}
			\mathbb{P}\Big[\frac{1}{N}\sum_{i}G(\tilde{f},z^{(i)})>t\Big]&=\mathbb{P}\Big[\frac{1}{N}\sum_{i}\mathbb{E}_{\mathbb{D}'}\left[g(\tilde{f},z^{(i)}_{g})\right]-\frac{2}{N}\sum_{i}g(\tilde{f},z^{(i)})>t\Big]\\
			&=\mathbb{P}\Big[\mathbb{E_{D}}\Big[\frac{1}{N}\sum_{i}g(\tilde{f},z^{(i)})\Big]-\frac{1}{N}\sum_{i}g(\tilde{f},z^{(i)})>\frac{t}{2}+\frac{1}{2}\cdot\mathbb{E_{D}}\Big[\frac{1}{N}\sum_{i}g(\tilde{f},z^{(i)})\Big]\Big]\\
			&\le \mathbb{P}\Big[\mathbb{E_{D}}\Big[\frac{1}{N}\sum_{i}g(\tilde{f},z^{(i)})\Big]-\frac{1}{N}\sum_{i}g(\tilde{f},z^{(i)})>\frac{t}{2}+\frac{c\sigma^{2}}{2\lambda^{2}}\Big]\\
			&\leq\exp\Big(-\frac{N}{2(\sigma^{2}+4\lambda B(\frac{t}{2}+\frac{c\sigma^{2}}{2\lambda^{2}}))}\cdot\Big(\frac{t}{2}+\frac{c\sigma^{2}}{2\lambda^{2}}\Big)^{2}\Big)\\ %~//~ \sigma^{2}\le \frac{2\lambda^{2}}{c}\Big(\frac{t}{2}+\frac{c\sigma^{2}}{2\lambda^{2}}\Big)\\
			&\leq\exp\Big(-\frac{N}{4(\lambda^{2}/c+2\lambda B)}\cdot\Big(\frac{t}{2}+\frac{c\sigma^{2}}{2\lambda^{2}}\Big)\Big)\\ %~//~ \frac{t}{2}+\frac{c\sigma^{2}}{2\lambda^{2}}\le \frac{t}{2}\\
			&\leq\exp\Big(-\frac{Nt}{8\lambda^{2}/c+16\lambda B}\Big),
		\end{align*}
		where the second inequality comes from the Bernstein’s inequality for bounded random variables, the third inequality comes from $\sigma^{2}\le \frac{2\lambda^{2}}{c}(\frac{t}{2}+\frac{c\sigma^{2}}{2\lambda^{2}})$ and the last result is obtained by $\frac{t}{2}+\frac{c\sigma^{2}}{2\lambda^{2}}\le \frac{t}{2}$.
		
		It is noticeable that for any $t>3\lambda\mu$, with assistance of the $\mu-$covering, we have 
		\begin{equation}
			\label{eq:covering-assistance}
			\mathbb{P}\Big [\max_{\tilde{f}\in\mathcal{C}}\frac{1}{N}\sum_{i}G(\tilde{f},z^{(i)})>t - 3\lambda\mu\Big ]\le \mathcal{N}\cdot\exp\Big(-\frac{N(t-3\lambda\mu)}{8\lambda^{2}/c+16\lambda B}\Big).
		\end{equation}
		Further with \eqref{eq:covering-assistance}, we choose $c_{0}=(8\lambda^{2}/c+16\lambda B)\log\mathcal{N}/N$ and get
		\begin{align*}
			\mathcal{E}_{\text{sta}}&=\mathbb{E_{D}}\Big[\frac{1}{N}\sum_{i = 1}^{N}G(f_{s(\tau)},z^{(i)})\Big]=\int_{0}^{+\infty}\mathbb{P}\Big [\frac{1}{N}\sum_{i}G(f_{s(\tau)},z^{(i)})>t\Big]\mathrm{d}t\\
			&\le 3\lambda\mu+c_{0}+\int_{3\lambda\mu+c_{0}}^{+\infty}\mathbb{P}\Big [\max_{\tilde{f}\in\mathcal{C}}\frac{1}{N}\sum_{i}G(\tilde{f},z^{(i)})>t\Big ]\mathrm{d}t\\
			&\leq 3\lambda\mu+c_{0}+\int_{3\lambda\mu+c_{0}}^{+\infty}\mathcal{N}\cdot\exp\Big(-\frac{N(t-3\lambda\mu)}{8\lambda^{2}/c+16\lambda B}\Big)\mathrm{d}t\\
			&\leq 3\lambda\mu+c_{0}+\mathcal{N}\cdot \frac{8\lambda^{2}/c+16\lambda B}{N}\exp\Big(-\frac{Nc_{0}}{8\lambda^{2}/c+16\lambda B}\Big)\\
			&\leq\frac{(8\lambda^{2}/c+16\lambda B)(\log\mathcal{N}+1)}{N}+3\lambda\mu.
		\end{align*}
		The bound is proved.
	\end{proof}
	\paragraph{Bounding the Covering Number.}
	The current proof of Lemma \ref{lemma:upper-bound-of-covering-number} turns to bound the uniform covering number $\mathcal{N}=\mathcal{N}_{\infty}(\mu,\mathcal{F},n)$ with $\mathcal{F}=\mathcal{T}_{R}^{D}$ and $n=N$. We rely on the following theorem.
	\begin{theorem}[{\citealp[Theorem 12.2]{anthony2009neural}}]
		\label{theorem:covering-number-upper-bound-pseudo-dimension}
		Let \(\mathcal{F}\) be a set of real functions from a domain \(\mathcal{X}\) to the bounded interval \([0,K]\). Let \(\mu> 0\) and suppose that the pseudo-dimension of \(\mathcal{F}\) is \(d_{\rm p}\). For \(m\geq d_{\rm p}\),
		\[
		\mathcal{N}_{\infty}(\mu,\mathcal{F},n)\leq\sum_{i = 1}^{d_{\rm p}}\binom{n}{i}\left(\frac{K}{\mu}\right)^{i}\le \left(\frac{enK}{\mu d_{\rm p}}\right)^{d_{\rm p}}.\]
	\end{theorem}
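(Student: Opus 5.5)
The plan is to discretize the range, reduce to a combinatorial Sauer-type bound for finitely-valued classes, and finish with the usual binomial estimate. Fix an arbitrary sample $x_{1:n}\in\mathcal{X}^n$ with $n\ge d_{\rm p}$ (the condition written $m\ge d_{\rm p}$ in the statement). All bounds below are uniform in $x_{1:n}$, so taking the maximum gives the bound on $\mathcal{N}_\infty(\mu,\mathcal{F},n)$.

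\textbf{Step 1 (quantization).} Define $Q_\mu(s)=\mu\lfloor s/\mu\rfloor$. For $f\in\mathcal{F}$ we have $\sup_i |f(x^{(i)})-Q_\mu(f(x^{(i)}))|<\mu$. Hence the finite set of vectors $\{(Q_\mu f(x^{(1)}),\ldots,Q_\mu f(x^{(n)}))\mid f\in\mathcal{F}\}$ is a $\mu$-cover of $\mathcal{F}(x_{1:n})$ in $\norm{\cdot}_\infty$. If a proper cover with centres inside $\mathcal{F}(x_{1:n})$ is wanted, pick one representative per quantization cell; this costs only a constant factor in $\mu$, which is harmless for the $\lesssim$-type use later. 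It therefore suffices to count the restrictions to $x_{1:n}$ of the quantized class $Q_\mu\circ\mathcal{F}$. Its functions take values in $\{0,\mu,\ldots,b\mu\}$ with $b=\lfloor K/\mu\rfloor$.

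\textbf{Step 2 (pseudo-dimension is preserved).} Because $Q_\mu$ is nondecreasing, composing with it cannot increase the pseudo-dimension. A witness-shattering of points by $Q_\mu\circ\mathcal{F}$ pulls back to a shattering by $\mathcal{F}$, with suitably shifted witnesses chosen as preimage thresholds. So $\mathrm{Pdim}(Q_\mu\circ\mathcal{F})\le d_{\rm p}$.

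\textbf{Step 3 (generalized Sauer lemma).} This is the main obstacle. We must show that a class $H$ of maps $[n]\to\{0,1,\ldots,b\}$ with pseudo-dimension at most $d$ satisfies
\[
|H|\le\sum_{i=0}^{d}\binom{n}{i}b^{i}.
\]
I would prove it by induction on $n+d$, mimicking the Sauer--Shelah shifting argument.

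Split $H$ according to the restriction to the first $n-1$ points. For a fixed restriction $h'$, list the distinct values $v_0<\cdots<v_k$ that the last coordinate takes among the extensions of $h'$. Charge one extension to the class $H'$ of restrictions, which has pseudo-dimension at most $d$. For each $j\in[b]$, the remaining extensions are charged to the class
\[
H_j=\{h'\mid h'\text{ extends with values both }<j\text{ and }\ge j\}.
\]
The class $H_j$ has pseudo-dimension at most $d-1$, since the last point with witness $j$ can be added to any set it shatters.

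This gives $|H|\le|H'|+\sum_{j=1}^{b}|H_j|$. The induction hypothesis then yields $\sum_{i\le d}\binom{n-1}{i}b^i+b\sum_{i\le d-1}\binom{n-1}{i}b^{i}$, and Pascal's rule combines these into $\sum_{i\le d}\binom{n}{i}b^i$. The delicate point is checking that extensions of a single $h'$ are not overcounted across the different $j$. I would handle this by charging the $l$-th value $v_l$ to $j=v_l$, so each value beyond the first is charged to exactly one $H_j$.

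\textbf{Step 4 (closing estimate).} Since $b\le K/\mu$ (and, after absorbing the constant term, matching the sum from $i=1$ up to constants), for $d_{\rm p}\le n$ we get
\[
\sum_{i\le d_{\rm p}}\binom{n}{i}\Big(\tfrac{K}{\mu}\Big)^{i}\le\Big(\tfrac{K}{\mu}\Big)^{d_{\rm p}}\sum_{i\le d_{\rm p}}\binom{n}{i}\le\Big(\tfrac{K}{\mu}\Big)^{d_{\rm p}}\Big(\tfrac{en}{d_{\rm p}}\Big)^{d_{\rm p}}.
\]
The first inequality uses $K/\mu\ge1$ (otherwise the cover is trivial). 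The second is the standard bound $\sum_{i\le d}\binom{n}{i}\le(en/d)^d$, proved via $\sum_i\binom{n}{i}(d/n)^{i}\le(1+d/n)^n\le e^d$.
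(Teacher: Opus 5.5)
The paper gives no proof of this statement; it cites it from Anthony--Bartlett. Your scheme is the standard one for this result: quantize, note that a nondecreasing map cannot raise the pseudo-dimension, apply a generalized Sauer--Shelah lemma for $\{0,\dots,b\}$-valued classes, then use $\sum_{i\le d}\binom{n}{i}\le(en/d)^d$. Steps 2 and 3 are correct. Charging $v_l$ to $j=v_l$ does give $|H|\le|H'|+\sum_{j=1}^{b}|H_j|$, each $H_j$ loses one unit of pseudo-dimension, and Pascal's rule closes the induction. What you have actually proved is $\mathcal{N}_\infty(\mu,\mathcal{F},n)\le\sum_{i=0}^{d_{\rm p}}\binom{n}{i}\lfloor K/\mu\rfloor^{i}\le (enK/(\mu d_{\rm p}))^{d_{\rm p}}$ for $1\le d_{\rm p}\le n$ and $\mu\le K$. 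That final bound is the only part the paper uses later.

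The gap is the first displayed inequality, whose sum starts at $i=1$. ``Absorbing the constant term up to constants'' does not prove it, and your bound is genuinely weaker. For $n=d_{\rm p}=1$ and $K=\mu$, your quantizer has $b=1$ and gives $2$, whereas the statement asserts $1$. The loss comes from using cells of width $\mu$, although an $\ell_\infty$-ball of radius $\mu$ has diameter $2\mu$. The fix is to quantize to the midpoints of cells $[kw,(k+1)w)$ with $w<2\mu$ close to $2\mu$:
\begin{itemize}
\item This map is still nondecreasing, so Steps 2--3 apply with $b=\lfloor K/(2\mu)\rfloor$. The centers may lie outside $\mathcal{F}(x_{1:n})$, which the paper's definition of covering allows.
\item If $b=0$, the cover is a single point, and $1\le nK/\mu$ because $\mu\le K$.
\item If $b\ge 1$, then $K/\mu\ge 2b$ gives $nK/\mu\ge nb+1$, hence $\sum_{i=0}^{d_{\rm p}}\binom{n}{i}b^i\le\sum_{i=1}^{d_{\rm p}}\binom{n}{i}(K/\mu)^i$.
\end{itemize}

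Separately, your claim that the case $K/\mu<1$ is trivial is wrong. The covering number is then $1$, but both right-hand sides can be smaller: for $n=d_{\rm p}=1$ and $\mu=10K$ they equal $1/10$ and $e/10$. Also, for $d_{\rm p}=0$ the sum is empty. The statement as written is false in these regimes. You should add $\mu\le K$ and $d_{\rm p}\ge 1$ as hypotheses rather than dismiss the case; this is harmless for the paper, where $\mu\to 0$.
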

	\begin{proof}[\textbf{Proof of Lemma \ref{lemma:upper-bound-of-covering-number}}]
		As for any real function class $\mathcal{F}$, its pseudo-dimension is equal to the VC-dimension of the relative subgraph class defined as
		\[B_{\mathcal{F}}=\left\{B_{f}(x,y) =\mathrm{sgn}(f(x)-y):f\in\mathcal{F}\right\}.\]
		Therefore, for the Transformer function class  $\mathcal{T}_{R}^{D}$, we have $\mathrm{Pdim}(\mathcal{T}_{R}^{D})=\mathrm{VCdim}(B_{\mathcal{T}}).$ 
		Bounding $\mathcal{N}_{\infty}(\mu,\mathcal{T}_{R}^{D},N)$ now turns to bound the  VC-dimension of $B_{\mathcal{T}}.$
		
		Considering $y$ as an extra  input variable, for any $f\in\mathcal{T}_{R}^{D}$,  $B_{\mathcal{T}}$ only requires two more operations than that of $\mathcal{T}_{R}^{D}$. Based on \eqref{eq:VC-dim-Transformer-Function-Class}, this also leads to $\mathrm{VCdim}(B_{\mathcal{T}})\lesssim D^{4}.$ With Theorem \ref{theorem:covering-number-upper-bound-pseudo-dimension}, we get the bound of covering number
		\begin{equation}
			\label{eq:utilization-of-VC}
			\begin{aligned}
				\log \mathcal{N}_{\infty}(\mu,\mathcal{T}_{R}^{D},N)\leq\mathrm{VCdim}\left(B_{\mathcal{T}}\right)\cdot\log\left(\frac{2eKN}{\mu\mathrm{VCdim}\left(B_{\mathcal{T}}\right)}\right)\lesssim D^{4}\log\left(\frac{eKN D^{-4}}{\mu}\right),
			\end{aligned}
		\end{equation}
		where the second inequality is valid when $N\gtrsim D^{4}.$ And we also omit the constant ``$2$'' in $\lesssim$. The proof is complete.
	\end{proof}
	\begin{remark}[Novelty of Proof]
		\label{remark:VC-dimension-utilization}
		The proof of statistical error upper bound amounts to bounding the uniform covering number of Transformers given a finite sample size. The derivation of \eqref{eq:utilization-of-VC} utilizes our novel VC-dimension upper bound \eqref{eq:VC-dim-Transformer-Function-Class}. Besides, this approach avoids the need to estimate the uniform covering number by computing the Lipschitz constants of Transformer layers. Instead, it relies on the VC-dimension of the model parameter space, which circumvents the exponentially exploding matrix norm term in the covering number upper bound found in \citep[Lemma F.6]{hu2024statistical} and \citep[Theorem 4.7]{edelman2022inductive}. 
	\end{remark}
	\subsection{Convergence Rate of Excess Risk}
	\label{app-subsec-convergence-rate}
	%	\medskip
	\noindent \textbf{Theorem \ref{theorem:covergence-rate-of-excess-risk}} (Convergence Rate of Excess Risk; Restatement)\textbf{.} \textit{
		Assume the regression function $f^{*}\in\mathcal{H}^{\alpha}_{d_{0},1}([0,1]^{d_{0}},K)$ for some $\alpha\in(0,1]$ and $K>0$. Let $f_{s(\tau)}$ be the estimator over the i.i.d samples $\{(x_{i},y_{i})\}_{i=1}^{N}$, the excess risk bound holds,
		\begin{equation}
			\nonumber
			\mathbb{E_{D}}\left[\mathcal{E}(f_{s(\tau)})\right]-2\tau\lesssim N^{-\frac{\alpha}{2d_{0}+\alpha}}\log(N)+N^{-\frac{\alpha}{2d_{0}+\alpha}},
		\end{equation}
		where $\tau$ is the optimization error.
	}
	\begin{proof}[\textbf{Proof of Theorem \ref{theorem:covergence-rate-of-excess-risk}}]
		The approximation upper bound of Theorem \ref{theorem:approximation-upper-bounds} equivalently reveals that
		\begin{equation}
			\label{eq:appendix-cr-excess-risk-mid}
			\mathcal{E}_{\rm app}^2=\underset{f\in\mathcal{H}_{d_{x},d_{y}}^{\alpha}}{\sup}\underset{g\in\mathcal{T}_{R}^{D}}{\inf}\norm{f-g}_{2}^2\lesssim D^{-\frac{2\alpha}{d_{0}}}.
		\end{equation} 
		Further with \eqref{eq:decomposition-of-expectation-of-excess-risk}, \eqref{eq:Upper-Bound-of-Statistical-Error} and \eqref{eq:appendix-cr-excess-risk-mid}, let $\mu = D^{-\frac{2\alpha}{d_{0}}}$ and $D=N^{\frac{d_{0}}{4d_{0}+2\alpha}}$, we can derive the excess risk rate
		\begin{equation}	
			\label{eq:final-bound-of-regression-task}
			\begin{aligned}
				\mathbb{E_{D}}\left[\mathcal{E}(f_{s(\tau)})\right]-2\tau
				&\le \mathcal{E}_{\mathrm{sta}}+2\mathcal{E}_{\mathrm{app}}^2\\
				&\lesssim {(16K+8)N^{-1}(D^{4}\log(eKND^{-4}/\mu)+1)}+3\mu+2 D^{-\frac{2\alpha}{d_{0}}}\\
				&\lesssim N^{-\frac{\alpha}{2d_{0}+\alpha}}\log(N)+N^{-\frac{\alpha}{2d_{0}+\alpha}}.
			\end{aligned}
		\end{equation}
		The result is proved.
	\end{proof} 
\end{document}